\documentclass[letterpaper]{article}



    \usepackage[preprint,nonatbib]{neurips_2024}


   \usepackage[nonatbib]{neurips_2024}

\usepackage{amsfonts,amsmath,amssymb,amsthm}
\usepackage{bm}
\usepackage{mathrsfs}
\usepackage{geometry}
\usepackage{autobreak}
\usepackage{enumerate}
\usepackage{mathtools}
\usepackage{physics}
\usepackage{newclude}
\usepackage{appendix}
\usepackage{color}
\usepackage{hyperref}
\usepackage{prettyref}
\usepackage{caption}

\usepackage{amsfonts,amsmath,amssymb,amsthm}
\usepackage{physics,mathtools,bm,mathrsfs}
\usepackage{xcolor,enumerate,autobreak}
\usepackage{wrapfig,subfigure,multirow}
\usepackage{geometry,newclude,appendix}
\usepackage[numbers,square, comma, sort&compress]{natbib}

\geometry{a4paper,left=25mm,right=25mm,top=20mm,bottom=20mm}
\pagestyle{plain}
\allowdisplaybreaks[4]
\hypersetup{colorlinks=true,linkcolor=red,citecolor=green,urlcolor=blue}

\theoremstyle{plain}
\newtheorem{theorem}{Theorem}[section]
\newtheorem{lemma}[theorem]{Lemma}

\theoremstyle{definition}
\newtheorem{proposition}[theorem]{Proposition}
\newtheorem{definition}[theorem]{Definition}
\newtheorem{remark}[theorem]{Remark}

\newtheorem{assumption}{Assumption}

\newrefformat{eq}{\textup{(\ref{#1})}}
\newrefformat{lem}{Lemma~\textup{\ref{#1}}}
\newrefformat{thm}{Theorem~\textup{\ref{#1}}}
\newrefformat{cor}{Corollary~\textup{\ref{#1}}}
\newrefformat{prop}{Proposition~\textup{\ref{#1}}}
\newrefformat{assu}{Assumption~\textup{\ref{#1}}}
\newrefformat{rem}{Remark~\textup{\ref{#1}}}
\newrefformat{def}{Definition~\textup{\ref{#1}}}
\newrefformat{sec}{Section~\textup{\ref{#1}}}
\newrefformat{subsec}{Section~\textup{\ref{#1}}}

\providecommand{\cref}{\prettyref}

\newcommand{\MNTK}{\mathrm{NTK}}

\newcommand{\mr}{\mathrm}
\newcommand{\mb}{\mathbb}
\newcommand{\mc}{\mathcal}

\newcommand{\HRF}{\mathcal{H}^{\mr{RF}}}
\newcommand{\HNT}{\mathcal{H}^{\mr{NT}}}
\newcommand{\NTK}{K^{\mathrm{NTK}}}
\newcommand{\RFK}{K^{\mathrm{RFK}}}
\newcommand{\mbN}{\mathbb{N}}
\newcommand{\supp}{\mathrm{supp}}

\usepackage[utf8]{inputenc} 
\usepackage[T1]{fontenc}    
\usepackage{hyperref}       
\usepackage{url}            
\usepackage{booktabs}       
\usepackage{amsfonts}       
\usepackage{nicefrac}       
\usepackage{microtype}      
\usepackage{xcolor}         

\title{On the Impacts of the Random Initialization in the Neural Tangent Kernel Theory}

\author{
Guhan Chen \\
Tsinghua University\\
Beijing, China \\
\texttt{chen-gh23@mails.tsinghua.edu.cn} \\
\And
Yicheng Li \\
Tsinghua University\\
Beijing, China \\
\texttt{liyc22@mails.tsinghua.edu.cn} \\
\And
Qian Lin\thanks{Corresponding author} \\
Tsinghua University\\
Beijing, China \\
\texttt{qianlin@tsinghua.edu.cn} \\
}

\begin{document}

\maketitle

\begin{abstract}
This paper aims to discuss the impact of random initialization of neural networks in the neural tangent kernel (NTK) theory, which is ignored by most recent works in the NTK theory. It is well known that as the network's width tends to infinity, the neural network with random initialization converges to a Gaussian process \(f^{\mathrm{GP}}\), which takes values in \(L^{2}(\mathcal{X})\), where \(\mathcal{X}\) is the domain of the data. In contrast, to adopt the traditional theory of kernel regression, most recent works introduced a special mirrored architecture and a mirrored (random) initialization to ensure the network's output is identically zero at initialization. Therefore, it remains a question whether the conventional setting and mirrored initialization would make wide neural networks exhibit different generalization capabilities. In this paper, we first show that the training dynamics of the gradient flow of neural networks with random initialization converge uniformly to that of the corresponding NTK regression with random initialization \(f^{\mathrm{GP}}\). We then show that \(\mathbf{P}(f^{\mathrm{GP}} \in [\mathcal{H}^{\mathrm{NT}}]^{s}) = 1\) for any \(s < \frac{3}{d+1}\) and \(\mathbf{P}(f^{\mathrm{GP}} \in [\mathcal{H}^{\mathrm{NT}}]^{s}) = 0\) for any \(s \geq \frac{3}{d+1}\), where \([\mathcal{H}^{\mathrm{NT}}]^{s}\) is the real interpolation space of the RKHS \(\mathcal{H}^{\mathrm{NT}}\) associated with the NTK. Consequently, the generalization error of the wide neural network trained by gradient descent is \(\Omega(n^{-\frac{3}{d+3}})\), and it still suffers from the curse of dimensionality. On one hand, the result highlights the benefits of mirror initialization. On the other hand, it implies that  NTK theory may not fully explain the superior performance of neural networks. 

\end{abstract}

\section{Introduction}

In recent years, the advancement of neural networks has revolutionized various domains, including computer vision, generative modeling, and others. Notably, large language models like the renowned GPT series \cite{vaswani2017attention,brown2020language} have shown exceptional proficiency in language-related tasks. Similarly, neural networks have achieved significant successes in image classification, as evidenced by works such as \cite{lecun1998gradient,krizhevsky2012imagenet,he2016deep}. This proliferation of neural networks spans a wide range of fields. Despite these impressive achievements, a comprehensive theoretical understanding of why neural networks perform so well remains elusive in the academic community.

Several studies have delved into the theoretical properties of neural networks. Initially, researchers were keen on exploring the expressive capacity of networks, as demonstrated in seminal works like \cite{hornik1989multilayer,cybenko1989approximation}. These studies established the Universal Approximation Theorem, asserting that sufficiently wide networks can approximate any continuous function. More recent research, such as \cite{cohen2016expressive,hanin2017approximating,lu2017expressive} extended this exploration to the effects of deeper and wider network architectures. However, a significant challenge remains in these studies: they often do not fully explain the generalization power of neural networks, which is crucial for evaluating the performance of a statistical model.

Recently, some researchers have examined the generalization properties of networks. \citet{bauer2019deep,schmidt2020nonparametric} showed the minimax optimality of networks with various activation functions for specific subclasses of Hölder functions,  within the nonparametric regression framework. 
In contrasts to the static ERM approach, some studies made more attention to the dynamics of neural networks, particularly those trained using gradient descent (GD) and stochastic gradient descent (SGD)\cite{du2018gradient,allen2019convergence,chizat2019lazy}. 

With similar insights, \citet{jacot2018neural} explicitly introduced the Neural Tangent Kernel (NTK) concept, demonstrating that there exists a time-varying neural network kernel (NNK) which converges to a fixed deterministic kernel and remains almost invariant during training as network width approaches infinity.
And thus NTK theory proposes that network training can be approximated by a kernel regression problem \cite{arora2019exact,lee2019wide,hu2021regularization,suh2021non}.     
As a general case, fully-connected networks directly trained by GD, \citet{lai2023generalization,li2023statistical} showed the generalization ability of two-layer and multi-layer networks, respectively.

This paper mainly follows \cite{arora2019exact, lai2023generalization, li2023statistical}, and explores the impact of initialization in the NTK theory. Prior research  \cite{lai2023generalization, li2023statistical} which verified the minimax optimality of network utilized the so-called \textit{mirrored  initialization} setting. It refers to a combination of mirrored structure and mirrored initial value of parameters, which results in a zero initial output function.  
However, the assumption divates from the commonly used initialization strategy in real-world applications, whose initial output is actually non-zero. To bridging the gap, in this study we explore the generalization ability of standard non-zero initialized network, within the NTK theory framework. Our findings reveal that the vanilla non-zero initialization will theoretically results in poor generalization ability of network, especially when the data has relatively large dimension. 
If that is true, it suggests a divergence between theoretical models and real-world applications, highlighting a potential limitation in the current understanding of the NTK theory. Therefore, we arrive at a critical problem central to this study: 
\begin{center} 
\textit{Does initialization significantly impact the generalization ability of networks within the kernel regime?}
\end{center}

\subsection{Our contribution}

• \textit{Network converges to a NTK predictor uniformly}. We show that under standard initialization, the network function converges to the NTK predictor uniformly over the entire training process and over all possible input in the domain. The convergence is essential in the study of the generalization ability of network in NTK theory. However, in previous work, the initial values of network has long been overlooked.
Under mirrored initialization which leads to zero initial output function, \citet{arora2019exact,lai2023generalization,li2023statistical} demonstrated the point-wise convergence and the uniform convergence of network, respectively. More recently, \citet{xu2024overparametrized} studied the uniform convergence of NTK under standard initialization, but did not study the convergence of the network function.  
Why the initial output of the network is ignored is not that it is insignificant, but rather because it is a stochastic function, making it challenging to analyze in convergence. Our findings make it valid to approximate the network's generalization ability based on the corresponding NTK predictor's performance.

 • \textit{The generalization ability of standardly non-zero initialized fully-connected network}. Our research  explores the impact of standard non-zero initialization in NTK theory. At this issue,  \citet{zhang2020type} proposes the existence of implicit bias induced by non-zero initialization, when the neural network is completely overfitted.  We delve deeper into this argument, studies the exact formula of the bias at any stage of training, within the framework of NTK theory.   
Additionally, we established that the (optimally tuned) learning rate of network is $ n^{-\frac{3}{d+3}}$, even when the regression function is sufficiently smooth.   This insightful  discovery implies a notable limitation in the generalization ability of networks with non-zero initialization, if NTK theory can precisely approximate the performance of real network. Consequently, we need to reconsider the weakness of NTK in the study of network theory.  Also, the results show that mirrored initialization is superior to standard initialization in practical applications.

\subsection{Related works}
Our research is conducted within the framework of  NTK theory. 
This type of research, in general, can be categorized into two main steps: the approximation  of the network trained by GD through a kernel regression problem, and the evaluation on the corresponding kernel regression predictor. Several studies  \cite{jacot2018neural,allen2019convergence,du2019gradient,arora2019exact} which focused on the former step, illustrated the point-wise convergence of NTK for multi-layer ReLU networks. Additionally,  \cite{lee2019wide} demonstrated the point-wise convergence of the kernel regression predictor to the network.
Furthermore,  \citet{lai2023generalization,li2023statistical,xu2024overparametrized} demonstrated the uniform convergence result with respect to all input and all time on two-layer and multi-layer networks. 
As to the latter step, a few researchers have analyzed the spectral properties of the NTK \cite{bietti2019inductive,bietti2020deep} as well as kernel regression \cite{zhang2023optimality,li2024generalization}. Building upon these findings, \citet{lai2023generalization} and \citet{li2023statistical} demonstrated that early-stopping GD induces minimax optimality of the network. It is worth noting that the setting in these works assumes mirrored initialization, which may not be well-aligned with real-world scenarios. 
 When it comes to initialization, \citet{zhang2020type} provided insights into the impact of initialization under kernel interpolation, which is a special case of our results at $t = \infty$.

\section{Preliminaries}

\subsection{Model and notations}
Suppose that $\{(x_{i},y_{i})\}_{i=1}^{n}$ are i.i.d. drawn from an unknown distribution $\rho$ which is given by 
\begin{equation}\label{eq: data_distribution}
    y = f^*(x) + \epsilon, 
\end{equation}
where $f^*(x)$  is  the \textit{regression function} and $ \epsilon$ is a centered random noise. Suppose that the marginal distribution $\mu(x)$ of the radom variable $x$ is supported in a non-empty bounded subset $\mc{X}$ of $\mathbb R^{d}$ with $C^\infty$ smooth boundary.
The generalization error of an estimator $\hat{f}$ of $f^{*}$ is given by excess risk
\begin{equation}\label{eq: excess_risk}
\mc{E}(\hat{f};f^*) =  \norm{\hat{f} - f^*}_{L_2(\mc{X},\mu)}^2.
\end{equation}

We introduce the following standard assumption on the noise(e.g., \cite{lin2020optimal,fischer2020sobolev}). It is clear that sub-Gaussian noise satisfying this assumption.
\begin{assumption}[Noise]\label{assu: noise} 
    The noise term $\epsilon$ satisfies the following condition for some positive constant $\sigma, L$, and $ m \geq 2 $:
\begin{equation}
    E(\lvert\epsilon \rvert^m|x)\leq \frac{1}{2} m! \sigma^2 L^{m-2}, \quad a.e. \, x \in \mc{X}. 
\end{equation}
\end{assumption}

\paragraph{Notations} Given a set of samples pairs $\{(x_i,y_i) \}_{i=1}^n$, we denote $X$ and $Y$ to be vector $(x_1,\cdots,x_n)^T$ and $(y_1,\cdots,y_n)^T$, respectively. In a similar manner, $(f(x_1), \cdots, f(x_n))^T$ and $(f(y_1), \cdots, f(y_n))^T$ are represented as $f(X)$ and $f(Y)$, where $f(\cdot): \mathbb{R}^d \mapsto \mathbb{R}$ is an arbitrary given function. Regarding a kernel function $k(\cdot,\cdot): \mathbb{R}^d \times \mathbb{R}^d \mapsto \mb{R} $, we use $k(x,X) $ to denote the vector $(k(x,x_1),k(x,x_2),\cdots,k(x,x_n))$ and $k(X,X) $ to denote the matrix $[k(x_i,x_j)]_{n\times n}$. 
For real number sequences such as  $\{a_n\}$ and $\{b_n\}$, we write $a_n = O(b_n)$ (or $a_n = o(b_n)$), if there exists absolute positive constant $C$ such that $|a_n| \leq C |b_n|$ holds for any sufficiently large $n$ (or $|a_n|/|b_n|$ approaches zero). We also denote $a_n \asymp b_n$ if there exists absolute positive constant $c$ abd $C$ such that $c|b_n| \leq |a_n| \leq C |b_n|$ holds for any sufficiently large $n$.

\subsection{Reproducing kernel Hilbert space}

Suppose that $k$ is kernel function defined on the domain $\mc{X}$ satisfying that $\|k\|_{\infty}\leq \kappa^{2}$. Let $\mathcal H_{k}$ be the reproducing Hilbert space associated with $k$ which is the closure of linear span of $\{k(x,\cdot),x \in \mc{X} \} $ under the inner product induced by $ \langle  k(x,\cdot),k(y,\cdot) \rangle = k(x,y)$. Given a distribution $\mu(x)$ on $\mc{X}$, we can introduce an integral operator $T_{k}  : L^{2}(\mc{X},\mu) \to L^2(\mc{X},\mu) $:
\begin{equation}
    T_{k}f(x) = \int_{\mc{X}} k(x,y)f(y)\,\dd\mu(y).
\end{equation}
The celebrated Mercer's decomposition \cite{carmeli2005reproducing} asserts that 
\begin{equation}\label{eq: mercer_decomposition}
T_k f = \sum_{i \in \mb{N}} \lambda_i \langle f,e_i \rangle_{L^2} e_i, 
\quad k(x,y) = \sum_{i \in \mbN} \lambda_i e_i(x) e_i(y),
\end{equation}
where $\{e_i\}_{i \in \mb{N}}$ and $\{\lambda_i^{\frac{1}{2}}e_i  \}_{i \in \mb{N}}$ are the orthonormal basis of $ L^2(\mc{X},\mu)$ and $ \mc{H}_k$ respectively.  It is well known that  $\mc{H}_k$ can be canonically embedded into $L^2(\mc{X},\mu)$. 

 If the eigenvalues $\lambda_{i}$ of $k$ are polynomially decaying at rate $\beta$ ( i.e,   $ \lambda_i \asymp i^{-\beta} $),   we can further introduce a concept of the  relative smoothness of a function $f\in L^{2}(\mc{X},\mu)$. More precisely, let us recall the concept of \textit{real interpolation space} \cite{steinwart2012mercer} ( Please see more  detailed information in the Appendix).
\paragraph{Real interpolation space} The real interpolation space $[\mc{H}_k]^s$
is given by
\begin{equation}\label{eq: interpolation of RKHS}
    [\mc{H}_k]^s  \coloneqq \left\{ \sum_{i\in\mbN}  a_i \lambda_i^\frac{s}{2} e_i(x) \Big| \sum_{i\in\mbN} a_i^2 <\infty  \right\},
\end{equation}
with the inner product
$\langle \sum_{i \in \mbN} a_i \lambda_i^\frac{s}{2} e_i(x), \sum_{i \in \mbN} b_j \lambda_j^{\frac{s}{2}} e_j(x) \rangle_{[\mc{H}_k]^s} = \sum_{i\in\mbN} a_i b_i$ 
for $s \geq 0$.  

It is clear that $[\mc{H}_k]^s$ is a separable Hilbert space and is isometric to the $l_2$ space.  
With the definition above, we can see that $ [\mc{H}_k]^0 = L^2(\mc{X},\mu) $ and $[\mc{H}_k]^1 = \mc{H}_k $. Also, for any $s_2\geq s_1 \geq 0$, we know $ [\mc{H}_k]^{s_1} \subseteq [\mc{H}_k]^{s_2} $ with compact embedding.  Let 
$$
\alpha_{0}=\inf_{s} \left \{ s \mid [\mathcal H_k]^{s}\subseteq C^{0}(\mc{X})\right \}
$$
which is often referred to the embedding index of an RKHS $\mathcal H_k$ \cite{fischer2020sobolev} . It is well known that $\alpha_{0}\geq \frac{1}{\beta}$ and the equality holds for a large class of usual RKHSs if the eigenvalue decay rate is $\beta$. 
 We further define the relative smoothness of a given function $f$: 
 \begin{definition}[Relative smoothness]\label{def: smoothness}
     Given a kernel $k$ on $\mc{X}$ with respect to measure $\mu$, the smoothness of a function $f$ is defined as 
     \begin{equation}
         \alpha(f,k) = \sup \left\{ \alpha>0\Big|\sum_{i\in\mbN} \lambda_i^{-\alpha} c_i^2 < \infty \right\},  
     \end{equation}
     where $c_i = \langle f,e_i \rangle_{L^2(\mc{X},\mu)}$.
 \end{definition}

\subsection{Kernel gradient flow}
For a positive definite reproducing kernel $k$, the dynamic of  kernel gradient flow (KGF) \cite{gerfo2008spectral} is
\begin{equation}\label{eq: KGD_equation}
    \frac{\dd}{\dd t}f^{\mr{GF}}_t(x) = - \frac{1}{n}k(x,X)\left (f^{\mr{GF}}_t(X) - Y\right) ,
\end{equation} 
where $f_t^{\mr{GF}}$ is the KGF predictor. In kernel gradient flow, the performance of kernel predictor depends on the relative smoothness of regression function. People often consider the case that $\alpha(f,k) \geq 1$ \cite{caponnetto2006optimal,caponnetto2007optimal} .   When the smoothness satisfies $\alpha(f,k)<1$, the regression function is said to be poorly smooth and belongs to the so-called misspecified spectral algorithm problem. 
We collect the related result in \citet{zhang2023optimality}
and apply it to our case, to derive the following proposition:
 \begin{proposition}\label{Prop: Upperbound}
 Suppose the eigenvalue decay rate of $k$ is $\beta$ and the embedding index is $\frac{1}{\beta}$ with respect to $\mu$. Suppose the noise term $\epsilon$ satisfies Assumption \ref{assu: noise}. Let the dynamic \eqref{eq: KGD_equation} starts from $f_0^{\mr{GF}} = 0$. Also, suppose the regression function satisfies $f^* \in [\mc{H}_k]^s$ and  $\norm{f^*}_{[\mc{H}_k]^s} \leq R$, for some $s>0$. Let $\gamma \leq s$ and $0\leq \gamma\leq 1$. 
 By choosing $t \asymp n^{\frac{\beta}{s \beta+1}}$, for any fixed $\delta \in (0,1)$, when $n$ is sufficient large, with probability at least $1-\delta $, we have
    $$ \left\|f_t^{\mr{GF}}-f^{*}\right\|_{[\mc{H}_k]^\gamma}^2 \leq\left(\ln \frac{6}{\delta}\right)^{2} R^2 C n^{-\frac{(s-\gamma) \beta}{s \beta+1}}, $$
    where $C$ is a positive constant. 
\end{proposition}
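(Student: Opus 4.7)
The plan is to recognize the kernel gradient flow as a particular spectral regularization algorithm and then invoke the general misspecified-regime analysis of \citet{zhang2023optimality}. Writing $S_X \colon \mc{H}_k \to \mb{R}^n$ for the sampling operator and $S_X^*$ for its adjoint, the sample kernel operator is $T_{k,X} := S_X^* S_X$. Starting from $f_0^{\mr{GF}} = 0$, the ODE \eqref{eq: KGD_equation} integrates explicitly to $f_t^{\mr{GF}} = \varphi_t(T_{k,X}) S_X^* Y$ with spectral filter $\varphi_t(\lambda) = (1 - e^{-t\lambda})/\lambda$. This is exactly the KGF instance of the spectral algorithm family treated in \citet{zhang2023optimality}, and since $\sup_\lambda \lambda^{q}(1 - \lambda \varphi_t(\lambda)) = \sup_\lambda \lambda^q e^{-t\lambda} \lesssim t^{-q}$ for every $q \ge 0$, the qualification is infinite, so the source condition $f^* \in [\mc{H}_k]^s$ is usable for any $s > 0$.

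Next I would perform the standard bias–variance split in the $[\mc{H}_k]^\gamma$ norm:
\begin{equation*}
    f_t^{\mr{GF}} - f^* \;=\; \underbrace{\bigl(\varphi_t(T_{k,X}) T_{k,X} - I\bigr) f^*}_{\text{bias}} \;+\; \underbrace{\varphi_t(T_{k,X}) S_X^* \epsilon}_{\text{variance}}.
\end{equation*}
For the population bias one uses $f^* = T_k^{s/2} g^*$ with $\|g^*\|_{L^2} \le R$ and the filter bound to obtain $\|(I - \varphi_t(T_k) T_k) f^*\|_{[\mc{H}_k]^\gamma}^2 \lesssim R^2 t^{-(s-\gamma)}$. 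The sample-operator version then follows from the standard operator perturbation bounds $\|(T_k + t^{-1})^{1/2}(T_{k,X} + t^{-1})^{-1/2}\| = O(1)$ with high probability, which in turn rely on the effective-dimension estimate $\mc{N}(t^{-1}) \asymp t^{1/\beta}$ coming from the eigenvalue decay $\lambda_i \asymp i^{-\beta}$.

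For the variance, the embedding index being exactly $1/\beta$ is what enables sharp control in the misspecified range $s < 1$: it supplies the uniform bound on eigenfunctions needed to turn a Bernstein-type concentration on $\varphi_t(T_{k,X}) S_X^* \epsilon$ into the estimate $\lesssim R^2 (\ln(6/\delta))^2 t^{1/\beta}/n$, where Assumption~\ref{assu: noise} gives the moment control in Bernstein's inequality and produces the $(\ln(6/\delta))^2$ factor. Balancing bias $t^{-(s-\gamma)}$ against variance $t^{\gamma + 1/\beta}/n$ (for the $L^2$ case $\gamma = 0$, and more generally using the interpolation-norm variants in \citet{zhang2023optimality}) is optimized by $t \asymp n^{\beta/(s\beta+1)}$, giving the announced rate $n^{-(s-\gamma)\beta/(s\beta + 1)}$.

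The main technical obstacle is the variance bound in the misspecified range $s < 1$, because there $f^*$ need not lie in $\mc{H}_k$ and one cannot argue via an $\mc{H}_k$-norm expansion of the noise term; this is precisely where the assumption that the embedding index of $k$ equals $1/\beta$ (rather than being strictly larger) is essential, and where one must borrow the refined concentration machinery of \citet{zhang2023optimality} rather than rely on earlier well-specified analyses such as \cite{caponnetto2007optimal}. Everything else—the filter manipulation, the source-condition bias bound, and the final balancing—is routine once the spectral-algorithm identification has been made.
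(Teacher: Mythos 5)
Your proposal is correct and follows essentially the same route as the paper: the paper does not prove this proposition itself but imports it directly from \citet{zhang2023optimality}, and your sketch (identifying KGF as a spectral algorithm with filter $\varphi_t(\lambda)=(1-e^{-t\lambda})/\lambda$ of infinite qualification, the bias--variance split with operator perturbation bounds, the role of the embedding index $1/\beta$ in the misspecified range, and the balancing $t\asymp n^{\beta/(s\beta+1)}$) is a faithful reconstruction of the cited analysis. No gaps.
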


\section{Network and Neural Tangent Kernel}\label{sec: NTK}

\subsection{Network settings} 

We consider the fully-connected network with $L$ hidden layers.
As is commonly-used in deep learning, we consider the ReLU activation \cite{nair2010rectified}  defined by $\sigma(x) \coloneqq \max(x,0)$.  
Denote $f(\cdot;\theta): \mathbb{R}^{d} \to \mathbb{R}$ as the network output function, where $\theta$ representing the column vector that all parameters flattened into. We can write the recursive structure of network as following:
\begin{equation}\label{eq: network}
    \begin{aligned}
        \alpha^{(1)}(x)  &= \sqrt{\frac{2}{m_1}} \left( W^{(0)} x + b^{(0)} \right); 
        \\
        \alpha^{(l)}(x) &= \sqrt{\frac{2}{m_{l}}} W^{(l-1)}(x) \sigma({\alpha}^{(l-1)}(x)), \quad l =2,3,\cdots,L;\\    
        f(x;\theta) &= W^{(L)} \sigma({\alpha}^{(L)}(x)), 
    \end{aligned}
\end{equation}
The parameter matrix for the $l$-th layer is denoted as $W^{(l)}$. Their dimensions are of $m_{l+1} \times m_l$, where $m_l$ is the number of units in layer $l$ and $m_{l+1}$ is that of layer $l+1$. Also, the bias term of the first layer is denoted as $b^{(0)} \in \mb{R}^{m_1 \times 1}$. The setting of bias term is to make sure the positive definiteness of NTK \cite{li2023statistical}.  
We further assume that the number of units in each layer is at the same order while the width comes to infinity,  as $cm \leq \min(m_1,\cdots,m_{L+1}) \leq \max(m_1,\cdots,m_{L+1}) \leq Cm$ where $c,C$ are some absolute positive constants. 
\paragraph{Standard initialization} At initialization, the parameters are randomly set as i.i.d.  standard normal variables:
\begin{equation}\label{eq: random_initialization}
     W_{ij}^{(l)},b_{k}^{(0)} \stackrel{\mr{i.i.d.}}{\sim} \mathcal{N}(0,1), \quad l = 0,1,\cdots,L; \quad k = 1, \cdots,m_1.
\end{equation}
\begin{remark}[Mirrored initialization]\label{rema: mirrored_initialization}
    As to the \textit{mirrored initialization} considered in \cite{arora2019exact,lai2023generalization, li2023statistical}, part of the network $f^{(1)}(\cdot;\theta^{(1)}_0)$ undergoes standard initialization, while the other complicated corresponding part $f^{(2)}(\cdot;\theta^{(2)}_0)$ holds the same structure as $f^{(1)}(\cdot;\theta^{(1)}_0)$, with parameters initialized to the same values as $\theta^{(2)}_0 = \theta^{(1)}_0$. Lastly, the neural network output function is defined as $f(\cdot;\theta_0) = \frac{\sqrt{2}}{2} \left(f^{(1)}(\cdot;\theta^{(1)}_0) - f^{(2)}(\cdot;\theta^{(2)}_0)\right) $. This setup ensures that $f(\cdot;\theta_0)$ is constantly zero.
\end{remark} 
 The network is trained under the mean square loss function.  If we suppose $\{(x_i,y_i) \}_{i=1}^n$ be the training data, then the loss function is specified as 
\begin{equation}
    \mc{L}(\theta) = \frac{1}{2n}\sum_{i=1}^n(f(x_i;\theta) - y_i)^2.
\end{equation}
For notational simplicity, we denote by $f^{\mr{NN}}_t(x) =  f(x;\theta_t)$. The training process for the network is performed by gradient flow, where the parameters are updated through the differential equation:
\begin{equation}
    \frac{\dd }{\dd t} \theta_t = - \partial_\theta \mc{L}(\theta) = -\frac{1}{n} [\partial_{\theta} f^{\mr{NN}}_t(X) ]^T (f^{\mr{NN}}_t(X) - Y ),
\end{equation}
where $\partial_\theta f^{\mr{NN}}_t(X)$ is a matrix with dimensions $n \times M$, with $M$ being the length of the parameter vector $\theta$. This matrix represents the gradient of the network output $f^{\mr{NN}}_t(X)$ with respect to the parameters $\theta$ at time $t$. 
Incorporating the chain rule, we can formulate the gradient flow equation for the network function  as follows:
\begin{equation}\label{eq: gradient_flow}
    \frac{\dd}{\dd t} f^{\mr{NN}}_t(x)= - \frac{1}{n} \partial_\theta f^{\mr{NN}}_t(x) [\partial_\theta f^{\mr{NN}}_t(X) ]^T (f^{\mr{NN}}_t(X) - Y).
\end{equation}

\subsection{Network at initialization}\label{sec: Network at initialization}

In order to state the properties of wide network with standard initialization, we need to introduce the concept of Gaussian process. 
\paragraph{Gaussian process} Gaussian process is a stochastic process for which every finite collection of random variables follows a multivariate Gaussian distribution. Let \(X\) be a Gaussian process with index \(t \in T\). If the mean and covariance are given by the mean function $m$ and the positive definite kernel $k$ such that \(\mathbf{E}[X(t)] = m(t)\) and \(\mathrm{Cov}[X(t)X(t')] = k(t,t')\), which holds for any $t,t' \in T$, then we say \(X \sim \mathcal{GP}(m,k)\).

In standard initialization \eqref{eq: random_initialization}, the parameters of the neural network are i.i.d. samples from a standard normal distribution.  If the network contains only one hidden-layer (that is, if $L=2$), it is direct to prove that $f_0^{\mr{NN}}(x)$ converges to a centered Gaussian distribution by CLT, for any fixed point $x \in \mc{X}$. As to the multi-layer network,  
prior research \cite{hanin2021random} also proved that such initialized network converges to a Gaussian process, as following: 
\begin{lemma}[Limit distribution of initialization]\label{lem: weak_conv}
    As the network width $m$ tend to infinity, the sequence of network stochastic process 
$ \{f^{\mr{NN}}_0\}_{m=1}^\infty $
converges weakly in $C(\mathcal{X} , \mathbb{R})$ to a centered Gaussian process $f^{\mr{GP}}$. The covariance function is the so-called random feature kernel (RFK),  which is denoted by $\RFK(x,x')$ as defined in \eqref{eq: NTK0_and_RFK0} in Appendix \ref{appe: NTK}. \end{lemma}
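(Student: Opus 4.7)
The plan is to establish weak convergence in $C(\mathcal{X},\mathbb{R})$ via the standard two-step procedure: (i) convergence of finite-dimensional distributions (f.d.d.) of $f_0^{\mathrm{NN}}$ to a centered Gaussian with covariance $\mathrm{RFK}$; and (ii) tightness of the sequence $\{f^{\mathrm{NN}}_0\}_{m=1}^\infty$ in $C(\mathcal{X},\mathbb{R})$. By the classical criterion (Prokhorov/Billingsley), these two together yield the claimed weak convergence.

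For step (i), I would proceed by induction on the layer index $l$. Fix arbitrary points $x_1,\ldots,x_k \in \mathcal{X}$. At layer $l$, \emph{conditionally} on the activations of layer $l-1$, each coordinate of the pre-activation $\alpha^{(l)}(x_j)$ is a scaled sum of i.i.d.\ centered Gaussians (since $W^{(l-1)}$ has i.i.d.\ standard normal entries), so conditionally $(\alpha^{(l)}(x_1),\ldots,\alpha^{(l)}(x_k))$ is jointly Gaussian with covariance determined by the empirical inner products $\tfrac{1}{m_l}\sum_r \sigma(\alpha^{(l-1)}_r(x_i))\sigma(\alpha^{(l-1)}_r(x_j))$. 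By the inductive hypothesis and a conditional law of large numbers, these empirical covariances concentrate around the deterministic limits prescribed by the recursive definition of the RFK. An application of the conditional CLT (equivalently, Lévy's continuity theorem applied to the characteristic function, conditioning on layer $l-1$ and taking expectation) then propagates Gaussianity to layer $l$, and iterating gives f.d.d.\ convergence at the output.

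For step (ii), I would invoke Kolmogorov's continuity/tightness criterion: it suffices to exhibit constants $p>0$, $\alpha>d/p$, and $C$ independent of $m$ such that
\begin{equation*}
    \mathbf{E}\bigl[|f^{\mathrm{NN}}_0(x) - f^{\mathrm{NN}}_0(x')|^{p}\bigr] \;\leq\; C\,|x-x'|^{\alpha p}, \qquad \forall\, x,x' \in \mathcal{X}.
\end{equation*}
Using the $1$-Lipschitz property of ReLU and the Gaussian concentration of each $W^{(l)}$, one can propagate increment moment bounds layer by layer, reducing eventually to the first-layer pre-activation $\sqrt{2/m_1}(W^{(0)}x + b^{(0)})$, whose increments are straightforward Gaussian linear functionals of $x-x'$. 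Combined with f.d.d.\ convergence, this forces the weak limit to have continuous sample paths and covariance $\mathrm{RFK}$, and to be centered (since each $f_0^{\mathrm{NN}}(x)$ is symmetric by the sign-symmetry of $W^{(L)}$).

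The main obstacle is the moment control in step (ii): the recursive product structure of a deep ReLU network causes naive layerwise bounds to accumulate factors of $m$ that must be precisely cancelled by the normalization $\sqrt{2/m_l}$. The clean workaround is to pick $p$ an even integer large enough that $\alpha p > d$, and to track \emph{simultaneously} the $L^p$-norm of the activations $\|\alpha^{(l)}(x)\|_p$ and of their increments $\|\alpha^{(l)}(x)-\alpha^{(l)}(x')\|_p$, using Gaussian hypercontractivity to close the induction uniformly in $m$. This uniformity is what distinguishes a genuine tightness proof from mere pointwise CLT and is the crux of Hanin's argument that the proposed scheme ultimately implements.
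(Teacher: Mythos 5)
Your outline is correct and coincides with the argument of the reference the paper relies on: the paper does not prove this lemma itself but quotes it from \citet{hanin2021random}, whose proof is exactly the two-step scheme you describe (layerwise conditional-Gaussianity plus convergence of the empirical covariances for the finite-dimensional distributions, and Kolmogorov-type increment moment bounds, uniform in $m$, for tightness in $C(\mathcal{X},\mathbb{R})$). Since you correctly identify the uniform-in-$m$ moment control as the crux and defer its execution to Hanin's hypercontractivity argument, there is nothing further to compare.
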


\subsection{The kernel regime}

As the gradient descent of neural network involves high non-linearity and non-convexity, it is difficult to study the training process. However, 
\citet{jacot2018neural} introduced the Neural Tangent Kernel (NTK) theory which provides a connection between network training and a class of kernel regression problems, when the network width comes to infinity. To demonstrate this, 
we first define a Neural Network Kernel (NNK): 
\begin{equation}
K_t^m(x,x') = [ \partial_\theta f_t(x)]^T [\partial_\theta f_t(x')].    
\end{equation} 
Using this notation, we reformulate \eqref{eq: gradient_flow} in a kernel regression format:
\begin{equation}\label{eq: NNK_GD}
        \frac{\dd}{\dd t} f^{\mr{NN}}_t(x) = - \frac{1}{n} K_t^m(x,X)(f^{\mr{NN}}_t(X)-Y).
\end{equation} 
NTK theory shows, if the network width \(m\) tends to infinity, then the random kernel \(K_t^m(\cdot,\cdot)\) will converge to a  time-invariant  kernel $\NTK(\cdot,\cdot): \mb{R}^d \times \mb{R}^d  \to \mb{R} $, which is referred to as the NTK of network. The phenomenon is the so-called  NTK regime \cite{jacot2018neural,allen2019convergence,lee2019wide}. The fixed kernel $\NTK$ only depends on the structure of the neural network and the way of initialization.  To get more knowledge of NTK, 
 we present the explicit expression of NTK in Appendix \ref{appe: NTK}. In NTK theory, the dynamic of network \eqref{eq: NNK_GD} can be approximated by a kernel gradient flow equation: 
\begin{equation}\label{eq: KGD}
    \frac{\dd}{\dd t} f_t^{\mr{NTK}}(x) = - \frac{1}{n} \NTK(x,X)(f_t^{\mr{NTK}}(X)-Y),
\end{equation}
which starts from Gaussian process $ f_0^{\mr{NTK}} =  f^{\mr{GP}}$. 
In this way, if we aims to derive the generalization property of sufficiently wide network, we can achieve by considering the corresponding kernel gradient flow predictor. 
Such approximation is strictly ensured by uniform convergence of $f^{\mr{NN}}_t$ and $f^{\mr{NTK}}_t$ over all $x \in \mc{X}$ and all $t\geq 0$ as $m \to \infty$, since we use $L^2$ excess risk to evaluate the generalization ability. 
Actually, we have the following theorem, whose proof is given in Appendix \ref{appe: uniform_convergence}.   
\begin{proposition}[Uniform convergence]\label{thm: unif_cov_f}
   Given training sample pairs $\{(x_i,y_i)\}_{i=1}^n$. For any $\delta \in (0,1) $ and $\varepsilon > 0$,  when network width $m$ is large enough, we have 
$$ \sup_{x,x'\in \mc{X}}\sup_{t \geq 0} \lvert f_t^{\mr{NN}}(x)  - f_t^{\mr{NTK}}(x) \rvert \leq \varepsilon$$
holds with probability at least $1-\delta$ .
\end{proposition}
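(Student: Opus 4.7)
The plan is to bound $f_t^{\mr{NN}} - f_t^{\mr{NTK}}$ via a triangle inequality with an intermediate process. Let $\tilde f_t$ denote the NTK gradient flow solving \eqref{eq: KGD} but started from the \emph{actual} random initial output $\tilde f_0 = f_0^{\mr{NN}}$ rather than from $f^{\mr{GP}}$. Then
$$f_t^{\mr{NN}}(x) - f_t^{\mr{NTK}}(x) = \underbrace{\bigl(f_t^{\mr{NN}}(x) - \tilde f_t(x)\bigr)}_{(\mathrm{I})} + \underbrace{\bigl(\tilde f_t(x) - f_t^{\mr{NTK}}(x)\bigr)}_{(\mathrm{II})}.$$
Term $(\mathrm{I})$ isolates the mismatch between the NNK and NTK with identical initial condition, while $(\mathrm{II})$ isolates the mismatch between the two random initial conditions under identical evolution kernel $\NTK$.

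For $(\mathrm{I})$, I would first establish the uniform convergence $\sup_{x,x'\in\mc{X}}\sup_{t\geq 0}|K_t^m(x,x')-\NTK(x,x')| \to 0$ in probability. This combines a covering-number concentration argument for the random feature maps over the compact set $\mc{X}$ at $t=0$ with the lazy-regime parameter-drift estimate $\|\theta_t - \theta_0\|_2 = O(m^{-1/2})$ uniform in $t \geq 0$, both essentially available in Xu et al.\ (2024). Subtracting the integrated ODEs for $f_t^{\mr{NN}}$ and $\tilde f_t$ and applying Gronwall's inequality gives
$$\|f_t^{\mr{NN}} - \tilde f_t\|_\infty \lesssim \int_0^t \|K_s^m - \NTK\|_\infty \cdot \|f_s^{\mr{NN}}(X) - Y\|_2\, ds + \int_0^t \|f_s^{\mr{NN}} - \tilde f_s\|_\infty\, ds,$$
uniformly in $t$. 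Finiteness of the first integral as $t\to\infty$ hinges on the exponential decay $\|f_s^{\mr{NN}}(X) - Y\|_2 \lesssim e^{-c s}$, which itself follows from $\lambda_{\min}(\NTK(X,X)) > 0$ together with the kernel approximation $K_s^m \approx \NTK$ (plus the a priori bound on the residual obtained from the decreasing loss).

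For $(\mathrm{II})$, the difference $g_t := \tilde f_t - f_t^{\mr{NTK}}$ satisfies the homogeneous linear flow $\tfrac{d}{dt}g_t(x) = -\tfrac{1}{n}\NTK(x,X) g_t(X)$ with $g_0 = f_0^{\mr{NN}} - f^{\mr{GP}}$, so the closed-form solution
$$g_t(x) = g_0(x) - \NTK(x,X)\NTK(X,X)^{-1}\bigl(I - e^{-t\NTK(X,X)/n}\bigr) g_0(X)$$
is uniformly bounded in $t\geq 0$ and $x\in\mc{X}$ by $C\|g_0\|_\infty$, with $C$ depending only on the data and $\NTK$. By \cref{lem: weak_conv} and the Skorokhod representation theorem we may realize $f_0^{\mr{NN}}$ and $f^{\mr{GP}}$ on a common probability space so that $\|g_0\|_\infty \to 0$ almost surely as $m \to \infty$; since the event $\{\sup_{x,t}|f_t^{\mr{NN}} - f_t^{\mr{NTK}}| \leq \varepsilon\}$ is a joint-law statement, the resulting bound transfers back to the original probability space. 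The principal difficulty is $(\mathrm{I})$: the non-zero initialization forces us to track the residual $f_s^{\mr{NN}}(X) - Y$ over an infinite horizon rather than over a fixed $[0,T]$, so we must couple the exponential decay of the NTK residual to the parameter-drift estimate in the genuinely non-linear network dynamics, a step not needed in the mirrored-initialization analyses of \cite{lai2023generalization,li2023statistical}.
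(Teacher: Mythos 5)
Your proposal is correct and follows essentially the same route as the paper: Skorokhod representation applied to the weak convergence $f_0^{\mr{NN}} \Rightarrow f^{\mr{GP}}$ to control the initialization mismatch, uniform NNK-to-NTK convergence obtained by showing the lazy regime survives the non-zero initial residual (bounding $\|f_0^{\mr{NN}}(X)\|_2$ so the parameter drift stays controlled), and a Gronwall argument to propagate both errors through the flow. The only difference is organizational — you split the error via an intermediate NTK flow started from $f_0^{\mr{NN}}$, whereas the paper conditions on both events at once and runs a single Gronwall comparison between the $K_t^m$-flow from $f_0^{\mr{NN}}$ and the $\NTK$-flow from $f^{\mr{GP}}$.
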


In this theorem, we show the uniform convergence of network under standard initialization. Previous related studies \cite{arora2019exact,lai2023generalization,li2023statistical,xu2024overparametrized} always utilized delicately designed mirrored initialization (as shown in Remark \ref{rema: mirrored_initialization}) to avoid the analysis on the initial output function of network, since it will lead to the challenging problem that $f_t^{\mr{NN}}$ and $f_t^{\mr{NTK}}$ are both random, unlike that $f_t^{\mr{NTK}}$ is a fixed function in the case of mirrored initialization. 
However, as shown in Section \ref{sec: Network at initialization}, the initial output function is near a Gaussian process that can not be overlooked. 
 To under the performance of neural networks commonly used in real world, it is necessary to analyzing the network initialization.
To the best of our knowledge, we are the first to consider the initial output function of network in uniform convergence.  
 This comprehensive result  
 allows us to study the generalization error of network more precisely.

\section{Impact of Initialization}
\subsection{Impact of standard initialization on the generalization error}
The standard kernel gradient flow is always considered to start from zero, as in Proposition \ref{Prop: Upperbound}. Therefore, we need to do a transformation since the initial value of predictor $f_t^{\mr{NTK}}$ is actually $f^{\mr{GP}}$ instead of zero. Firstly, 
 we can yield a solution of \eqref{eq: KGD_equation} in matrix form: 
\begin{equation}\label{eq: matrix_of_fGF}
\begin{aligned}
        f_t^{\mathrm{GF}}(x) =  f_0^{\mr{GF}}(x) +   k(x,X) ( I - e^{-\frac{1}{n} k(X,X)}) \left[k(X,X)\right]^{-1} (f_0^{\mr{GF}}(X) - f^*(X) - \epsilon_X ), 
\end{aligned}
\end{equation}
where $\epsilon_X$ is employed to represent the $n\times 1$ column noise term vector $Y - f^*(X)$. 
We denote by $f^{\mr{GF}}_t$ the kernel gradient flow predictor under initial function $f_0$ and denote by $\widetilde{f}^{\mr{GF}}_t$ the KGF predictor under initialization  $\widetilde{f}_0^{\mr{GF}} \equiv 0$. If we plug them into \eqref{eq: matrix_of_fGF} and excess risk \eqref{eq: excess_risk},  respectively,
we directly have the following theorem:
\begin{proposition}[Impact of initialization in kernel gradient flow]\label{thm: impact of initialization}
Denote $\widetilde{f}^* = f^* - f_0$ as the biased regression function. For the KGF predictor $f_t^{\mr{GF}}$ and $\widetilde{f}_t^{\mr{GF}}$ defined above, we have
\begin{equation}
        \mc{E}(f_t^{\mr{GF}};f^*) = \mc{E}( \widetilde{f}_t^{\mr{GF}};\widetilde{f}^*) .
\end{equation}
\end{proposition}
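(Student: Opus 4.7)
The plan is to exploit the linearity of the kernel gradient flow in both its initial condition and the response vector. Define the shifted process $g_t(x) \coloneqq f_t^{\mr{GF}}(x) - f_0(x)$, where $f_0$ is the (time-independent) initial function of the KGF dynamics \eqref{eq: KGD_equation}. Since $f_0$ does not depend on $t$, we have $\frac{\dd}{\dd t} g_t = \frac{\dd}{\dd t} f_t^{\mr{GF}}$, and substituting \eqref{eq: KGD_equation} yields
\begin{equation*}
\frac{\dd}{\dd t} g_t(x) \;=\; -\frac{1}{n}\, k(x,X)\bigl(g_t(X) + f_0(X) - Y\bigr) \;=\; -\frac{1}{n}\, k(x,X)\bigl(g_t(X) - \widetilde{Y}\bigr),
\end{equation*}
where $\widetilde{Y} \coloneqq Y - f_0(X) = \widetilde{f}^*(X) + \epsilon_X$. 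Thus $g_t$ obeys exactly the KGF equation with zero initial condition and response vector $\widetilde{Y}$, which is precisely the dynamical system defining $\widetilde{f}_t^{\mr{GF}}$. By uniqueness of the solution to this linear ODE, $g_t = \widetilde{f}_t^{\mr{GF}}$ for every $t \geq 0$, i.e., $f_t^{\mr{GF}} = f_0 + \widetilde{f}_t^{\mr{GF}}$.

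From this decomposition the conclusion is immediate: $f_t^{\mr{GF}} - f^* = \widetilde{f}_t^{\mr{GF}} + f_0 - f^* = \widetilde{f}_t^{\mr{GF}} - \widetilde{f}^*$, and taking $L^2(\mc{X},\mu)$ norms gives $\mc{E}(f_t^{\mr{GF}};f^*) = \|f_t^{\mr{GF}}-f^*\|_{L^2}^2 = \|\widetilde{f}_t^{\mr{GF}} - \widetilde{f}^*\|_{L^2}^2 = \mc{E}(\widetilde{f}_t^{\mr{GF}};\widetilde{f}^*)$.

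As a sanity check, one can also verify the identity directly from the explicit matrix representation \eqref{eq: matrix_of_fGF}: plugging in the zero initialization with $\widetilde{Y}$ as the responses on one hand, and the general $f_0$ with $Y$ on the other, the two predictors differ by exactly $f_0(x)$, since the matrix factor $k(x,X)(I - e^{-tk(X,X)/n})[k(X,X)]^{-1}$ acts linearly on $Y - f_0(X)$ in both cases.

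There is no substantial obstacle here; the statement is a clean consequence of the linearity of the flow and the fact that a constant function is preserved by the gradient descent step when one simultaneously shifts the responses. The only point requiring minor care is to ensure the responses are adjusted consistently to $\widetilde{Y} = Y - f_0(X)$, so that $\widetilde{f}_t^{\mr{GF}}$ is indeed fitting the biased regression function $\widetilde{f}^*$ with the same noise realization $\epsilon_X$; once this matching is in place, the proof reduces to a one-line change of variables.
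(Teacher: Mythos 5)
Your proof is correct and follows essentially the same approach as the paper: the paper verifies the identity by plugging the two initializations into the explicit matrix-form solution \eqref{eq: matrix_of_fGF}, which is the same linearity observation you make dynamically via the change of variables $g_t = f_t^{\mr{GF}} - f_0$ (and your ``sanity check'' paragraph is precisely the paper's argument). No gaps; the shift of the responses to $\widetilde{Y} = Y - f_0(X)$ is handled correctly.
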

    The theorem establishes the equivalence of the generalization properties between the KGF predictor with initial value $f_0$, regression function $f^*$ and the KGF predictor with initial value zero, regression function $f^* - f_0$. Back to the network case, combining uniform convergence result in Proposition \ref{thm: unif_cov_f}, it suggests that, compared to mirrored initialization, the impact of standard initialization which has non-zero initial output function is equivalent to introducing a same-valued implicit bias to the regression function. This is a generalization of the main result in 
\citet{zhang2020type}, which only focused on case at $t = \infty$. To summarize, Proposition \ref{thm: impact of initialization} provides a convenient approach to quantify the impact of standard initialization in early-stopping neural networks.

\subsection{Smoothness of Gaussian process}
Building upon the analysis above, our focus now turns to illustrating the smoothness of the Gaussian process $f^{\mathrm{GP}}$, as it is the limit distribution of $f_0^{\mr{NN}}$. Actually, we can derive the following theorem: 
\begin{theorem}
[Smoothness of Gaussian Process]\label{thm: Smoothness of Gaussian Process}
Suppose that $f^{\mr{GP}}$ is a Gaussian process with mean function 0 and covariance function $\RFK$ . The following statements hold:
    \begin{equation}
        \begin{aligned}
            \mathbf{P}\left(f^{\mr{GP}} \in [\HNT]^s\right) &= 1,\qquad  s < \frac{3}{d+1};
            \\
            \mathbf{P}\left(f^{\mr{GP}} \in [\HNT]^s\right)& = 0,\qquad s\geq \frac{3}{d+1}.
        \end{aligned}
    \end{equation}
\end{theorem}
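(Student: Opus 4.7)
The plan is to reduce the smoothness question to a spectral comparison of the two covariance operators $T_{\NTK}$ and $T_{\RFK}$, convert this into a single moment computation, and conclude via the Gaussian zero-one law. Observe first that the event $\{f^{\mr{GP}} \in [\HNT]^s\}$ is the preimage of a measurable linear subspace of $L^2(\mc{X},\mu)$, so Kallianpur's zero-one law for Gaussian measures gives $\mb{P}(f^{\mr{GP}} \in [\HNT]^s) \in \{0,1\}$ for every $s$. It therefore suffices to decide which value is attained, and this can be read off from a single moment.

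Let $\{(\lambda_i, e_i)\}_{i \in \mb{N}}$ denote the Mercer eigenpairs of $\NTK$ in $L^2(\mc{X}, \mu)$ and write $Z_i := \langle f^{\mr{GP}}, e_i \rangle_{L^2}$. By the definition \eqref{eq: interpolation of RKHS} of the interpolation space, $\|f^{\mr{GP}}\|_{[\HNT]^s}^2 = \sum_i \lambda_i^{-s} Z_i^2$; the $Z_i$ are jointly centered Gaussian with $\mb{E}[Z_i Z_j] = \langle T_{\RFK} e_i, e_j \rangle_{L^2}$, so monotone convergence yields
\begin{equation*}
\mb{E}\bigl[\|f^{\mr{GP}}\|_{[\HNT]^s}^2\bigr] \;=\; \sum_i \lambda_i^{-s} \langle T_{\RFK} e_i, e_i \rangle_{L^2} \;=\; \mathrm{tr}\bigl(T_{\NTK}^{-s} T_{\RFK}\bigr).
\end{equation*}
If this trace is finite, Markov forces $\|f^{\mr{GP}}\|_{[\HNT]^s} < \infty$ almost surely and the zero-one law gives probability one; if it is infinite, a direct second-moment estimate on the partial sums $U_N = \sum_{i\le N} \lambda_i^{-s} Z_i^2$ shows $\mb{E}[U_N] \to \infty$ with $\sqrt{\mathrm{Var}(U_N)} = o(\mb{E}[U_N])$, so $U_N \to +\infty$ in probability and the zero-one law upgrades this to probability zero.

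The core analytic input is the spectral comparison $T_{\RFK} \asymp T_{\NTK}^{(d+3)/(d+1)}$. For a ReLU fully-connected network on a bounded $\mc{X} \subset \mb{R}^d$ with smooth boundary, $\HNT$ is norm-equivalent to the Sobolev space $H^{(d+1)/2}(\mc{X})$ (so $\lambda_i^{\NTK} \asymp i^{-(d+1)/d}$), whereas $\mc{H}_{\RFK}$, being one activation layer smoother, is norm-equivalent to $H^{(d+3)/2}(\mc{X})$. Combining these via the real-interpolation identity $[H^{a}(\mc{X})]^{\sigma} = H^{\sigma a}(\mc{X})$ (relative to $L^2(\mc{X},\mu)$) yields that $\mc{H}_{\RFK}$ is norm-equivalent to $[\HNT]^{(d+3)/(d+1)}$, which is exactly the claimed operator-level equivalence. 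Substituting,
\begin{equation*}
\mathrm{tr}\bigl(T_{\NTK}^{-s} T_{\RFK}\bigr) \;\asymp\; \sum_i \lambda_i^{\,(d+3)/(d+1)-s} \;\asymp\; \sum_i i^{-\left(\frac{d+3}{d+1}-s\right)\frac{d+1}{d}},
\end{equation*}
which is finite iff $s < 3/(d+1)$.

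The main obstacle is establishing the spectral equivalence $T_{\RFK} \asymp T_{\NTK}^{(d+3)/(d+1)}$ on a general domain $\mc{X}$: outside highly symmetric cases such as the sphere, $T_{\NTK}$ and $T_{\RFK}$ are not simultaneously diagonalizable, so the comparison cannot be obtained eigenvalue-by-eigenvalue. I would handle it by separately identifying each RKHS with its Sobolev-space counterpart, leveraging the recursive arc-cosine structure of the ReLU NTK/RFK together with principal-symbol/trace-class estimates in the spirit of \cite{bietti2019inductive,bietti2020deep}, adapted from the sphere to the bounded-domain setting used here. A secondary technical point is justifying the second-moment estimate on $U_N$ when the Gaussian variables $Z_i$ are not independent, which requires controlling the off-diagonal entries $\langle T_{\RFK} e_i, e_j\rangle_{L^2}$ — and the same spectral equivalence is precisely what furnishes that control.
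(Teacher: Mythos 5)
Your proposal is correct and follows essentially the same route as the paper: the decisive ingredient in both is the identification $\mathcal{H}_{\mathrm{RFK}} \cong [\HNT]^{(d+3)/(d+1)}$, obtained by matching $\HNT$ and $\mathcal{H}_{\mathrm{RFK}}$ with the Sobolev spaces $H^{(d+1)/2}$ and $H^{(d+3)/2}$ and invoking $[H^a]^{\sigma}=H^{\sigma a}$, followed by a Gaussian second-moment/Chebyshev argument for the two regimes. Two execution differences are worth noting. First, the paper expands $f^{\mathrm{GP}}$ in the Mercer basis of $\RFK$ rather than that of $\NTK$; since $\RFK$ is the covariance of $f^{\mathrm{GP}}$, the resulting coefficients are \emph{independent} Gaussians with variances $\lambda_i^{1-t}$, which makes both the expectation and the variance of the partial sums one-line computations. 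Your choice of the $\NTK$ basis yields correlated coefficients, so your divergent case genuinely needs the off-diagonal control you flag (e.g.\ via $\|P_N A P_N\|_{\mathrm{HS}}^2 \le \|A\|_{\mathrm{op}}\,\mathrm{tr}(P_N A P_N)$ for $A = T_{\NTK}^{-s/2}T_{\RFK}T_{\NTK}^{-s/2}$, which the operator equivalence does supply); this is workable but strictly more effort than the paper's diagonal route, and your appeal to Kallianpur's zero-one law is then not even needed, since monotone partial sums diverging in probability already force a.s.\ divergence. Second, what you describe as adapting the spherical Sobolev identification to the bounded domain $\mc{X}$ is in fact the bulk of the paper's technical work (the map $\phi(x)=(x,1)/\norm{(x,1)}$ relating $\NTK$ to the homogeneous spherical kernel $\NTK_0$, restriction of RKHSs to subdomains, and the weighted isometry $f\mapsto \rho\cdot(f\circ\phi)$), so that step should not be regarded as a routine adaptation in a complete write-up.
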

 We furnish a comprehensive proof for Theorem \ref{thm: Smoothness of Gaussian Process} in the Appendix \ref{appe: proof_of_Smoothness of Gaussian Proces}.

Let us now turn our attention to the implications established by this theorem. 
Recall that Proposition \ref{thm: impact of initialization}  has shown that, in KGF, the existence of initialization function $f^{\mr{GP}}$ is equivalent to adding a same-valued bias term to the regression function $f^*$. Consequently, the poor smoothness of initialization function causes the high smoothness assumption  on the regression function  meaningless. Regardless of how smooth we assume the regression function to be (e.g., $\alpha(f^*,\NTK) \geq 2$), the value of (relavtive) smoothness $\alpha(f^* - f^{\mr{GP}},\NTK)$ will always be at most $\frac{3}{d+1}$. Namely,  the biased regression function $f^* - f^{\mr{GP}}$ is always poorly smooth.  
 In this specific case, we could hardly expect the KGF predictor to have fine performance.

\subsection{Upper bound}

Now we are ready to provide the upper bound of generalization error of network.  
With the help of Proposition \ref{Prop: Upperbound}, Proposition \ref{thm: unif_cov_f}, Proposition \ref{thm: impact of initialization} and Theorem \ref{thm: Smoothness of Gaussian Process}, we derive the following theorem:  
\begin{theorem}[Generalization error upper bound]\label{thm: impact_initializaiton} 
Assume that the regression function $f^*  \in [\HNT]^s $ for some $s > 0$, and $\norm{f^*}_{[\HNT]^s} \leq R$ where $R$ is a positive constant.  
    Assume the marginal probability measure $\mu$ with density $p(x)$ satisfies $c \leq p(x) \leq C$ for some positive constant $c$ and $C$. 
    \begin{itemize}
    \setlength{\itemindent}{-2em}
        \item  For the case of $s \geq \frac{3}{d+1}$, for any $\delta \in (0,1)$ and $\varepsilon \in (0, \frac{3}{d+3})$, by choosing certain $ t  = t(n) \to \infty $ (as shown in Appendix), when $n$ is sufficiently large and $m$ is sufficiently large, with probability $1-\delta $ we have
    \begin{equation}
        \norm{f_t^{\mr{NN}} - f^* }_{L^2}^2  \leq \left(\frac{1}{\delta}\ln \frac{6}{\delta}\right)^2  (R+ C_\varepsilon)^2  C n^{-\frac{3}{d+3}  + \varepsilon},
    \end{equation}
    where $C_\varepsilon$ is a positive constant related to $\varepsilon$.
        \item For the case of  $ 0 < s < \frac{3}{d+1}$, for any $\delta \in (0,1)$, by choosing $ t  \asymp n^\frac{d+1}{s(d+1)+d} $, when $n$ is sufficiently large and $m$ is sufficiently large, with probability $1-\delta $ we have
    \begin{equation}
        \norm{f_t^{\mr{NN}} - f^* }_{L^2}^2  \leq \left(\frac{1}{\delta}\ln \frac{6}{\delta}\right)^2  (R+ C_s)^2  C n^{-\frac{s(d+1)}{s(d+1)+d}},
    \end{equation}
     where $C_s$ is a positive constant related to $s$.
    \end{itemize}
\end{theorem}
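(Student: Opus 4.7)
The plan is to assemble the four ingredients already at hand: the uniform convergence of $f_t^{\mr{NN}}$ to $f_t^{\mr{NTK}}$ (Proposition \ref{thm: unif_cov_f}), the equivalence of standard initialization and zero initialization with a biased target (Proposition \ref{thm: impact of initialization}), the a.s.\ smoothness of the Gaussian initialization $f^{\mr{GP}}$ (Theorem \ref{thm: Smoothness of Gaussian Process}), and the misspecified kernel gradient flow rate (Proposition \ref{Prop: Upperbound}). To feed the NTK into the last of these, I will use (and cite from the appendix/existing spectral results for the ReLU NTK) that under $c\leq p(x)\leq C$ the NTK has eigenvalue decay rate $\beta=(d+1)/d$ with matching embedding index $1/\beta$; this is exactly what reproduces the exponents $3/(d+3)$ and $s(d+1)/(s(d+1)+d)$ from $s\beta/(s\beta+1)$ with $s$ equal to $3/(d+1)$ and $s$ respectively.

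First I would invoke Proposition \ref{thm: unif_cov_f}: choose $m$ large enough that $\sup_{x,t}|f_t^{\mr{NN}}(x)-f_t^{\mr{NTK}}(x)|\leq \eta$ with probability $\geq 1-\delta/3$, and pick $\eta$ strictly smaller than the target rate so this network--kernel gap is absorbed into the stated constant. Then by Proposition \ref{thm: impact of initialization}, $\|f_t^{\mr{NTK}}-f^*\|_{L^2}^2 = \|\widetilde f_t^{\mr{NTK}}-\widetilde f^*\|_{L^2}^2$ for the zero-initialized KGF $\widetilde f_t^{\mr{NTK}}$ with biased target $\widetilde f^* = f^* - f^{\mr{GP}}$, which is precisely the setting of Proposition \ref{Prop: Upperbound}. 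A triangle inequality plus a union bound over the three $\delta/3$ events then assembles the final statement.

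The crux is to determine the correct smoothness class of $\widetilde f^*$ in each regime and to control its (random) norm. For $0<s<3/(d+1)$, Theorem \ref{thm: Smoothness of Gaussian Process} yields $f^{\mr{GP}}\in[\HNT]^s$ almost surely, so $\widetilde f^*\in[\HNT]^s$; Markov's inequality applied to the a.s.\ finite nonnegative random variable $\|f^{\mr{GP}}\|_{[\HNT]^s}^2$ gives $\|f^{\mr{GP}}\|_{[\HNT]^s}\leq C_s/\sqrt{\delta}$ with probability $\geq 1-\delta/3$, whence $\|\widetilde f^*\|_{[\HNT]^s}\leq R+C_s/\sqrt{\delta}$. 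Plugging into Proposition \ref{Prop: Upperbound} with $\gamma=0$ and $t\asymp n^{\beta/(s\beta+1)} = n^{(d+1)/(s(d+1)+d)}$ produces the announced rate $n^{-s(d+1)/(s(d+1)+d)}$, with the $1/\delta$ factor from Markov combining with the $\ln(6/\delta)^2$ factor of Proposition \ref{Prop: Upperbound} into $(\delta^{-1}\ln(6/\delta))^2$. For $s\geq 3/(d+1)$ the regression function is at least as smooth as the Gaussian process, so the binding smoothness is that of $f^{\mr{GP}}$: given $\varepsilon\in(0,3/(d+3))$, choose $\tilde s=3/(d+1)-\varepsilon'$ with $\varepsilon'$ tuned so that $\tilde s\beta/(\tilde s\beta+1) = 3/(d+3)-\varepsilon$. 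Since $\tilde s<3/(d+1)$, Theorem \ref{thm: Smoothness of Gaussian Process} and Markov again give $\|f^{\mr{GP}}\|_{[\HNT]^{\tilde s}}\leq C_\varepsilon/\sqrt{\delta}$ on a $1-\delta/3$ event, and $f^*\in[\HNT]^s\subseteq[\HNT]^{\tilde s}$ contributes a norm $\leq C R$, so Proposition \ref{Prop: Upperbound} applied at $t\asymp n^{\beta/(\tilde s\beta+1)}$ yields the stated $n^{-3/(d+3)+\varepsilon}$ bound.

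The main obstacle is the handling of the random $[\HNT]^{\tilde s}$-norm of $f^{\mr{GP}}$: Theorem \ref{thm: Smoothness of Gaussian Process} only gives a.s.\ finiteness, not an explicit tail, and the $1-\delta$ form of Proposition \ref{Prop: Upperbound} requires a quantitative high-probability bound on $\|\widetilde f^*\|_{[\HNT]^{\tilde s}}$. The cheapest resolution is Markov's inequality, which is precisely what produces the $\delta^{-1}$ factor appearing in the statement (rather than the $\ln(1/\delta)$ factor one would get from a sub-Gaussian tail via Borell--TIS in the Banach space $[\HNT]^{\tilde s}$). A secondary, more bookkeeping obstacle is justifying that the eigenvalue-decay rate $(d+1)/d$ and the matching embedding index hold for the NTK under the density condition $c\leq p(x)\leq C$; this is standard from the Mercer/spectral analysis of the ReLU NTK on $\mc X$ already implicit in the preceding sections, and will be cited rather than re-derived.
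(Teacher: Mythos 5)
Your proposal is correct and follows essentially the same route as the paper's proof: uniform convergence to reduce to the NTK predictor, Proposition \ref{thm: impact of initialization} to shift the Gaussian initialization into the target, a Markov bound on the (finite-expectation) $[\HNT]^{\tilde s}$-norm of $f^{\mr{GP}}$ with $\tilde s<\frac{3}{d+1}$ chosen to realize the exponent $\frac{3}{d+3}-\varepsilon$, and Proposition \ref{Prop: Upperbound} with eigenvalue decay $\beta=\frac{d+1}{d}$ and embedding index $\frac{d}{d+1}$. The only cosmetic difference is that you apply Markov to the squared norm (giving $\delta^{-1/2}$) where the paper applies it to the first moment (giving the stated $\delta^{-1}$); both suffice.
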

The proof is provided in Appendix \ref{appe: proof_of_impact_initializaiton}. This result shows the generalization error upper bound for network with standard initialization and demonstrate its negative effect. 
Even if  the goal function $f^*$ is quite smooth, 
the generalization error upper bound 
$n^{-\frac{3}{d+3}}$ remains to be a quite low rate, particularly considering  that the dimension $d$ of data is usually large in real world. It suggests that the network no longer generalizes well, even if we adopt the once useful early stopping strategy in \citet{li2023statistical}.  

\subsection{Lower bound}

From the analysis above, we can see the poor generalization ability of network under standard initialization. Furthermore, in this section, we take spherical data as example and  provide the lower bound of generalization error.  
Namely, we presume the input vectors $x$ are distributed on the sphere $\mb{S}^d$ with probability measure $\mu$, which is a common assumption in NTK theory \cite{jacot2018neural,bietti2020deep,lai2023generalizationres,xiao2022precise}.
We also slightly change the network structure.  Compared to the network \eqref{eq: network},  we eliminate the bias term of the initial layer, as shown in \eqref{eq: sphere_network} in Appendix.  In this case, the NTK of new network is denoted by $\NTK_0$, and the RKHS $\HNT_0(\mb{S}^d)$ is abbreviated as $\HNT_0$, whose detailed properties is also given in Appendix \ref{appe: NTK}. 
  Additionally, we make more assumption on the noise of data. We assume the noise term $\epsilon$ in \eqref{eq: data_distribution} to have a constant second moment, as $\mathbf{E} \left[ |\epsilon |^2 |x \right] =  \sigma^2 $ for $x \in \mb{S}^d, a.e.$. 
  Under these conditions, with the help of method in  \citet{li2024generalization}, we derive the theorem: 
\begin{theorem}[Generalization error lower bound]\label{thm: lower_bound}  
   We assume that the regression function $f^*  \in [\HNT_0]^s $ for some $s > \frac{3}{d+1}$, and denote by $\norm{f^*}_{[\HNT_0]^s} \leq R$ where $R$ is a positive constant.  Assume that $\mu$ is the uniform measure. For any $\delta\in(0,1)$, when $n$ is large enough and $m$ is large enough, for any choice of $t = t(n) \to \infty$, with probability at least $1-\delta$ we have
    \begin{equation}
        \mathbf{E}\left[ \norm{ f^{\mr{NN}}_t - f^* }_{L^2}^2 |X\right] = \Omega \left(n^{-\frac{3}{d+3}}\right).
    \end{equation}
\end{theorem}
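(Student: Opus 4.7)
\textbf{Proof plan for Theorem \ref{thm: lower_bound}.} The plan is to transfer the network lower bound to a spectral lower bound for a misspecified kernel gradient flow via the reductions already established in the paper, and then invoke the bias--variance machinery of \citet{li2024generalization}. First, by Proposition \ref{thm: unif_cov_f}, for $m$ sufficiently large one can make the $L^{2}$ gap between $f_t^{\mr{NN}}$ and $f_t^{\mr{NTK}}$ negligible compared to $n^{-3/(d+3)}$ with probability at least $1-\delta/3$. Next, Proposition \ref{thm: impact of initialization} converts the excess risk into $\|\widetilde{f}_t^{\mr{GF}}-\widetilde{f}^*\|_{L^2}^2$ with $\widetilde{f}^*:=f^*-f^{\mr{GP}}$ and $\widetilde{f}_t^{\mr{GF}}$ the KGF started from zero against targets $\widetilde{f}^*(X)+\epsilon_X$. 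The problem therefore reduces to lower-bounding $\mathbf{E}[\|\widetilde{f}_t^{\mr{GF}}-\widetilde{f}^*\|_{L^2}^2\mid X]$ conditionally on the design $X$ and the random initialization $f^{\mr{GP}}$.

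Let $\{(\lambda_i,e_i)\}$ be the Mercer system of $\NTK_0$ on $\mathbb{S}^d$ and $\{\tau_i\}$ the eigenvalues of $\RFK$ in the same basis; both are inner-product kernels and so are jointly diagonalized by spherical harmonics. Standard NTK/RFK spectral analyses cited in the paper give the polynomial decays $\lambda_i\asymp i^{-(d+1)/d}$ and $\tau_i\asymp i^{-(d+3)/d}$, which is precisely the combination producing the threshold $3/(d+1)$ of Theorem \ref{thm: Smoothness of Gaussian Process}. Expand $f^{\mr{GP}}=\sum_i\sqrt{\tau_i}\,\xi_i e_i$ by Karhunen--Loève with $\xi_i$ i.i.d.~$\mathcal{N}(0,1)$. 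Because $f^*\in[\HNT_0]^s$ with $s>3/(d+1)$, the Mercer coefficients of $f^*$ decay strictly faster than $\sqrt{\tau_i}$, so the shifted coefficients $\widetilde{c}_i:=\langle\widetilde{f}^*,e_i\rangle_{L^2}$ are dominated by $-\sqrt{\tau_i}\,\xi_i$ for large $i$. Plugging into the bias--variance lower bound for KGF,
\begin{equation*}
\mathbf{E}\bigl[\|\widetilde{f}_t^{\mr{GF}}-\widetilde{f}^*\|_{L^2}^2\bigm|X\bigr] \ \gtrsim\ \sum_i e^{-2\lambda_i t}\widetilde{c}_i^2 \ +\ \frac{\sigma^2}{n}\#\{i:\lambda_i t\gtrsim 1\},
\end{equation*}
the variance is $\asymp n^{-1}t^{d/(d+1)}$, while the bias, using the anti-concentration $\widetilde{c}_i^2\gtrsim\tau_i\xi_i^2$, is $\gtrsim\sum_{i\gtrsim t^{d/(d+1)}}\tau_i\asymp t^{-3/(d+1)}$. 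These two terms are balanced at $t\asymp n^{(d+1)/(d+3)}$, where each is of order $n^{-3/(d+3)}$; for any other $t=t(n)$, either term alone is already $\Omega(n^{-3/(d+3)})$, so the lower bound holds uniformly in the choice of stopping time.

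The principal obstacle will be to promote the qualitative statement of Theorem \ref{thm: Smoothness of Gaussian Process} to the quantitative anti-concentration $\sum_{i\geq N}\widetilde{c}_i^2\gtrsim\sum_{i\geq N}\tau_i$ uniformly over the dyadic range of truncations $N\asymp t^{d/(d+1)}$ relevant to admissible $t=t(n)\to\infty$. I would handle this via Hanson--Wright concentration for quadratic forms in the Gaussians $\xi_i$, together with a union bound over a logarithmic number of dyadic scales; the contribution of the smooth part of $f^*$ is swallowed because its coefficients decay faster than $\sqrt{\tau_i}$. A secondary subtlety is replacing the empirical bias--variance expressions (which involve $n^{-1}\NTK_0(X,X)$) by their population analogues in the $L^{2}$-eigenbasis, which is routine using the standard concentration of empirical integral operators as in \cite{fischer2020sobolev}. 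Combining these ingredients with Propositions \ref{thm: unif_cov_f} and \ref{thm: impact of initialization} then delivers the claimed $\Omega(n^{-3/(d+3)})$ bound.
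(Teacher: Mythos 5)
Your proposal follows essentially the same route as the paper's proof: reduce to the NTK predictor via Proposition~\ref{thm: unif_cov_f}, shift the initialization into the target via Proposition~\ref{thm: impact of initialization}, expand $\widetilde{f}^* = f^* - f^{\mr{GP}}$ in the spherical-harmonic Mercer basis with a Karhunen--Lo\`eve representation of $f^{\mr{GP}}$, verify the anti-concentration condition $\sum_{i\geq N}\widetilde{c}_i^2 \gtrsim \sum_{i\geq N}\tau_i$ by Gaussian concentration plus a union bound over a polynomial grid of scales, and then invoke the bias--variance lower bound of \citet{li2024generalization} with $\mathcal{R}^2(t)\asymp t^{-3/(d+1)}$ and $\mathcal{N}(t)/n \asymp n^{-1}t^{d/(d+1)}$, balanced at $n^{-3/(d+3)}$. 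The only cosmetic differences are that the paper uses Chebyshev on second moments rather than Hanson--Wright, and delegates the empirical-to-population passage entirely to the cited lemma.
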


The proof is given in Appendix \ref{appe: lower_bound}. Through Theorem \ref{thm: lower_bound}, we derive $n^{-\frac{3}{d+3}}$ as the generalization lower bound of standardly random-initialized network in NTK theory, even if the regression function is quite smooth.  
The rate $ n^{-\frac{3}{d+3}}$ means model suffers notably from  data that has large dimension: 
If $d$ is relatively large, then this rate of convergence can be extremely slow. This is a manifestation of the curse of dimensionality. In fact, it contrasts with the fact that  neural networks excel at high-dimensional problems. This contradiction underscores the limitation of NTK theory for interpreting network performance.

\section{Experiments}

Our numerical experiments are conducted in two aspects to fully understand the impact of standard initialization. First, we show the performance of standard initialized network is indeed worse than the mirrored initialized case, on the aspect of learning rate.
The phenomenon is in line with our theoretical analysis. 
Second, the smoothness of regression function of real data is significantly larger than $\frac{3}{d+1}$, which suggest the bad effect of non-zero intial output function of standard  initialization will indeed destroy the performance of network if NTK theory holds. It demonstrates the drawback of NTK theory through contradiction. 

\subsection{Artificial data}

In the first experiment, we employ artificial data to show the negative effect of standard initialization on the generalization error of network. The detailed settings are shown in Appendix \ref{appe: artificial_data}. 

\paragraph{Learning rate of network under different initialization}  The experiments are conducted for both $d=5$ and $d=10$, contrasting network performance subject to mirrored and standard initialization strategies. We choose a relatively smooth goal function
to emphasize the impact of initialization. Specifically, we use  $m = 20n$ ,  $\text{epoch} = 10n$ , and the gradient learning rate  $lr = 0.6$. The networks are made sufficiently wide to ensure the overparametrization assumption is met.  Additionally, we implement the early-stopping strategy as mentioned in Theorem \ref{thm: impact_initializaiton}, that is, selecting the minimum loss across all epochs as the generalization error.
Finally, we test the network's generalization error on different levels of sample size $n$, and plot the log value of the generalization error corresponding to $\log (n)$ as shown in Figure \ref{fig: figure_1}. As we expected, the points in Figure \ref{fig: figure_1} fits a linear trend. Moreover, the figure  highlights the difference in learning rate under different initialization methods. This aligns with our theoretical results.

\begin{figure}[h]
\centering
\includegraphics[width=0.8\linewidth]{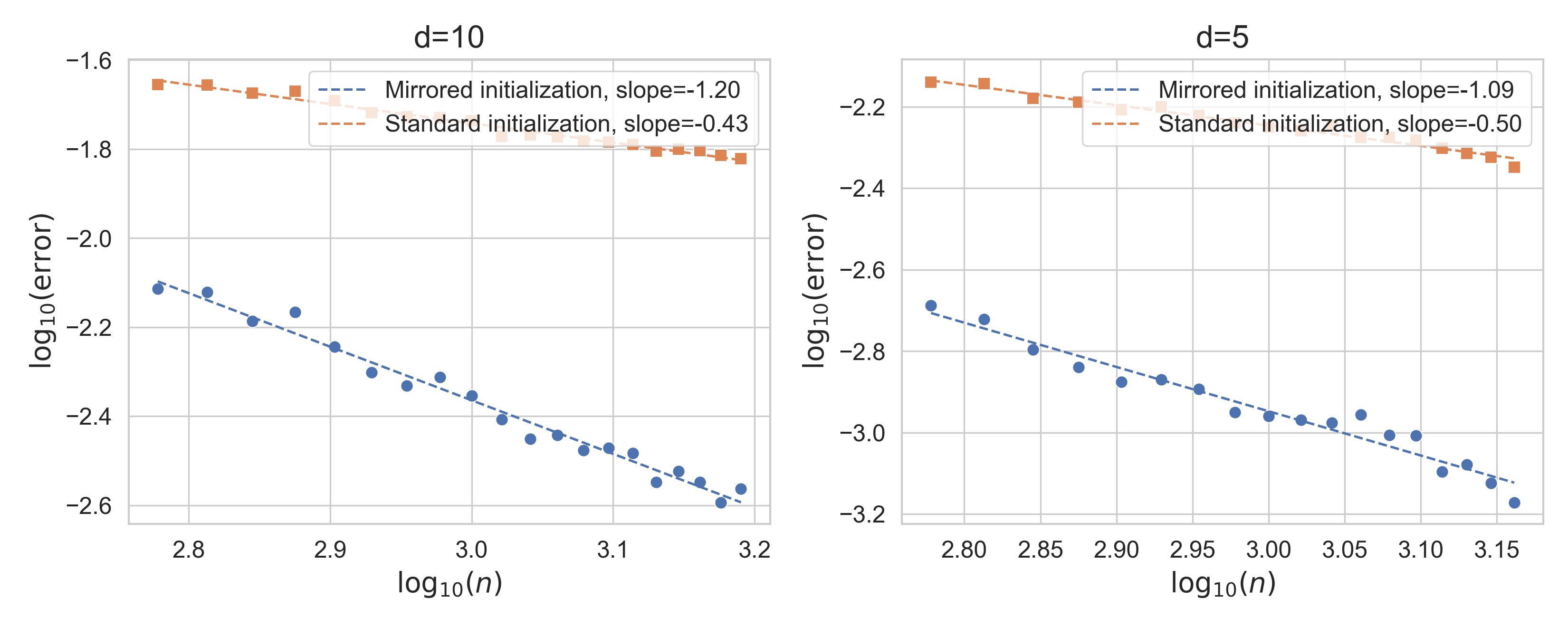}
\caption{ Generalization error decay curve of network. The scatter points show the averaged log error over $20$ trials. The dashed lines are computed through least-squares.  The scale of  $n$  is not broad because a larger  $n$  requires a larger  m , which would induce higher computational costs.} 
\label{fig: figure_1}
\end{figure}

\subsection{Real data}
In this subsection, we focus on datasets from the real world and estimate the smoothness of function. Although we could not know the goal function that the real data is generated from, there exists a way to estimate its smoothness \cite{cui2021generalization}. We show the technical details in Appendix \ref{appe: real_data}.

\begin{table}[h]
  \caption{Smoothness of goal function}
  \label{tab: smoothness_table}
  \centering
  \begin{tabular}{lll}
    \toprule
    \multicolumn{2}{c}{Dataset}                   \\
    \cmidrule(r){1-2}
    Name     & Dimension     & Smoothness \\
    \midrule
    MNIST & $28\times 28 \times 1$  & $0.40$    \\
    CIFAR-10     &  $32 \times 32 \times 3$  & $0.09$      \\
    Fashion-MNIST     & $28\times 28 \times 1 $       &  $0.22$  \\
    \bottomrule
  \end{tabular}
\end{table}
\paragraph{ Smoothness of goal function in real datasets} We employed the MNIST, CIFAR-10 and Fashion-MNIST datasets\cite{lecun2010mnist,krizhevsky2009learning,xiao2017fashion}. In the experiments, we evaluate the smoothness of goal function of the datasets, with respect to the one-hidden layer NTK.  
The results are presented in Table \ref{tab: smoothness_table}. 
With the input dimension $d = 784, 3072 ,784$, 
we can compute that the smoothness of initialization function is equal to $\frac{3}{d+1} \approx 0$ . However, the smoothness of goal function is far better than $\frac{3}{d+1}$, which implies that standard initialization will indeed destroy the generalization performance, under NTK theory. The contradiction between NTK theory and the real situation shows its limitation and once again confirms our conclusion.

\section{Discussion}

To summarize, this research focuses on the impact of standard random initialization on generalization property of network in the NTK theory, which makes up the gap in this field.  Many previous work \cite{lai2023generalization,li2023statistical} verified the statistical optimality of neural network under delicately designed mirrored initialization, whose initial output function of network is zero. However, through our study, we pinpoint that if we consider the commonly-used standard initialization,  the learning rate of network is notably slow when the dimension of data is slightly large, which fails to explain network's favorable performance in overcoming the curse of dimensionality. A direct implication of our work is the superiority of mirror initialization over standard initialization, which suggests a direction for future improvements. On a deeper level, 
although NTK theory can describe many properties of network, we can explore more to find better theories that can characterize the generalization properties of neural network in the future.

\newpage
 \bibliographystyle{plainnat}  
   \bibliography{main}

\newpage
\appendix

\section{Further notations}

In appendix, we will provide many technical proofs. Before that, let us provide more notations. For two sets $A$ and  $B $ with a mapping function $\phi: A \to B$, the notation $\phi(A)$ is used to denote the image set of $A$ under $\phi$. For two random variable sequences $\{u_n\}$ and $\{v_n\}$, we denote by $u_n = o_\mathbf{P}(v_n)$ (or $u_n = \Omega_\mathbf{P}(v_n)$) if the ratio $u_n/v_n$ approaches zero (or $u_n \geq c v_n$ for some positive constant $c$) in probability as $n \to \infty$ with respect to probability measure $\mathbf{P}$. For two real number sequence $\{a_n\}$ and $\{b_n\}$, we denote by $a_n = \Omega (b_n)$ if there exists positive constant $c$ and $n_0$ such that $|a_n| \geq c|b_n|$ holds for any $n \geq n_0$. 
For two sequences of real numbers $\{a_n\}$ and $\{b_n\}$ such that $a_n = \Omega(b_n)$ (or $a_n = O(b_n)$), we also denote by $a_n \gtrsim b_n$ (or $a_n \lesssim b_n$ ) . If $a_n \gtrsim b_n$ and $a_n \lesssim b_n$, then we denote by $a_n \asymp b_n$.

\section{Proof of uniform convergence}\label{appe: uniform_convergence}

In this section, we demonstrate the uniform convergence from $f_t^{\mr{NN}}$ to $f_t^{\mr{NTK}}$.
\subsection{Initialization}
The following is a direct proposition based on Lemma \ref{thm: Skoro_thm} and Lemma \ref{lem: weak_conv},  
\begin{proposition}\label{prop: as_convergence}
    For the random network function sequence $ \{f_0^m \} $ with probability measures on $( C(\mathcal{X},\mathbb{R}),\mathcal{C})$ , there exists $\{ X_m \} $ and $X^{\mr{GP}} $ defined on a new probability space $(\Omega', \mathcal{F}, \mathbf{P}) $, on which we have
$$\mathbf{P}(\lim_{m \to  \infty} \norm{X_m - X^{\mr{GP}} }_\infty = 0) = 1 . $$
where $ X_m $ and $X^{\mr{GP}} $ has the same distribution as $f_0^m$ and $f^{\mr{GP}}$, respectively.
\end{proposition}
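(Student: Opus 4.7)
The proposition is essentially a routine bridge from a weak-convergence statement to a coupled almost-sure statement, so the plan is to read it as a direct application of Skorokhod's representation theorem to the weak convergence of Lemma~\ref{lem: weak_conv}. Concretely, I would proceed in three short steps: (i) record that the sequence of laws $\mr{Law}(f_0^m)$ on $(C(\mc{X},\mb{R}),\mc{C})$ converges weakly to $\mr{Law}(f^{\mr{GP}})$, which is exactly the content of Lemma~\ref{lem: weak_conv}; (ii) verify the Polish-space hypothesis needed to invoke Skorokhod; (iii) apply Skorokhod's theorem (Lemma~\ref{thm: Skoro_thm}) to obtain the coupling $\{X_m\}$ and $X^{\mr{GP}}$ on a common probability space with the stated almost-sure convergence in the sup norm.

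The only genuine thing to check is that $C(\mc{X},\mb{R})$ equipped with $\|\cdot\|_{\infty}$ is a separable complete metric space, so that Skorokhod's representation theorem applies. This follows from the assumption that $\mc{X}\subset\mb{R}^d$ is a bounded subset with $C^\infty$ smooth boundary: its closure $\overline{\mc{X}}$ is compact, so $(C(\overline{\mc{X}},\mb{R}),\|\cdot\|_\infty)$ is a Polish space by the Stone--Weierstrass-type argument (e.g. polynomials with rational coefficients form a countable dense subset). Once this is noted, Lemma~\ref{thm: Skoro_thm} immediately produces random elements $X_m$, $X^{\mr{GP}}$ on a common space $(\Omega',\mc{F},\mb{P})$ with $\mr{Law}(X_m)=\mr{Law}(f_0^m)$, $\mr{Law}(X^{\mr{GP}})=\mr{Law}(f^{\mr{GP}})$, and $\|X_m-X^{\mr{GP}}\|_\infty\to 0$ almost surely.

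There is essentially no obstacle here: both inputs (the functional CLT of \cite{hanin2021random} cited as Lemma~\ref{lem: weak_conv}, and Skorokhod's representation theorem as Lemma~\ref{thm: Skoro_thm}) are quoted from the literature, and the role of this proposition is to convert distributional convergence into the pathwise convergence needed later for the uniform approximation of $f_t^{\mr{NN}}$ by $f_t^{\mr{NTK}}$. The only point that would warrant a sentence of justification in the write-up is the separability of $C(\mc{X},\mb{R})$, since Skorokhod's theorem requires the target of the weak limit to have separable support; in the version I would cite, the fact that $f^{\mr{GP}}$ is a Gaussian process with continuous sample paths (again by Lemma~\ref{lem: weak_conv}) makes its law concentrated on the Polish subspace $C(\mc{X},\mb{R})$, so the hypothesis is satisfied.
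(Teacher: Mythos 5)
Your proposal matches the paper's argument exactly: the paper also treats this as an immediate consequence of Lemma~\ref{lem: weak_conv} combined with Skorokhod's representation theorem (Lemma~\ref{thm: Skoro_thm}), with the only point of justification being the separability of $(C(\mathcal{X},\mathbb{R}),\mathcal{C})$, which the paper likewise derives from the density of polynomials. Correct and essentially identical in approach.
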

\begin{remark}
     The separability of $(C(\mathcal{X},\mathbb{R}), \mathcal{C})$ can be derived by the density of polynomials. Therefore, it satisfies the requirement of Lemma \ref{thm: Skoro_thm}. In the context of our study, our reliance is only on the  distribution of $\{f^m_0\}$ for each given value of $m$. Consequently,  it is reasonable to reconstruct it in the new probability space. For convenience, 
     we directly denote $X_0^m$  as $f_0^m$ (or $f_0^{\mr{NN}}$) and denote and $X^{\mr{GP}}$ as $f^{\mr{GP}}$, respectively. In other words, we are considering the network function in a new probability space, even though this approach may result in a moderate abuse of notation.
\end{remark}

\subsection{Uniform convergence of network}

 Our aim is to give the uniform convergence between NTK regressor $f_t^{\mr{NTK}}$   and network function $f_t^{\mr{NN}}$. Note that the NTK regressor is trained by NTK, and the network function is trained by NNK, which is denoted by $K_t^m$. Here we first show the uniform convergence between NNK and NTK as $m$ comes to infinity. 

\begin{lemma}\label{lem: uniform_convergence}
    For any $\delta \in (0,1)$, suppose $m$ is large enough, then with probability at least $1- \delta$, we have 
    $$ \sup_{t \geq 0} \sup_{x,x'\in \mathcal{X}} \lvert K_t^m(x,x') - \NTK(x,x')  \rvert \leq O(m^{-\frac{1}{12}} \sqrt{ \log m} ).$$
\end{lemma}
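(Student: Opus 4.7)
The plan is to split the error $K_t^m(x,x') - \NTK(x,x')$ via the triangle inequality into an initialization term $K_0^m(x,x') - \NTK(x,x')$ and a training-induced drift term $K_t^m(x,x') - K_0^m(x,x')$, and bound the supremum of each separately. For the initialization term, for any fixed pair $(x,x')$ the NNK $K_0^m(x,x')$ is a polynomial in i.i.d.\ Gaussian weights whose expectation equals $\NTK(x,x')$; standard Gaussian concentration (in the spirit of the recursive variance bounds of \citet{hanin2021random}) gives pointwise fluctuations of order $m^{-1/2}$ up to polylogarithmic factors. To lift this to the uniform bound over $\mathcal{X}\times\mathcal{X}$ I would take an $\varepsilon$-net of the compact set $\mathcal{X}\times\mathcal{X}$ of cardinality $(1/\varepsilon)^{2d}$, apply a union bound, and then use Lipschitz continuity of both $K_0^m$ and $\NTK$ in $(x,x')$ to extend from the net to the whole domain; the random Lipschitz constant of $K_0^m$ can be controlled by a polynomial in $\log m$ with high probability using standard operator-norm estimates for Gaussian matrices.

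For the drift term I would use a parameter-space argument. A direct computation of $\partial_\theta K_t^m$ shows
$$\sup_{x,x'\in\mathcal{X}} |K_t^m(x,x') - K_0^m(x,x')| \leq C_m\,\|\theta_t-\theta_0\|,$$
where $C_m$ is a random constant depending on the worst-case operator norms of the weight matrices encountered along the trajectory. Under standard initialization all layer matrices are $O(1)$ in operator norm with high probability, and as long as $\|\theta_t-\theta_0\|$ remains small the same bound propagates along the trajectory. The crucial missing input is therefore a uniform-in-$t$ estimate of $\|\theta_t-\theta_0\|$. I would derive this by a bootstrap: assuming $\|\theta_s-\theta_0\| \leq \eta_m$ for all $s\leq t$, the empirical NNK $K_s^m(X,X)$ stays $o(1)$-close to $\NTK(X,X)$, whose smallest eigenvalue is bounded below (as in \citet{li2023statistical}); a Polyak-Lojasiewicz / Gronwall argument then yields exponential decay of the residual $f_s^{\mathrm{NN}}(X)-Y$, which in turn controls the total parameter displacement $\int_0^\infty\|\dot\theta_s\|\,\dd s$ by $\|f_0^{\mathrm{NN}}(X)-Y\|/\sqrt{m}$, so that one can close the loop and conclude $\sup_{t\geq 0}\|\theta_t-\theta_0\| = O(m^{-\alpha}\sqrt{\log m})$ for a suitable $\alpha>0$.

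The main obstacle I expect is precisely this uniform-in-time control when the initial output is not identically zero. Under the mirrored initialization used by \citet{arora2019exact,lai2023generalization,li2023statistical} the initial residual is simply $-Y$ of deterministic size, so the standard bootstrap closes cleanly; here the residual $f_0^{\mathrm{NN}}(X)-Y$ contains the random $f_0^{\mathrm{NN}}(X)$, which by Lemma~\ref{lem: weak_conv} is tight in $C(\mathcal{X},\mathbb{R})$ and hence of size $O(\sqrt{\log m})$ with high probability but does not vanish. One therefore has to accommodate this Gaussian-process-sized initial displacement throughout the bootstrap, and verify that the perturbation of $K_s^m(X,X)$ from $\NTK(X,X)$ remains small enough to preserve the spectral gap used in the PL step. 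Finally, optimizing among the three competing scales—the $\varepsilon$-net resolution, the initialization concentration $m^{-1/2}$, and the trajectory-induced drift $m^{-\alpha}\sqrt{\log m}$—together with the polynomial-in-$m$ growth of the Lipschitz constant $C_m$ in the multi-layer regime yields the stated rate $m^{-1/12}\sqrt{\log m}$; the specific exponent $1/12$ emerges from this balancing.
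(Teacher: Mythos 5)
Your proposal follows essentially the same route as the paper's (sketched) proof: an $\varepsilon$-net argument combining pointwise concentration of $K_0^m$ with continuity of both kernels, plus a lazy-training bootstrap showing that the non-zero random initial residual $f_0^{\mathrm{NN}}(X)-Y$ is still controlled with high probability (via the tightness of $f_0^{\mathrm{NN}}$ from Lemma~\ref{lem: weak_conv}) so that the residual decays and the parameter trajectory stays in the stable regime. The one place your bookkeeping diverges is the claim $\sup_{t\ge 0}\norm{\theta_t-\theta_0}=O(m^{-\alpha}\sqrt{\log m})$: under the paper's normalization the displacement actually satisfies $\sup_{t\geq 0}\norm{W^{(l)}_t-W^{(l)}_0}_{\mathrm F}=O(m^{1/4})$ (small only relative to the $\sqrt{m}$ initialization scale), and the vanishing of the kernel drift comes from the $m^{-1/2}$-type scaling of the second derivatives of the network output rather than from a vanishing displacement — a bookkeeping rather than conceptual difference.
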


\begin{proof}
 The proof is similar to that in \citet{li2023statistical}, while the difference is the way of initialization. So we only provide the sketch of proof. In \citet{li2023statistical}, the uniform convergence of NTK is proved through a standard $\epsilon-net$ argument, which is divided into point-wise convergence and continuity of both NTK and NNK. Namely, as the following decomposition:
 \begin{equation}\label{eq: uni_conv_decom}
 \begin{aligned}
          \lvert K_t^m(x,x') - \NTK(x,x') \rvert \leq & \lvert K_t^m(x,x') - K_t^m(z,z') \rvert 
          \\& + \lvert K_t^m(z,z') - \NTK(z,z') \rvert + \lvert \NTK(z,z') - \NTK(x,x') \rvert.
 \end{aligned}
 \end{equation}
where $z,z'$ are the points in the $\epsilon-net$ which divides $\mc{X}$. 

 Back to our case, in non-zero initialization, the structrue of NTK and NNK remain the same, as well as the continuity property. Consequently,  
 the effect of initialization reflects on the point-wise convergence from $K_t^m(z,z')$ to $\NTK(z,z')$, or more precisely, the NTK regime \cite{allen2019convergence}. 
 NTK regime requires that the residual decays to near zero and thereby the parameters will not deviate too far from their initial values in the training process, which holds under mirrored initialization. 
  Standard initialization lets the residual  at time $0$ be $ \norm{f_0^{\mr{NN}}(X) - Y }_2 $, instead of $\norm{Y}_2$. Therefore, there is a slight risk that the residual is too large to decay to near zero during training. 
  However, since 
  \begin{equation}
      \norm{f_0^{\mr{NN}}(X)}_2 \leq O(n \cdot m^{\frac{1}{8}}),
  \end{equation}
  holds with high probability  when $m$ is large through Proposition \ref{prop: as_convergence} and direct analysis on $f^{\mr{GP}}$ , we can verify that the residual $ \norm{f_t^{\mr{NN}}(X) - Y }_2 $  is still not large enough to break the stable lazy regime. Namely, the control on parameter matrix that 
  \begin{equation}
      \sup_{t\geq 0 } \norm{W^{(l)}_t - W^{(l)}_0}_{\mr{F}} = O(m^\frac{1}{4}). 
  \end{equation}
  still holds. 
  In this way, we  can finish the proof.
 \end{proof}

Then, we can derive the uniform convergence of network function. 
\begin{proof}[Proof of Proposition \ref{thm: unif_cov_f}]
The proof is also similar to that of uniform convergence under mirrored initialization.
Therefore, we only exhibit the sketch of different part. 
 Define event $A$ as  
\begin{equation}
    A = \left \{  \|f_0^{\mr{NN}} - f^{\mr{GP}} \|_\infty \leq o_m(1) \right\} \cap \{  \norm{f^{\mr{GP}}(X)}_2 \leq C_\delta  \}
\end{equation} where $C_\delta$ is some constant related to $\delta$, such that event $A$ holds with probability at least $1-\frac{\delta}{2}$ when $m$ is large enough. Such a constant  $C_\delta$  is ascertainable, as $f_0^{\mr{NN}}$ converges to $f^{\mr{GP}}$ by Proposition \ref{prop: as_convergence} and $f^{\mr{GP}}$ is a Gaussian process with finite second moment. 
Define event $B$ as
\begin{equation}
    B = \left\{ \sup_{t\geq 0} \sup_{x,x' \in \mc{X}}\lvert K_t^m(x,x') - \NTK(x,x') \rvert \leq o_m(1) \right\}. 
\end{equation} We have event $B$ holds with probability at least $1-\frac{\delta}{2}$  when $m$ is large enough. 
Conditioned on  event $A$ and $B$, we do kernel gradient flow by $ K_t^m$ and $\NTK$ on $f_0^{\mr{NN}}$ and $f^{\mr{GP}}$ respectively. Let event $C$ be
\begin{equation}
    C = \left\{ \sup_{t\geq 0}\|f_t^{\mr{NN}} - f^{\mr{NTK}}_t \|_\infty \leq o_m(1)\right\}.
\end{equation} Conditioned on event $A$ and $B$, we can prove that event $C$ holds by Gronwall's inequality, as the same method in \citet{lai2023generalization}. In this way, we can finish the proof. 
\end{proof}

    After we get the uniform convergence of network function, 
we can obtain the proposition on the convergence of excess risk:

\begin{proposition} \label{prop: uniform_generalization} Suppose $f^* \in L^2(\mc{X},\mu)$.  
    For any $\delta \in (0,1)$ and $\varepsilon  >  0$, when $m$ is large enough 
    , with probability at least $1-\delta$, we have
    \begin{equation}
 \sup_{t >  0} \left\lvert \norm{f_t^{\mr{NN}} - f^*}^2_{L^2} - \norm{f_t^{\mr{NTK}} - f^*}^2_{L^2}  \right\rvert \leq \varepsilon        
    \end{equation}
\end{proposition}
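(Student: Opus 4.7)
The strategy is to reduce the claim to a combination of the uniform sup-norm closeness between $f_t^{\mr{NN}}$ and $f_t^{\mr{NTK}}$ provided by Proposition~\ref{thm: unif_cov_f}, and a uniform-in-$t$ boundedness statement for $f_t^{\mr{NTK}}$ in $L^2$. First I would exploit the algebraic identity
\[
\norm{f_t^{\mr{NN}} - f^*}_{L^2}^2 - \norm{f_t^{\mr{NTK}} - f^*}_{L^2}^2
= \norm{g_t}_{L^2}^2 + 2\ang{g_t,\; f_t^{\mr{NTK}} - f^*}_{L^2},
\]
where $g_t := f_t^{\mr{NN}} - f_t^{\mr{NTK}}$, and invoke Cauchy--Schwarz to upper-bound its absolute value by $\norm{g_t}_{L^2}^2 + 2\norm{g_t}_{L^2}\,\norm{f_t^{\mr{NTK}} - f^*}_{L^2}$. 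Since $\mc{X}$ is bounded, $\mu(\mc{X}) < \infty$, and Proposition~\ref{thm: unif_cov_f} applied with an auxiliary tolerance $\varepsilon'>0$ yields, on an event $E_1$ of probability at least $1-\delta/2$ for all sufficiently large $m$, the uniform bound $\sup_{t\geq 0}\norm{g_t}_{L^2} \leq \sqrt{\mu(\mc{X})}\,\varepsilon'$.

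The heart of the argument is then to control $\sup_{t\geq 0}\norm{f_t^{\mr{NTK}}}_{L^2}$ by a finite quantity on a high-probability event independent of $m$. For this I would use the explicit KGF representation~\eqref{eq: matrix_of_fGF} applied to the NTK dynamics started at $f_0^{\mr{NTK}} = f^{\mr{GP}}$, together with the operator-norm bound $\norm{I - e^{-t\NTK(X,X)/n}}_{\mr{op}} \leq 1$ valid for all $t\geq 0$. Writing $\kappa^2 := \sup_x \NTK(x,x)$ and $c_n := \lambda_{\min}(\NTK(X,X))$, Cauchy--Schwarz on the explicit expression delivers the pointwise estimate
\[
\norm{f_t^{\mr{NTK}}}_\infty \;\leq\; \norm{f^{\mr{GP}}}_\infty + \frac{\kappa^2\sqrt{n}}{c_n}\bigl(\norm{f^{\mr{GP}}(X)}_2 + \norm{Y}_2\bigr),
\]
whose right-hand side is random but independent of $t$ and of $m$. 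Positive definiteness of the NTK together with the a.s.\ distinctness of the samples gives $c_n > 0$ almost surely; continuity of the Gaussian process on the compact $\mc{X}$ (Lemma~\ref{lem: weak_conv}) ensures $\norm{f^{\mr{GP}}}_\infty < \infty$ a.s.; and Assumption~\ref{assu: noise} together with $f^*\in L^2$ control $\norm{Y}_2$ in probability. A simple truncation then produces a constant $M = M(\delta,n)$ and an event $E_2$ with $\mathbf{P}(E_2) \geq 1-\delta/2$ on which $\sup_{t\geq 0}\norm{f_t^{\mr{NTK}}}_{L^2} \leq \sqrt{\mu(\mc{X})}\,M$.

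The main obstacle is precisely this boundedness step: assembling the three a.s.\ finite random quantities $c_n^{-1}$, $\norm{f^{\mr{GP}}}_\infty$, and $\norm{Y}_2$ into a single width-independent constant whose failure event carries probability at most $\delta/2$. Once it is in place, on $E_1 \cap E_2$ the earlier Cauchy--Schwarz estimate yields
\[
\sup_{t>0}\bigl|\norm{f_t^{\mr{NN}}-f^*}_{L^2}^2 - \norm{f_t^{\mr{NTK}}-f^*}_{L^2}^2\bigr|
\;\leq\; \mu(\mc{X})(\varepsilon')^2 + 2\sqrt{\mu(\mc{X})}\,\varepsilon'\bigl(\sqrt{\mu(\mc{X})}\,M + \norm{f^*}_{L^2}\bigr),
\]
and choosing $\varepsilon'$ small enough relative to $\varepsilon$, $M$, and $\norm{f^*}_{L^2}$, then $m$ sufficiently large via Proposition~\ref{thm: unif_cov_f}, completes the proof.
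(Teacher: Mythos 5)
Your proposal is correct and follows essentially the same route as the paper: expand the difference of squared $L^2$ norms into $\norm{g_t}^2$ plus a cross term, control $\sup_{t}\norm{g_t}_\infty$ via Proposition~\ref{thm: unif_cov_f}, and bound $\sup_t\norm{f_t^{\mr{NTK}}}_\infty$ through the explicit KGF solution using the boundedness of $\NTK$, the strict positivity of $\lambda_{\min}(\NTK(X,X))$, and a high-probability bound on $\norm{f^{\mr{GP}}(X)}_2$ and $\norm{Y}_2$. If anything, your write-up is slightly more careful than the paper's (correct factor of $2$ on the cross term and a cross term that is linear rather than quadratic in the sup-norm discrepancy).
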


\begin{proof}
    Recall the dynamic equation of $f_t^\mr{NTK}$, we have 
    \begin{equation}
        |f_t^\MNTK(x)| \leq \norm{\NTK(x,X)^T}_2 \norm{ \NTK(X,X)^{-1} }_2 \norm{f_0^{\MNTK}(X)-Y}_2.
    \end{equation}
    Since the kernel function $\NTK(\cdot,\cdot)$ is bounded, there exists some positive constant $C$, such that 
    \begin{equation}
        \norm{\NTK(x,X)^T}_2 \leq  C \sqrt{n}.
    \end{equation}
    The initial function of kernel gradient flow $f_0^{\MNTK} = f^{\mr{GP}}$ follows a Gaussian process with mean $0$ and covariance kernel function $\RFK$. By the boundness of $\RFK$, we can also bound $f_0^{\MNTK}$. That is, for any $\delta \in (0,1)$, there exists a positive constant $M_\delta$ such that with probability at least $1- \delta/2$,   
    \begin{equation}
        \norm{f_0^{\MNTK}(X)}_2 \leq \sqrt{n} M_\delta.  
    \end{equation}
 Denote $\lambda_0 \coloneqq \lambda_{\min} \left( \NTK(X,X) \right)$ . We have $\lambda>0$ since $\NTK$ is strictly positive definite \cite{li2023statistical}.   Thus we have 
    \begin{equation}
        |f_t^\MNTK (x)| \leq C\sqrt{n} \lambda_0^{-1} (\sqrt{n} M_\delta + \norm{Y}_2).
    \end{equation}
    The excess risk
    \begin{equation}
        |\mc{E}(f^{\mr{NN}}_t;f^*) - \mc{E}(f^\MNTK_t;f^*)| =  \left| \int_\mathcal{X} | f^{\mr{NN}}_t - f_t^{\MNTK} |^2 \,  \dd \mu + \int_\mathcal{X} (f_t^\MNTK - f^*)(f^{\mr{NN}}_t - f_t^\MNTK ) \,\dd\mu \right|
    \end{equation}
    Since $f^* \in L^2(\mc{X},\mu)$ where $\mu$ is probability measure, we also have $f^* \in L^1(\mc{X},\mu)$. Denote 
    $M_{f^*} \coloneqq \norm{f^*}_{L^1}$ and $\Delta \coloneqq \sup_{x \in \mathcal{X},t\geq 0} |f_t^{\mr{NN}}(x) - f_t^\MNTK |$. We have 
    \begin{equation}
         |\mc{E}(f^{\mr{NN}}_t;f^*) - \mc{E}({f}_t^\MNTK;f^*)| \leq \Delta^2 \cdot (1+C\sqrt{n} \lambda_0^{-1} (\sqrt{n} M_\delta + \norm{Y}_2)+M_{f^*} )
    \end{equation}
    By Proposition \ref{thm: unif_cov_f}, when $m$ is large enough, with probability at least $1-\delta$ we have $\Delta^2 \leq \varepsilon /(1 + C\sqrt{n} \lambda_0^{-1} (\sqrt{n} M_\delta + \norm{Y}_2)+M_{f^*})$. Thus the proposition is proved.

\end{proof}

\section{Proof of the Theorem \ref{thm: Smoothness of Gaussian Process}} \label{appe: proof_of_Smoothness of Gaussian Proces}

Before the proof, first we introduce some basic properties of NTK and RFK, as well as some technical properties of Sobolev space. We say that two Hilbert space $\mc{H}_1, \mc{H}_2$ are equivalent if they are equal as sets and share equivalent norm. If $\mc{H}_1$ and $\mc{H}_2$ are equivalent, we denote by $\mc{H}_1 \cong \mc{H}_2$.

\subsection{Basic properties of NTK and RFK}\label{appe: NTK}

 \paragraph{Dot-product kernel}  A reproducing kernel function $k$ is dot-product if its value only depends on the dot-product of inputs. That is, there exists function $\kappa$ such that
 \begin{equation}
     k(x,x') = \kappa(\langle x,x' \rangle).
 \end{equation}
 A dot-product kernel on sphere can be decomposed with spherical harmonic polynomials as the eigenfunction:
\begin{equation}\label{eq: decom_harmonic}
    k(x,y) = \sum_{n=0}^\infty  \mu_n  \sum_{l=1}^{a_n} Y_{n,l}(x)Y_{n,l}(y).
\end{equation}
where spherical harmonic polynomials  $\{Y_{n,l}, l = 1,\cdots,a_n\}$ are also the orthonormal basis of $L^2(\mb{S}^d , \sigma)$, with $\sigma$ denoting the uniform measure on $\mb{S}^d$ \cite{smola2000regularization}. This is also its Mercer decomposition.

Now come back to our network case. We first define two dot-product kernels on $\mb{S}^d$, 
\begin{equation}\label{eq: dot_product_NTKandRFK}
\NTK_0(x,y) \coloneqq \sum_{r=0}^L \kappa_1^{(r)}(u)\prod_{s=r}^{L-1}\kappa_0(\kappa_1^{(s)}(u)),\quad \RFK_0(x,y) \coloneqq \kappa_1^{(L)}(u), \end{equation}
where $u = \langle x, y \rangle = x^T y$ and 
\begin{equation}
\kappa_0(u) = \frac{1}{\pi} \left(\pi - \arccos u\right),\qquad
\kappa_1(u) = \frac{1}{\pi} \sqrt{1 - u^2} + \frac{u}{\pi} \left(\pi - \arccos u\right).
\end{equation}
The definition of $\kappa_1^{(t)}$ is given by the composition $\kappa_1 \circ \kappa_1 \cdots \circ \kappa_1$ (a total of $t$ compositions).
The explicit expression indicates that $\NTK_0$ and $\RFK_0$ are dot-product kernels on $ \mb{S}^d$. 

     $\NTK_0$ and $\RFK_0$ is the homogeneous NTK and RFK of a homogeneous fully-connected network $f^S$ defined on $\mb{S}^d$ \cite{cho2009kernel, bietti2020deep}, whose structural difference from \eqref{eq: network} is the removal of the bias term in the first layer. 
    Specifically, the network is structured as follows: 

    \paragraph{Homogeneous fully-connected network on sphere} The network is constructed using the following recursive formula:
\begin{equation}\label{eq: sphere_network}
    \begin{aligned}
        \alpha^{(1)}(x)  &= \sqrt{\frac{2}{m_1}} W^{(0)} x  ; 
        \\
        \alpha^{(l)}(x) &= \sqrt{\frac{2}{m_{l}}} W^{(l-1)}(x) \sigma({\alpha}^{(l-1)}(x)), \quad l =2,3,\cdots,L;\\     
        f^S(x;\theta) &= W^{(L)} \sigma({\alpha}^{(L)}(x)), 
    \end{aligned}
\end{equation}
where the function $\sigma$ is entrywise ReLU activation. The parameter matrix for the $l$-th layer is denoted as $W^{(l)}$, whose dimensions are of $m_{l+1} \times m_l$, where $m_l$ is the number of units in layer $l$ and $m_{l+1}$ is that of layer $l+1$ for $l \in \{0,1,\cdots,L-1\}$. We also set $m_0$ to be equal to $d+1$ and $m_{L+1}$  equal to $1$. The network is also random initialized as \eqref{eq: random_initialization}.

We can easily build a connection between our NTK and RFK for network \eqref{eq: network} and the homogeneous kernels defined in \eqref{eq: dot_product_NTKandRFK}.  For $ x \in \mc{X}$, let $\widetilde{x} = (x,1)$ which means add $1$ as the new last component of $x$. Define $\phi(x) \coloneqq \frac{(x,1)}{\norm{(x,1)}}$ being an isomorphism from open set $\mc{X}$ to a subdomain of positive hemisphere shell $S =  \phi(\mc{X}) \subset \mb{S}^{d}_+$. 
 Then we have 
\begin{equation}
    f(x) = \norm{\widetilde{x}} f^{S} (\phi(x)), 
\end{equation}
where $f$ is network \eqref{eq: network} and $f^S$ is network \eqref{eq: sphere_network}. 
Actually, we can thus verify that
 \begin{equation}\label{eq: NTK0_and_RFK0}
     \NTK(x,y) = \norm{\widetilde{x}} \norm{\widetilde{y}} \NTK_0(\phi(x),\phi(y)), \qquad \RFK(x,y) = \norm{\widetilde{x}}\norm{\widetilde{y}} \RFK_0(\phi(x),\phi(y)) .
 \end{equation}
 We denote by $\HNT_0$ and $\HRF_0$ the RKHS on $\mb{S}^d$ with respect to $\NTK_0$ and $\RFK_0$.  Their eigenvalue decay rates are well known: 
\begin{lemma}[\citet{bietti2020deep, haas2023mind}]\label{lem: edr_of_NTK0andRFK0}
    For $ \NTK_0$ and $\RFK_0 $ on $\mb{S}^d$ with uniform measure $\sigma$, the decay rate of  spherical harmonics coefficients satisfy 
    \begin{equation}
     \begin{aligned}
       \mu_n(\NTK_0) \asymp n^{-(d+1)} \qand \mu_n(\RFK_0) \asymp n^{-(d+3)}  ,
        \end{aligned}
    \end{equation}
    while the eigenvalues satisfy
    \begin{equation}
                \lambda_i(\NTK_0,\mb{S}^d,\sigma) \asymp i^{-\frac{d+1}{d}} \qand \lambda_i(\RFK_0,\mb{S}^d,\sigma) \asymp i^{\-\frac{d+3}{d}}.
    \end{equation}
\end{lemma}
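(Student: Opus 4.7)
\textbf{Proof proposal for Lemma \ref{lem: edr_of_NTK0andRFK0}.}

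The plan is to exploit the dot-product structure of $\NTK_0$ and $\RFK_0$ on $\mb{S}^d$ so that their Mercer decompositions coincide with the spherical-harmonic expansion \eqref{eq: decom_harmonic}. By the Funk--Hecke formula, each coefficient $\mu_n$ of a dot-product kernel $k(x,y)=\kappa(\langle x,y\rangle)$ can be written, up to a dimension-dependent constant, as
\begin{equation}
\mu_n \;=\; c_d\int_{-1}^{1} \kappa(u)\, P_n^{(d)}(u)\,(1-u^2)^{(d-2)/2}\,\dd u,
\end{equation}
where $P_n^{(d)}$ is the degree-$n$ Gegenbauer polynomial associated with $\mb{S}^d$. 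Hence the asymptotics of $\mu_n$ are governed by the endpoint regularity of $\kappa$ at $u=\pm 1$: a standard stationary-phase/Darboux-type argument (or the classical Hilb-type asymptotics for Jacobi coefficients) shows that a singularity of the form $(1-u)^{\beta}$ or $(1+u)^\beta$ with non-integer $\beta$ contributes a term of order $n^{-(d+2\beta)}$ to $\mu_n$, while any $C^\infty$ contribution decays faster than any polynomial.

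The first major step is therefore the singularity analysis of $\kappa_1$ and $\kappa_0$ at $u=\pm 1$. Writing $\kappa_1(u)=\frac{1}{\pi}\sqrt{1-u^2}+\frac{u}{\pi}(\pi-\arccos u)$ and expanding around $u=1$ and $u=-1$ yields precisely one non-analytic piece of order $(1-u^2)^{3/2}$ (the rest being real-analytic), so the Funk--Hecke asymptotics give $\mu_n(\kappa_1)\asymp n^{-(d+3)}$; for $\kappa_0(u)=\frac{1}{\pi}(\pi-\arccos u)$ the non-smooth piece is $(1-u^2)^{1/2}$, giving $\mu_n(\kappa_0)\asymp n^{-(d+1)}$. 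The next step is to transfer these rates through the compositions and products that define $\NTK_0$ and $\RFK_0$. For $\RFK_0=\kappa_1^{(L)}$, I would argue inductively that each iterate $\kappa_1^{(L)}$ preserves the $(1-u^2)^{3/2}$-type leading singularity at $u=\pm 1$, using that $\kappa_1$ fixes the endpoints $\pm 1$ with derivative bounded away from $0$; hence $\mu_n(\RFK_0)\asymp n^{-(d+3)}$. For $\NTK_0$, which is a finite sum of terms of the form $\kappa_1^{(r)}(u)\prod_{s=r}^{L-1}\kappa_0(\kappa_1^{(s)}(u))$, the presence of at least one $\kappa_0$-factor introduces a $(1-u^2)^{1/2}$ singularity, which is strictly less smooth than $(1-u^2)^{3/2}$; the dominant (slowest-decaying) term therefore yields $\mu_n(\NTK_0)\asymp n^{-(d+1)}$.

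The final step is to convert the spherical-harmonic coefficient decay into the eigenvalue decay of the integral operator $T_k$. Because $\mu_n$ is repeated with multiplicity $a_n=\dim \mc{Y}_n^{(d)}\asymp n^{d-1}$, the eigenvalues sorted in decreasing order satisfy $\lambda_i\asymp \mu_n$ whenever $i\asymp \sum_{k\le n} a_k\asymp n^d$. Substituting $n\asymp i^{1/d}$ into the two coefficient rates yields
\begin{equation}
\lambda_i(\NTK_0,\mb{S}^d,\sigma)\asymp i^{-\frac{d+1}{d}},\qquad \lambda_i(\RFK_0,\mb{S}^d,\sigma)\asymp i^{-\frac{d+3}{d}},
\end{equation}
as claimed.

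The principal obstacle is the second step: rigorously showing that compositions and products of $\kappa_0,\kappa_1$ do not cancel or inflate the endpoint singularities. In particular, one must verify that for each $s$, $\kappa_1^{(s)}$ satisfies $\kappa_1^{(s)}(\pm 1)=\pm 1$ and $(\kappa_1^{(s)})'(\pm 1)\neq 0$, so pulling back a $(1-u^2)^{\alpha}$ singularity through $\kappa_1^{(s)}$ yields again a $(1-u^2)^{\alpha}$ singularity (up to smooth factors), and that the leading-order terms across the $L+1$ summands of $\NTK_0$ do not conspire to cancel. Once this endpoint bookkeeping is in place, the Funk--Hecke asymptotics and the harmonic-multiplicity counting are routine and deliver the stated rates.
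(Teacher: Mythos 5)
First, note that the paper does not prove this lemma at all: it is imported verbatim from \citet{bietti2020deep} and \citet{haas2023mind}. Your sketch is essentially a reconstruction of the route taken in those references — Funk--Hecke reduction of a dot-product kernel to Gegenbauer coefficients, endpoint-singularity analysis of $\kappa_0$ and $\kappa_1$ (orders $1/2$ and $3/2$ at $u=\pm1$, which you identify correctly), propagation through the compositions defining $\NTK_0$ and $\RFK_0$, and the multiplicity count $a_n\asymp n^{d-1}$ to convert degree-indexed coefficient decay into sorted eigenvalue decay $\lambda_i\asymp i^{-(d+1)/d}$, $i^{-(d+3)/d}$. So the overall strategy is the right one and matches the source of the citation.

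Two concrete points in your second step need repair. (i) $\kappa_1$ does not fix both endpoints: $\arccos(-1)=\pi$ gives $\kappa_1(-1)=0$, so your claim that $\kappa_1^{(s)}(\pm1)=\pm1$ with nonvanishing derivative is false at $-1$. The conclusion you want still holds, but via a different mechanism: at $+1$ one has $\kappa_1(1)=1$ and $\kappa_1'=\kappa_0$ with $\kappa_0(1)=1\neq 0$, while at $-1$ the inner $(1+u)^{3/2}$ singularity is mapped to the interior point $0$, where the outer iterates are analytic with $\kappa_1'(0)=\kappa_0(0)=\tfrac12\neq 0$, so the order-$3/2$ singularity again survives. (ii) The cancellation you flag as the ``principal obstacle'' is not hypothetical: for a single arc-cosine kernel the contributions of the two endpoints do cancel for one parity class (e.g.\ the Gegenbauer coefficients of $\kappa_1$ vanish for odd $n\geq 3$), and the same parity phenomenon occurs for the one-hidden-layer NTK $u\kappa_0(u)+\kappa_1(u)$. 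Hence the coefficient-level two-sided bounds $\mu_n(\RFK_0)\asymp n^{-(d+3)}$ and $\mu_n(\NTK_0)\asymp n^{-(d+1)}$ for \emph{all} large $n$ cannot be obtained by a generic ``no conspiracy'' argument; they require either sufficient depth (where precisely the failure $\kappa_1(-1)=0$ destroys the parity symmetry, as exploited in \citet{bietti2020deep}) or must be read along the appropriate parity subsequence in the shallowest case. The sorted-eigenvalue statement — which is what the rest of the paper actually uses — is insensitive to this, since discarding one parity class changes the cumulative multiplicity $\sum_{k\le n}a_k\asymp n^d$ only by a constant factor, so your final conversion step and the stated rates go through.
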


Additionally, we list some further result on the eigenvalue decay rate of NTK and RFK provided by \citet{li2023statistical}, which will be used later:
\begin{lemma}\label{lem: edr_of_NTKandRFK}
    Denote $ \Omega $ as a non-empty subdomain of $ \mb{S}^d$.  For $ \NTK_0$ and $\RFK_0 $, we have eigenvalue decay rate:
    $$ \lambda_i(\NTK_0,\Omega,\sigma) \asymp i^{-\frac{d+1}{d}} \quad \text{and } \quad \lambda_i(\RFK_0,\Omega,\sigma) \asymp i^{-\frac{d+3}{d}}  $$
    where $\sigma$ is the uniform measure on $S$.
\end{lemma}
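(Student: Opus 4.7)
The strategy is to identify $\HNT_0$ and $\HRF_0$ with Sobolev spaces on $\mathbb{S}^d$, transfer that identification to the subdomain $\Omega$ using Sobolev extension, and then invoke classical Weyl-type asymptotics for eigenvalues of Sobolev embeddings on $d$-dimensional domains.

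First, I would combine the spherical-harmonic Mercer expansion \eqref{eq: decom_harmonic} with the coefficient decays $\mu_n(\NTK_0) \asymp n^{-(d+1)}$ and $\mu_n(\RFK_0) \asymp n^{-(d+3)}$ from Lemma \ref{lem: edr_of_NTK0andRFK0} to identify $\HNT_0 \cong H^{(d+1)/2}(\mathbb{S}^d)$ and $\HRF_0 \cong H^{(d+3)/2}(\mathbb{S}^d)$ as Hilbert spaces with equivalent norms. This uses the standard spherical-harmonic characterization that $f = \sum c_{n,l} Y_{n,l} \in H^\alpha(\mathbb{S}^d)$ iff $\sum (1+n)^{2\alpha}\sum_l |c_{n,l}|^2 < \infty$, which matches the RKHS norm $\sum \mu_n^{-1}\sum_l |c_{n,l}|^2$ after substituting the known rates of $\mu_n$.

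Second, I would pass to the subdomain $\Omega$ via Aronszajn's restriction theorem, which identifies the RKHS of $k|_{\Omega\times\Omega}$ with $\{g|_\Omega : g \in \mathcal{H}_k\}$ under the quotient norm. Using the smoothness of $\partial\Omega$ — inherited in the paper's setting from the $C^\infty$ boundary of $\mathcal{X}$ via the diffeomorphism $\phi$ — the Sobolev extension theorem on manifolds supplies a bounded linear operator $E : H^\alpha(\Omega)\to H^\alpha(\mathbb{S}^d)$, so the quotient norm is equivalent to the intrinsic $H^\alpha(\Omega)$ norm, yielding $\HNT_0|_\Omega \cong H^{(d+1)/2}(\Omega)$ and $\HRF_0|_\Omega \cong H^{(d+3)/2}(\Omega)$. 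The classical Weyl-type asymptotics for the compact inclusion $H^\alpha(\Omega) \hookrightarrow L^2(\Omega,\sigma)$ on a bounded $d$-dimensional domain then give that the singular values satisfy $s_i \asymp i^{-\alpha/d}$; since eigenvalues of the integral operator equal $s_i^2$, substituting $\alpha=(d+1)/2$ and $\alpha=(d+3)/2$ delivers the two claims.

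The main obstacle is the second step — specifically, the equivalence of the restricted RKHS norm with the intrinsic Sobolev norm on $\Omega$, which hinges on the Sobolev extension operator and hence on the boundary regularity of $\Omega$. As a complementary observation, the upper bound $\lambda_i(\NTK_0,\Omega,\sigma) \lesssim i^{-(d+1)/d}$ is much more elementary: writing the integral operator as $T_k^\Omega = E_0^* T_k^{\mathbb{S}^d} E_0$, where $E_0 : L^2(\Omega,\sigma) \hookrightarrow L^2(\mathbb{S}^d,\sigma)$ is the isometric zero-extension, the min-max principle immediately gives $\lambda_i(T_k^\Omega) \leq \lambda_i(T_k^{\mathbb{S}^d})$, reducing directly to Lemma \ref{lem: edr_of_NTK0andRFK0}; the same argument works verbatim for $\RFK_0$. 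Only the matching lower bounds genuinely require the Sobolev machinery, or alternatively a direct construction of $i$ mutually $L^2$-orthogonal bump trial functions $\phi_j$ supported in disjoint balls of radius $\asymp i^{-1/d}$ inside $\Omega$ with RKHS norm $\lesssim i^{\alpha/d}$, combined with the dual Courant--Fischer characterization of singular values of the RKHS embedding.
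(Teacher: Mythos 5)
The paper does not actually prove this lemma: it is imported verbatim from \citet{li2023statistical}, so there is no internal proof to compare against, and your argument has to be judged on its own merits and against the cited source's strategy. On that basis your proposal is sound and follows essentially the standard route: identify $\HNT_0\cong H^{(d+1)/2}(\mb{S}^d)$ and $\HRF_0\cong H^{(d+3)/2}(\mb{S}^d)$ from the spherical-harmonic coefficient decay (this is exactly the paper's Lemma~\ref{lem: equi_RKHS_Sobolev}/Proposition~\ref{prop: equivalence of HNT SS}), pass to $\Omega$ by Aronszajn restriction, and read off the rates from the squared approximation numbers of $H^\alpha(\Omega)\hookrightarrow L^2(\Omega,\sigma)$, which is how the cited work proceeds as well. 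Your observation that the upper bound needs none of this — $T_k^{\Omega}=E_0^*T_k^{\mb{S}^d}E_0$ is a compression of a positive compact operator, so $\lambda_i(T_k^\Omega)\le\lambda_i(T_k^{\mb{S}^d})$ — is correct and is the cleanest way to get that half.

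One point deserves emphasis: the lemma is stated for an \emph{arbitrary} non-empty subdomain $\Omega$, with no boundary regularity, so your main line (Sobolev extension operator on $\Omega$, hence equivalence of the quotient and intrinsic $H^\alpha(\Omega)$ norms, hence two-sided Weyl asymptotics on $\Omega$ itself) does not apply as stated to general $\Omega$. Your own remedy closes this: for the lower bound it suffices to take a ball $B\subset\Omega$ (every subdomain contains one), note $\lambda_i(T_k^{B})\le\lambda_i(T_k^{\Omega})$ by the same compression argument, and run the extension/Weyl argument (or your disjoint-bump trial-subspace construction with the Courant--Fischer characterization of the embedding's singular values) on $B$, where the boundary is smooth; combined with the min-max upper bound against the full sphere this sandwiches $\lambda_i(T_k^\Omega)$ between two sequences of the same order. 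Stated that way the argument covers the lemma in the generality claimed, and all remaining ingredients (additivity up to constants of fractional Sobolev norms for disjointly supported, separated bumps; eigenvalues of $T_k$ equal to squared singular values of the RKHS embedding; comparability of $\sigma|_\Omega$ with the Riemannian volume) are standard. The only inherited caveat, not attributable to you, is that the Sobolev identification requires all spherical-harmonic coefficients $\mu_n$ to be bounded below at the stated rate, which the paper's Lemma~\ref{lem: edr_of_NTK0andRFK0} asserts and you are entitled to assume.
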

\begin{lemma}\label{lem: EDR on X}
    For $ \NTK$ and $\RFK $, we have eigenvalue decay rate:
    $$ \lambda_i(\NTK,\mc{X},\mu) \asymp i^{-\frac{d+1}{d}} \quad \text{and } \quad \lambda_i(\RFK,\mc{X},\mu) \asymp i^{-\frac{d+3}{d}}  $$
    where measure $\mu$ on  bounded domain $\mc{X} \subset \mb{R}^d$ has density $c \leq p(x) \leq C$ with respect to the Lebesgue measure.
\end{lemma}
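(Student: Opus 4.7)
The plan is to transfer the known eigenvalue decay rates on a spherical subdomain (Lemma~\ref{lem: edr_of_NTKandRFK}) to the Euclidean domain $\mc{X}\subset\mb{R}^{d}$ equipped with the non-uniform measure $\mu$. The bridge is the factorization \eqref{eq: NTK0_and_RFK0}, together with two elementary operator-theoretic facts: (i) conjugation of a positive compact integral operator by a bounded, boundedly invertible operator preserves the eigenvalue decay rate up to $\asymp$ (via Weyl's singular-value inequalities $s_{n}(AXB)\leq\|A\|\|B\|s_{n}(X)$ and its two-sided counterpart when $A,B$ are invertible); and (ii) replacing the reference measure by an equivalent measure whose Radon--Nikodym derivative is bounded above and below also preserves the decay rate.

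\textbf{Step 1: factor out the scaling and change variables.} Set $h(x):=\norm{\widetilde{x}}=\sqrt{1+\norm{x}^{2}}$, which is bounded above and away from zero on the bounded set $\mc{X}$. Put $S:=\phi(\mc{X})\subset\mb{S}^{d}_{+}$ and let $\widetilde{\mu}:=\phi_{*}\mu$, viewed as a measure on $S$. The change-of-variables map $V\colon L^{2}(\mc{X},\mu)\to L^{2}(S,\widetilde{\mu})$, $Vf=f\circ\phi^{-1}$, is an isometric isomorphism, and a direct computation using \eqref{eq: NTK0_and_RFK0} gives
\begin{equation*}
    T_{\NTK,\mc{X},\mu}=M_{h}\,V^{-1}\,T_{\NTK_{0},S,\widetilde{\mu}}\,V\,M_{h},
\end{equation*}
where $M_{h}$ is multiplication by $h$. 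Since $M_{h}$ is bounded with bounded inverse and $V$ is unitary, fact (i) yields $\lambda_{n}(T_{\NTK,\mc{X},\mu})\asymp \lambda_{n}(T_{\NTK_{0},S,\widetilde{\mu}})$.

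\textbf{Step 2: replace $\widetilde{\mu}$ by the uniform measure on $S$.} The map $\phi$ is a smooth diffeomorphism from $\mc{X}$ onto the open subdomain $S$ of $\mb{S}^{d}$, and its Jacobian (relating Lebesgue measure on $\mc{X}$ to the surface measure on $S$) is uniformly bounded above and away from zero on $\overline{\mc{X}}$. Combined with the hypothesis $c\leq p(x)\leq C$, this shows $\widetilde{\mu}$ admits a density $w$ with respect to $\sigma|_{S}$ (the restriction of the uniform measure on $\mb{S}^{d}$) satisfying $c'\leq w\leq C'$. The isometric isomorphism $J\colon L^{2}(S,\sigma|_{S})\to L^{2}(S,\widetilde{\mu})$, $Jf=w^{-1/2}f$, then obeys $J^{-1}T_{\NTK_{0},S,\widetilde{\mu}}J=M_{w^{1/2}}\,T_{\NTK_{0},S,\sigma|_{S}}\,M_{w^{1/2}}$, so fact (i) again produces $\lambda_{n}(T_{\NTK_{0},S,\widetilde{\mu}})\asymp \lambda_{n}(T_{\NTK_{0},S,\sigma|_{S}})$. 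Chaining with Step~1 and invoking Lemma~\ref{lem: edr_of_NTKandRFK} gives $\lambda_{n}(T_{\NTK,\mc{X},\mu})\asymp n^{-(d+1)/d}$. The argument is identical for $\RFK$, with $\RFK_{0}$ in place of $\NTK_{0}$ and exponent $-(d+3)/d$.

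\textbf{Main obstacle.} The computation is mostly bookkeeping once the factorization $\NTK(x,y)=h(x)h(y)\NTK_{0}(\phi(x),\phi(y))$ is in hand; the delicate point is that $T_{\NTK,\mc{X},\mu}$ and $T_{\NTK_{0},S,\widetilde{\mu}}$ act on \emph{different} Hilbert spaces, so operator similarity in the strict sense is unavailable. This is circumvented by phrasing the comparison through singular values of compositions $AXB$ with $A,B$ bounded invertible (equivalently, through the min--max characterization of eigenvalues of positive compact self-adjoint operators), which delivers two-sided $\asymp$ control on each eigenvalue individually. Verifying that the Jacobian of $\phi$ and the density $w$ are bounded above and away from zero—so that Fact (ii) genuinely applies—is where the assumption that $\mc{X}$ is bounded with $C^{\infty}$ boundary and $p$ is pinched between positive constants enters essentially.
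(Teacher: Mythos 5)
Your argument is correct, and it fills in a proof that the paper itself does not spell out: Lemma~\ref{lem: EDR on X} is stated there as a result imported from \citet{li2023statistical}, with no derivation given. Your two steps are exactly the transfer machinery the paper uses in neighboring proofs, so the routes are essentially the same in substance but packaged differently. In the proof of Lemma~\ref{lem: transformation of RKHS} (and again for Lemma~\ref{lem: embed_NTK}), the paper introduces the auxiliary measure $\nu$ with $\frac{\dd\nu}{\dd\mu}=\rho^{2}$, $\rho(x)=\norm{\widetilde{x}}$, and shows that $I\colon f\mapsto\rho\cdot(f\circ\phi)$ carries eigenpairs of $(\NTK_{0},S,\nu\circ\phi)$ to eigenpairs of $(\NTK,\mc{X},\mu)$ with the \emph{same} eigenvalues, after which the discrepancy between $\nu\circ\phi$ and the uniform measure is absorbed by the change-of-measure Lemma~\ref{lem: change_of_measure}; your version instead conjugates $T_{\NTK_{0},S,\widetilde{\mu}}$ by the unitary $V$ and the bounded, boundedly invertible multiplications $M_{h}$, $M_{w^{1/2}}$ and invokes two-sided min--max/Weyl comparisons, which yields only $\asymp$ rather than exact spectral equality—but that is all the lemma asserts, so nothing is lost. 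Your factorizations $T_{\NTK,\mc{X},\mu}=M_{h}V^{-1}T_{\NTK_{0},S,\widetilde{\mu}}VM_{h}$ and $J^{-1}T_{\NTK_{0},S,\widetilde{\mu}}J=M_{w^{1/2}}T_{\NTK_{0},S,\sigma|_{S}}M_{w^{1/2}}$ check out against \eqref{eq: NTK0_and_RFK0}, and the boundedness of $h$, of the Jacobian of $\phi$, and of $w$ on the bounded domain is verified where needed, so the appeal to Lemma~\ref{lem: edr_of_NTKandRFK} closes the argument; the small trade-off is that the paper's exact-isometry route avoids the (standard, but here unproved) singular-value inequalities at the cost of introducing the auxiliary measure and still needing Lemma~\ref{lem: change_of_measure}.
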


\subsection{Basic concepts of Sobolev space}\label{appe: Sobolev}

\paragraph{Sobolev Space of integer power}    Let $\mathcal{X}$ be a open subset of $\mathbb{R}^d$.   Let $m \in \mathbb{N}$, $1\leq p \leq +\infty$. Sobolev space $W^{m,p}(\mathcal{X})$ is defined as a set of function such that 
    \begin{equation}
\norm{D^\alpha f}_{L^p} < +\infty,
    \end{equation} where $\alpha$ is a vector with length $n$ and $D^\alpha f$ is the weak $\alpha$-th partial derivative of $f$.  
    In other words, the definition of $W^{m,p}$ is:
    \begin{equation} W^{m,p}(\mathcal{X}) = \left \{ f \in L^p(\mathcal{X})| D^\alpha f \in {L}^p(\mathcal{X}), \forall |\alpha| \leq m \right\}, \end{equation}
    where $1 \leq p \leq \infty$. Conventionally, when the index $p$ is equal to $2$, we denote $W^{m,p}$ by $H^m$, since it is a Hilbert space. Further, if the index $ m>\frac{d}{2}$, the Sobolev space $H^m$ qualifies as a RKHS and thus embraces the properties of RKHS. In our work, we mainly utilize its property of interpolation as defined in  \eqref{eq: interpolation of RKHS}. Consequently, we first introduce a generalized concept of real interpolation \cite{adams2003sobolev}, as an expansion to the definition in  \eqref{eq: interpolation of RKHS}. 

\paragraph{Real interpolation } For two Bananch spaces $\mc{H}_1$ and $\mc{H}_2$, we use interpolation space to represent a space that lies in between them in some specific way.
We introduce the commonly-used K-method to define real interpolation. Suppose $  0 < s< 1$, $ q \geq 1$. The space generated by their real interpolation $\mc{H} = (\mc{H}_1, \mc{H}_2)_{s,q} $ is defined by following: 
\begin{equation}
    K(t;x) = \inf_{x = x_1 + x_2;x_1 \in \mc{H}_1, x_2 \in \mc{H}_2} \norm{x_1}_{\mc{H}_1} + t \norm{x_2}_{\mc{H}_2},
\end{equation}
and 
\begin{equation}
    \norm{x}_{\mc{H}} = \left(\int_0^\infty (t^{-s} K(t;x) )^q \frac{\dd t}{t}\right)^\frac{1}{q} .
\end{equation}

Based on the definition of real interpolation, we introduce some basic concepts about fractional power Sobolev space. 

\paragraph{Sobolev space of fractional power} Suppose $\mc{X} \in \mb{R}^d$ is a bounded domain with smooth boundary and denote Lebesgue measure by $\mu$.  We can define fractional power Sobolev space through real interpolation (we refer to \cite{sawano2018theory} Chapter 4.2.2 for more details):
\begin{equation}
    H^s(\mc{X}) \coloneqq \left(L^2(\mc{X},\mu), H^m(\mc{X}\right)_{\frac{s}{m},2}
\end{equation}
The fractional power Sobolev space $H^r(\mc{X})$ with $r \geq \frac{d}{2}$ is  also a RKHS \cite{adams2003sobolev}.  Specifically, \citet{steinwart2012mercer}
reveals that for $0 < s <1$, 
\begin{equation}
    [\mc{H}]^s \cong  \left( L^2(\mc{X},\mu), \mc{H} \right)_{s,2}
\end{equation}
for RKHS $\mc{H}$ and the interpolation defined in \eqref{eq: interpolation of RKHS}. Therefore, the results above directly implies that 
\begin{equation}\label{eq: intepolation_of_Sobolev}
    [H^r(\mc{X})]^s = H^{rs}(\mc{X})
\end{equation}
holds for any $r \geq \frac{d}{2}$ and $s>0$.

Up to now, we have introduced the basic properties of Sobolev spaces on $\mathcal{X}$, an open subset of $\mb{R}^d$. For Sobolev spaces defined on more intricate manifolds, such as hyperspheres, owing to the intricate property of Sobolev spaces, numerous equivalent definitions emerges \cite{adams2003sobolev,di2012hitchhikerʼs}.  

We now delineate a kind of definition that will facilitate our subsequent proofs, since we will consider RKHSs on $\mb{S}^d$, like $\HNT_0$ and $\HRF_0$, which are the RKHSs associated with $\NTK_0$ and $\RFK_0$, respectively. Such definition can form a linkage with the Sobolev spaces defined on $\mb{S}^d$ and on domain $\mc{X}\subset \mb{R}^d$, which is also utilized in \citet{haas2023mind}. Our exposition begins with the characterization of a manifold.

\paragraph{Trivilization}  Define a trivialization of a Riemannian manifold $(M,g)$ with bounded geometry of dimension $d$, which consists three part. The first part is some locally finite open covering $\{ U_\alpha \}_{\alpha \in I}$. The second part is the charts $\{\kappa_\alpha\}_{\alpha \in I}$ which consists of smooth diffeomorphism $\kappa_\alpha: V_\alpha \subset \mb{R}^d \to U_\alpha$. The third part is a partion of unity $h_\alpha$ such that $\supp(h_\alpha) \subset U_\alpha$, $\sum_{\alpha \in I} h_\alpha =1 $ and $ 0 \leq h_\alpha \leq 1$. 

In our case, we write a trivialization of $\mb{S}^d$, which, is a manifold of dimension $d$. We write $U_1 = \{ x_{d+1} <\epsilon | x \in \mb{S}^d  \} $ and $ U_2 = \{ x_{d+1} >  \frac{\epsilon}{2} | x\in \mb{S}^d  \} $ for a small fixed $\epsilon > 0$. Let $\phi_1 :  U_1 \to \mb{R}^d $  and $\phi_2 : U_2 \to \mb{R}^d $ be  stereographic projections with respect to $x_1 = (0,0,\cdots,1)$ and $x_2 = (0,0,\cdots,-1)$ , respectively. Namely, they are  
\begin{equation}\label{eq: phi_1} \phi_1 :(x_1,x_2,\cdots,x_{d+1}) \mapsto \frac{1}{1+x_{d+1}}(x_1,x_2,\cdots,x_d)\end{equation}
and 
\begin{equation}\phi_2 :(x_1,x_2,\cdots,x_{d+1}) \mapsto  \frac{1}{1-x_{d+1}}(x_1,x_2,\cdots,x_d). \end{equation}
Finally, we can find $C^\infty$ smooth functions $h_1$ and $h_2$ such that $h_1|_{\mb{S}^d_+} = 1$. 
For the simple trivialization above, we can directly verify that it meets the admissible trivialization condition (details see \citet{grosse2013sobolev}).  Thus we can apply Theorem 14 of \cite{grosse2013sobolev} to define
the norm of Sobolev space on $\mb{S}^d$:
\begin{equation}\label{eq: equivalence of Sobolev space}
    \norm{f}_{H^s(\mb{S}^d)} = \left( \norm{(h_1 f) \circ \phi_1^{-1}}^2_{H^s(\mb{R}^d)} + \norm{(h_2 f) \circ \phi_2^{-1}}^2_{H^s(\mb{R}^d)}  \right)^\frac{1}{2},
\end{equation}
for distribution $f \in \mc{D}'(\mb{S}^d)$ \cite{strichartz2003guide}. 
It gives a kind of equivalent definition of Sobolev space on $\mb{S}^d$. 

\subsection{Relationship between dot-product kernel and Sobolev space}
Previous work observed that, for dot-product kernels defined on sphere with polynomial eigenvalue decay rate, their RKHSs are equivalent to Sobolev spaces:
\begin{lemma}[\citet{hubbert2022sobolev} Section 3]\label{lem: equi_RKHS_Sobolev}
    For a dot-product kernel $k$ defined on $\mb{S}^d$ and its RKHS $\mc{H}_k$, if the coefficients of spherical harmonic polynomials satisfies $\mu_n \asymp n^{t}$ for some $t\geq d$, then there exists an equivalence between RKHS and Sobolev space:
    $$ \mc{H}_k \cong H^{\frac{t}{2}}(\mb{S}^d).$$
\end{lemma}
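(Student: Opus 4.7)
The plan is to identify both $\mc{H}_k$ and the Sobolev space $H^{t/2}(\mb{S}^d)$ with weighted $\ell^2$ spaces indexed by spherical-harmonic coefficients, and then compare the two weight sequences under the decay hypothesis $\mu_n \asymp n^{-t}$. Concretely, expand any $f \in L^2(\mb{S}^d,\sigma)$ as $f = \sum_{n \ge 0} \sum_{l=1}^{a_n} \hat f_{n,l} Y_{n,l}$. From the Mercer decomposition \eqref{eq: decom_harmonic}, the family $\{\sqrt{\mu_n}\, Y_{n,l}\}$ is an orthonormal basis of $\mc{H}_k$, so $f \in \mc{H}_k$ iff $\sum_{n,l} \mu_n^{-1} |\hat f_{n,l}|^2 < \infty$, and this series equals $\|f\|_{\mc{H}_k}^2$. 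This identifies the RKHS with a weighted coefficient space whose weights are exactly $\mu_n^{-1}$.

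Next, I would bridge the trivialization-based norm \eqref{eq: equivalence of Sobolev space} to a Laplace--Beltrami description of $H^{t/2}(\mb{S}^d)$. Because the $Y_{n,l}$ are eigenfunctions of $-\Delta_{\mb{S}^d}$ with eigenvalues $n(n+d-1) \asymp (1+n)^2$, functional calculus applied to the Bessel potential $(1-\Delta_{\mb{S}^d})^{t/4}$ yields the equivalent norm $\|f\|_{H^{t/2}(\mb{S}^d)}^2 \asymp \sum_{n,l}(1+n)^t |\hat f_{n,l}|^2$. This identification of the chart-based Sobolev norm on a compact Riemannian manifold with the spectral one is classical and can be quoted from, e.g., the Strichartz reference already cited in Appendix~\ref{appe: Sobolev}.

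With both spaces realized as weighted coefficient spaces over the same orthonormal basis, the hypothesis $\mu_n \asymp n^{-t}$ gives $\mu_n^{-1} \asymp (1+n)^t$ for $n \ge 1$, while the finitely many low-frequency terms (in particular $n=0$, where $\mu_0 > 0$) contribute only bounded multiplicative constants. Hence the RKHS norm and the Sobolev norm are equivalent, the underlying sets coincide, and the assertion $\mc{H}_k \cong H^{t/2}(\mb{S}^d)$ follows.

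The main obstacle is the bridge step: going from the chart/partition-of-unity definition of $H^{t/2}(\mb{S}^d)$ that the preliminaries adopt to the spherical-harmonic characterization needed for the coefficient-by-coefficient comparison. Proving this from scratch would require combining a localization argument with the mapping properties of stereographic projection and the spectral theorem for $-\Delta_{\mb{S}^d}$; fortunately this equivalence is standard on compact manifolds, so in practice I would invoke it as a black box and devote the write-up to the comparison of weights, which is the content actually relevant to the paper's applications (Lemmas~\ref{lem: edr_of_NTK0andRFK0}--\ref{lem: EDR on X}) downstream.
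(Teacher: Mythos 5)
This lemma is not proved in the paper at all --- it is imported verbatim from \citet{hubbert2022sobolev} --- and your argument is exactly the standard one underlying that reference: Mercer identifies $\mc{H}_k$ with the coefficient space weighted by $\mu_n^{-1}$, the Bessel-potential/spectral description of $H^{t/2}(\mb{S}^d)$ (equivalent to the chart-based definition on a compact manifold) gives weights $(1+n)^{t}$, and the decay hypothesis matches the two up to constants. The proposal is correct; the only caveat worth recording is that the asymptotic $\mu_n \asymp n^{-t}$ must be supplemented by strict positivity of every $\mu_n$ (including the finitely many low-degree modes), since otherwise $\mc{H}_k$ would omit a finite-dimensional subspace of $H^{t/2}(\mb{S}^d)$ and the equivalence of sets would fail.
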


Recall that $\NTK_0$ and $\RFK_0$ are both dot-product kernels with polynomial eigenvalue decay rate by Lemma \ref{lem: edr_of_NTK0andRFK0}. Therefore, Lemma \ref{lem: equi_RKHS_Sobolev} provides the equivalence between $\HNT_0$, $\HRF_0$ and the corresponding Sobolev spaces on $\mb{S}^d$. We have the following proposition: 
\begin{proposition}\label{prop: equivalence of HNT SS}
We have the following equivalence:
\begin{equation}
        \HNT_0 \cong H^\frac{d+1}{2}(\mb{S}^d) \qand \HRF_0 \cong H^\frac{d+3}{2}(\mb{S}^d). 
\end{equation}
\end{proposition}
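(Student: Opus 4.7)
The plan is to derive Proposition \ref{prop: equivalence of HNT SS} as an immediate corollary of Lemma \ref{lem: equi_RKHS_Sobolev} combined with the eigenvalue decay rates recorded in Lemma \ref{lem: edr_of_NTK0andRFK0}. There is essentially no new analytic content needed: the reduction of general dot-product RKHSs on $\mb{S}^d$ with polynomial spherical-harmonic decay to a Sobolev space of matching smoothness is precisely the content of Lemma \ref{lem: equi_RKHS_Sobolev}, and the kernels $\NTK_0$ and $\RFK_0$ fit this template.

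First I would verify the structural hypothesis: the explicit formulas in \eqref{eq: dot_product_NTKandRFK} show that both $\NTK_0(x,y)$ and $\RFK_0(x,y)$ depend only on $\langle x,y\rangle$, so both are dot-product kernels on $\mb{S}^d$. Hence they admit Mercer expansions in the spherical harmonic basis of the form \eqref{eq: decom_harmonic}, and Lemma \ref{lem: equi_RKHS_Sobolev} is applicable once the spherical-harmonic coefficients $\mu_n$ are identified.

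Second I would read off the rates $\mu_n(\NTK_0)\asymp n^{-(d+1)}$ and $\mu_n(\RFK_0)\asymp n^{-(d+3)}$ from Lemma \ref{lem: edr_of_NTK0andRFK0}. Reconciling conventions, the exponent $t$ appearing in Lemma \ref{lem: equi_RKHS_Sobolev} is thus $t=d+1$ for $\NTK_0$ and $t=d+3$ for $\RFK_0$; in both cases $t\geq d$, so the hypothesis of Lemma \ref{lem: equi_RKHS_Sobolev} is met. Applying that lemma twice gives
\begin{equation*}
\HNT_0 \;\cong\; H^{\frac{d+1}{2}}(\mb{S}^d), \qquad \HRF_0 \;\cong\; H^{\frac{d+3}{2}}(\mb{S}^d),
\end{equation*}
which is the statement to be proved.

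The only genuine obstacle is bookkeeping: one must take care that the sign conventions of the polynomial exponents match between Lemma \ref{lem: equi_RKHS_Sobolev} (which writes decay as $n^{t}$ with $t\geq d$, understood as the magnitude of the decay exponent) and Lemma \ref{lem: edr_of_NTK0andRFK0} (which writes $n^{-(d+1)}$ and $n^{-(d+3)}$). Once the matching $t=d+1$ and $t=d+3$ is made, the proposition follows in a single line, with no additional computation required.
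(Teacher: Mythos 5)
Your proposal is correct and matches the paper's own argument: the paper likewise obtains Proposition \ref{prop: equivalence of HNT SS} by noting that $\NTK_0$ and $\RFK_0$ are dot-product kernels, invoking Lemma \ref{lem: edr_of_NTK0andRFK0} for the spherical-harmonic coefficient decay rates $n^{-(d+1)}$ and $n^{-(d+3)}$, and then applying Lemma \ref{lem: equi_RKHS_Sobolev} with $t=d+1$ and $t=d+3$. Your extra care about the sign convention for the exponent is a sensible bookkeeping remark but introduces nothing beyond the paper's one-line deduction.
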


\subsection{Interpolation of $\HNT_0$ and $\HRF_0$}

In this subsection, we aims to provide the interpolation relationship between RKHSs associated with $\NTK_0$ and $\RFK_0$, on a subdomain of $\mb{S}^d$ . We remind that if we consider the case on $\mb{S}^d$, i.e. $\HNT_0$ and $\HRF_0$, the conclusion is direct since they are both dot-product kernels and share the same orthogonal basis $\{ Y_{n,l} \}$ as introduced in \eqref{eq: decom_harmonic}. 

Suppose $s\geq \frac{d}{2}$ and $\Omega$ be a subdomain of $\mb{S}^d$ with $C^\infty$ smooth boundary. With a little abuse of notation, we define $H^s(\Omega)$ as the RKHS $H^s(\mb{S}^d)$ restricted to $\Omega $ in the way of Lemma \ref{thm: P.351.}.
For an injection $\varphi: \Omega \to \mb{R}^d $ , we define $ H^s(\varphi( \Omega)) \circ \varphi \coloneqq \{ f\circ \varphi | f \in   H^s(\varphi( \Omega)) \}  $ with norm $\norm{f\circ \varphi} = \norm{f}_{H^s(\varphi(\Omega))} $. Recall that $\phi_1$  is the stereographic projection defined in \eqref{eq: phi_1},  now let us show the equivalence of $H^s(\mb{S}^d_+) $ and $ H^s(\phi_1( \mb{S}^d_+)) \circ \phi_1$. 
\begin{lemma}[Equivalence as sets]\label{lem: eqi_as_set} Suppose $s \geq \frac{d}{2}$. 
    At the aspect of sets, we have $H^s(\mb{S}^d_+) = H^s(\phi_1( \mb{S}^d_+)) \circ \phi_1$.
\end{lemma}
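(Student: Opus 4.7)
My plan is to prove the two set inclusions separately, using the trivialization-based norm \eqref{eq: equivalence of Sobolev space} together with the fact that $h_1 \equiv 1$ on $\mb{S}^d_+$ (so $h_2 \equiv 0$ there), and the classical invariance of $H^s(\mb{R}^d)$ under (i) multiplication by $C^\infty$ cutoffs with compact support and (ii) composition with smooth diffeomorphisms whose derivatives are bounded on a precompact set.

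For the forward inclusion $H^s(\mb{S}^d_+)\subseteq H^s(\phi_1(\mb{S}^d_+))\circ\phi_1$, I would take a representative $F\in H^s(\mb{S}^d)$ with $F|_{\mb{S}^d_+}=f$. On $\mb{S}^d_+$ the cutoff $h_1$ is identically $1$, so $f = (h_1 F)|_{\mb{S}^d_+}$. By the very definition \eqref{eq: equivalence of Sobolev space}, $(h_1 F)\circ\phi_1^{-1}\in H^s(\mb{R}^d)$, and its restriction to $\phi_1(\mb{S}^d_+)$ lies in $H^s(\phi_1(\mb{S}^d_+))$ (in the sense of Lemma \ref{thm: P.351.}). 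Writing $g:=(h_1 F)\circ\phi_1^{-1}\big|_{\phi_1(\mb{S}^d_+)}$, we then have $f = g\circ\phi_1$, establishing the inclusion.

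For the reverse inclusion, given $g\in H^s(\phi_1(\mb{S}^d_+))$, first apply a Stein/Calderón extension to obtain $\tilde g\in H^s(\mb{R}^d)$ extending $g$ (applicable because $\phi_1(\mb{S}^d_+)$ is a bounded domain with $C^\infty$ boundary). Then define
\begin{equation*}
F(x) := \begin{cases} h_1(x)\,\tilde g(\phi_1(x)), & x\in U_1,\\ 0, & x\notin U_1,\end{cases}
\end{equation*}
which is unambiguous since $\supp h_1\subset U_1$. I need to verify $F\in H^s(\mb{S}^d)$, i.e.\ that $(h_1 F)\circ\phi_1^{-1}$ and $(h_2 F)\circ\phi_2^{-1}$ both lie in $H^s(\mb{R}^d)$. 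The first equals $(h_1\circ\phi_1^{-1})^2\,\tilde g$, which is in $H^s(\mb{R}^d)$ because $(h_1\circ\phi_1^{-1})^2$ is $C^\infty$ with compact support (multiplier on $H^s(\mb{R}^d)$). The second is supported in the precompact set $\phi_2(U_1\cap U_2)$ and equals $\bigl[h_2(h_1\circ \phi_2^{-1})\bigr]\cdot\bigl(\tilde g\circ(\phi_1\circ\phi_2^{-1})\bigr)$; here $\phi_1\circ\phi_2^{-1}$ is a smooth diffeomorphism with bounded derivatives on the precompact support, so composition preserves $H^s$, and multiplication by the smooth compactly supported cutoff again preserves it. Since $h_1\equiv 1$ on $\mb{S}^d_+$, we have $F|_{\mb{S}^d_+}=\tilde g\circ\phi_1|_{\mb{S}^d_+}=g\circ\phi_1$, proving $g\circ\phi_1\in H^s(\mb{S}^d_+)$.

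The main technical obstacle is the bookkeeping in the second half: the cutoff and chart-transition argument used to control $(h_2 F)\circ\phi_2^{-1}$. One has to invoke the precompactness of $\supp(h_1 h_2)$ to ensure that the transition map $\phi_1\circ\phi_2^{-1}$ and all of its derivatives are uniformly bounded on the support in question before appealing to the standard composition and multiplication theorems for $H^s(\mb{R}^d)$ (the corresponding $s\ge d/2$ regime is covered by classical multiplier results, cf.\ the Triebel/Besov theory). Once these classical tools are in hand, both inclusions reduce to short direct computations anchored in the definition \eqref{eq: equivalence of Sobolev space}.
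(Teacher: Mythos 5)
Your proof is correct. The forward inclusion is essentially identical to the paper's (extend $f$ to $F\in H^s(\mb{S}^d)$, use $h_1\equiv 1$ on $\mb{S}^d_+$ and the definition \eqref{eq: equivalence of Sobolev space} to get $(h_1F)\circ\phi_1^{-1}\in H^s(\mb{R}^d)$, then restrict). The reverse inclusion is where you genuinely diverge. You build the extension $F=h_1\cdot(\tilde g\circ\phi_1)$ and then verify \emph{both} chart terms in \eqref{eq: equivalence of Sobolev space}, which forces you to control $(h_2F)\circ\phi_2^{-1}$ via the transition map $\phi_1\circ\phi_2^{-1}$ on the compact set $\phi_2(\supp h_1\cap\supp h_2)\subset\phi_2(U_1\cap U_2)$, invoking the classical invariance of $H^s(\mb{R}^d)$ under smooth compactly supported multipliers and under composition with diffeomorphisms having bounded derivatives there. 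The paper instead sidesteps the chart transition entirely: it chooses a cutoff $\psi\in C^\infty(\mb{R}^d)$ with $\psi\equiv 1$ on $\phi_1(\mb{S}^d_+)$ and $\supp\psi$ disjoint from $\phi_1(U_2)$, so that the candidate extension $(\psi\tilde g)\circ\phi_1$ vanishes on $\supp h_2$ and the $h_2$-term in the norm is identically zero; only the multiplier property is then needed. Your route costs the extra composition-invariance theorem (which is anyway the content of the admissibility of the trivialization in the cited reference, so nothing circular), while the paper's support trick is more self-contained relative to its definition; both are sound. Two small points of hygiene: the expression for the second chart term should read $\bigl[(h_2h_1)\circ\phi_2^{-1}\bigr]\cdot\bigl(\tilde g\circ(\phi_1\circ\phi_2^{-1})\bigr)$ rather than leaving $h_2$ uncomposed with $\phi_2^{-1}$, and you should state that compactness of $\supp h_1\cap\supp h_2$ inside $U_1\cap U_2$ (from closedness of the supports in the compact sphere) is what makes the transition map's derivatives uniformly bounded.
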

\begin{proof}
    For a $f \in  H^s(\mb{S}^d_+)$, we have  an extension $\norm{f'}_{H^s(\mb{S}^d)} < \infty$ such $f'|_{\mb{S}^d_+} = f$. Thus 
    \begin{equation}
        \norm{ (h_1 f') \circ \phi_1^{-1} }_{H^s(\mb{R}^d)} \leq \norm{f \circ \phi_1^{-1}}_{H^s(\mb{R}^d)} < \infty 
    \end{equation}
which implies $(h_1 f') \circ \phi_1^{-1} \in H^s(\mb{R}^d)$. Then we have $[(h_1 f') \circ \phi_1^{-1}]|_{\phi_1(\mb{S}^d_+)} \in  H^s(\phi_1(\mb{S}^d_+))$.  Since $f = f'|_{\mb{S}^d_+}$ and $h_1|_{\mb{S}^d_+} = 1$, we have $f \circ \phi_1^{-1} \in  H^s(\phi_1(\mb{S}^d_+))$. 

In the converse direction, we assume $f \in H^s( \phi_1(\mb{S}^d_+))$. Then we know there exists $f' \in H^s(\mb{R}^d)$ such that $f'|_{\phi_1(\mb{S}^d_+)} = f$. 
Now we want to show $  f \circ \phi_1 \in H^s(\mb{S}^d_+)$. Define a $\psi \in C^\infty( \mb{R}^d ) $ such that $\psi(\phi_1(\mb{S}^d_+)) \equiv 1$ and $\psi ( ( \phi_1(U_1/U_2) )^c) \equiv 0$.
According to \eqref{eq: equivalence of Sobolev space}, we have 
\begin{equation}
     \norm{f \circ \phi_1}_{H^s(\mb{S}^d_+)} \leq \norm{(\psi \cdot f') \circ \phi_1}_{H^s(\mb{S}^d)} =\norm{(h_1\circ \phi_1^{-1}) \cdot \psi \cdot f'}_{H^s(\mb{R}^d)}  < \infty
\end{equation}
Thus we finish the proof.
\end{proof}

\begin{lemma}[Equivalence as space]\label{lem: equi_Sobolev_S} Suppose $s \geq \frac{d}{2}$. 
    At the aspect of spaces, we have $H^s(\mb{S}^d_+ ) \cong H^s(\phi_1(\mb{S}^d_+)) \circ \phi_1  $.
\end{lemma}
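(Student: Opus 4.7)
The plan is to upgrade the set-theoretic identity from Lemma \ref{lem: eqi_as_set} to an equivalence of norms, i.e.\ establish two-sided bounds
\[
c\,\|f\|_{H^s(\phi_1(\mb{S}^d_+))}\;\le\;\|f\circ\phi_1\|_{H^s(\mb{S}^d_+)}\;\le\;C\,\|f\|_{H^s(\phi_1(\mb{S}^d_+))}.
\]
Since the underlying sets already coincide, this reduces the lemma to showing that the diffeomorphism $\phi_1$ and the fixed $C^\infty$ partition-of-unity function $h_1$ both act boundedly on the Sobolev scale. I would quote once at the outset the two standard facts I need: (i) pullback by a $C^\infty$ diffeomorphism whose derivatives of all orders are uniformly bounded on the closure of the relevant domain is a bounded isomorphism of $H^s(\mathbb{R}^d)$ onto itself; and (ii) multiplication by a $C^\infty$ function with compact support (or bounded $C^k$ norms) is a bounded operator on $H^s$ for any $s\ge 0$.

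For the upper bound, I would essentially quantify the argument already appearing in the proof of Lemma \ref{lem: eqi_as_set}. Given $f\in H^s(\phi_1(\mb{S}^d_+))$, pick an extension $f'\in H^s(\mathbb{R}^d)$ with $\|f'\|_{H^s(\mathbb{R}^d)}\le \|f\|_{H^s(\phi_1(\mb{S}^d_+))}+\varepsilon$. Choose a fixed $\psi\in C^\infty_c(\mathbb{R}^d)$ with $\psi\equiv 1$ on $\phi_1(\mb{S}^d_+)$ and $\supp\psi\subset\phi_1(U_1)$, so that $(h_1\circ\phi_1^{-1})\cdot\psi\cdot f'$ is an $H^s(\mathbb{R}^d)$ function supported inside $\phi_1(U_1)$. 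Pulling back via $\phi_1$ produces a global $H^s(\mb{S}^d)$ extension of $f\circ\phi_1$; applying the intrinsic norm \eqref{eq: equivalence of Sobolev space} and the two standard facts gives a uniform bound by a constant multiple of $\|f'\|_{H^s(\mathbb{R}^d)}$. Taking the infimum over extensions and then letting $\varepsilon\to 0$ yields the desired inequality.

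For the reverse inequality, I would exploit the fact that $h_1\equiv 1$ on $\mb{S}^d_+$. Given $g=f\circ\phi_1\in H^s(\mb{S}^d_+)$, pick a near-infimal extension $G\in H^s(\mb{S}^d)$. By the chart-wise definition \eqref{eq: equivalence of Sobolev space}, $(h_1 G)\circ\phi_1^{-1}\in H^s(\mathbb{R}^d)$ with norm controlled by $\|G\|_{H^s(\mb{S}^d)}$. Because $h_1\equiv 1$ on $\mb{S}^d_+$, the restriction of this pullback to $\phi_1(\mb{S}^d_+)$ is exactly $f$. Thus $f$ admits an $H^s(\mathbb{R}^d)$ extension with controlled norm, and the infimum definition of $\|f\|_{H^s(\phi_1(\mb{S}^d_+))}$ closes the loop.

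The main obstacle I foresee is purely bookkeeping: keeping track of the supports so that the multiplications by $\psi$, $h_1$, and $h_1\circ\phi_1^{-1}$ really do land in $C^\infty_c$ of a domain on which $\phi_1$ (or $\phi_1^{-1}$) is a smooth diffeomorphism with uniformly bounded derivatives, so that the two invariance facts apply. The assumption $s\ge d/2$ plays no role beyond ensuring that $H^s$ is an RKHS so that the notion $H^s(\phi_1(\mb{S}^d_+))\circ\phi_1$ is well defined as a function space; once the multiplier/diffeomorphism machinery is in place, the inequalities themselves are uniform in $s\ge 0$ and the conclusion $\HNT_0\cong H^{(d+1)/2}(\mb{S}^d)$-type identifications can be transferred to subdomains, as will be needed later.
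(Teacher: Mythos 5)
Your proposal is correct, but it takes a genuinely different route from the paper. The paper's proof is a soft, two-step argument: it first invokes Lemma \ref{lem: eqi_as_set} for equality of the two spaces as sets, and then observes that both $H^s(\mb{S}^d_+)$ and $H^s(\phi_1(\mb{S}^d_+))\circ\phi_1$ are RKHSs (this is where $s\geq\frac{d}{2}$ is used), so that norm convergence implies pointwise convergence and any sequence converging in both norms must have the same limit; the closed graph theorem then forces the identity map to be bounded in each direction, with no constants ever computed. You instead prove the two-sided norm bounds directly, by combining the extension/infimum characterization of the restricted norms with the standard facts that pullback by a smooth diffeomorphism with uniformly bounded derivatives and multiplication by a fixed $C^\infty_c$ cutoff are bounded on $H^s(\mathbb{R}^d)$. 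Your reverse inequality in particular is clean: the chart-wise norm \eqref{eq: equivalence of Sobolev space} hands you $(h_1G)\circ\phi_1^{-1}\in H^s(\mathbb{R}^d)$ for free, and $h_1\equiv 1$ on $\mb{S}^d_+$ makes its restriction equal to $f$. What each approach buys: the paper's argument is shorter and sidesteps all support bookkeeping, but it is non-constructive and genuinely needs $s\geq\frac{d}{2}$ to identify the limits; your argument is longer and requires the multiplier/diffeomorphism machinery to be checked on the correct overlaps (the transition-map term involving $\phi_2$ in \eqref{eq: equivalence of Sobolev space} in particular), but it is quantitative and, as you note, works uniformly for all $s\geq 0$. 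Your observation that $s\geq\frac{d}{2}$ plays no essential role in your proof is accurate precisely because you avoid the closed-graph step where the paper uses it.
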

\begin{proof}
    By Lemma \ref{lem: eqi_as_set}, we know $H^s(\mb{S}^d_+) \cong H^s(\phi_1(\mb{S}^d_+)) \circ \phi_1$ as sets. Since $H^s(\mb{S}^d_+)$ and $H^s(\phi_1(\mb{S}^d_+)) \circ \phi_1$ are both RKHSs, we can finish the proof by closed graph theorem. 
    
    For notational simplicity, 
    denote by $\mc{H}_1  = H^s(\mb{S}^d_+)$ and $\mc{H}_2 =  H^s(\phi_1(\mb{S}^d_+)) \circ \phi_1$. Define the canonical map $I: \mc{H}_1 \to \mc{H}_2$ as $I: h \mapsto h$. Let $\{h_n\}_{n \in \mb{N}}$ be a sequence such that there exists $h \in \mc{H}_1$ and $g \in \mc{H}_2$ where $h_n \to h$ in $\mc{H}_1$ and $ h_n = Ih_n \to g$ in $\mc{H}_2$. It implies that $h = g$. Therefore, closed graph theorem shows that the linear operator $I$ is bounded, which means that $\norm{h}_{\mc{H}_1} \leq C \norm{h}_{\mc{H}_2}$ holds for some positive constant $C$ and any $h \in \mc{H}_1$. We can also prove $\norm{h}_{\mc{H}_2} \leq C' \norm{h}_{\mc{H}_1}$ for any $h$ in the same way.    Consequently, the lemma is proved. 
\end{proof}

Now we come back to our network case. Let $S \coloneqq \phi(\mc{X}) \subset \mb{S}^d_+$ where $\mc{X}$ is the set from which data $x$ is sampled, and $\phi$ is used in \eqref{eq: NTK0_and_RFK0}. Since the boundary of $\mc{X}$ is $C^\infty$ smooth, we know that $S$ is $C^\infty$ smooth. If we combine Lemma \ref{lem: equi_RKHS_Sobolev}, Lemma \ref{lem: equi_Sobolev_S} and Proposition \ref{prop: RKHS_restriction}, then we can directly show the following lemma: 
\begin{lemma}\label{lem: RFK and NTK to Sobolev}
    Define $\mc{X}_1 = \phi_1(S)$. For $\NTK_0$ and $\RFK_0$ defined on $S$, we have the following equivalence:
    \begin{equation}
\begin{aligned}
    \HRF_0(S) &\cong H^{\frac{d+3}{2}}(\mc{X}_1) \circ \phi_{1}, \quad \text{and}
    \\ \HNT_0(S) &\cong H^{\frac{d+1}{2}}(\mc{X}_1) \circ \phi_{1}.
\end{aligned}
\end{equation}
\end{lemma}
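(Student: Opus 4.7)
The plan is to chain together three equivalences already present in the excerpt. First, I invoke Proposition \ref{prop: equivalence of HNT SS} (which itself combines Lemma \ref{lem: equi_RKHS_Sobolev} with the spherical-harmonic decay rates of Lemma \ref{lem: edr_of_NTK0andRFK0}) to obtain the global identifications $\HNT_0 \cong H^{(d+1)/2}(\mb{S}^d)$ and $\HRF_0 \cong H^{(d+3)/2}(\mb{S}^d)$. Second, I restrict both sides to the subdomain $S \subset \mb{S}^d_+$: by Proposition \ref{prop: RKHS_restriction}, restricting equivalent RKHSs to the same subdomain (with the standard quotient norm $\|g\|_{\mc{H}|_S} = \inf\{\|f\|_{\mc{H}} : f|_S = g\}$) yields equivalent restricted RKHSs, so $\HNT_0(S) \cong H^{(d+1)/2}(\mb{S}^d)|_S$ and $\HRF_0(S) \cong H^{(d+3)/2}(\mb{S}^d)|_S$. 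Third, I transport to $\mb{R}^d$ via the stereographic chart $\phi_1$ to conclude $H^s(\mb{S}^d)|_S \cong H^s(\mc{X}_1) \circ \phi_1$ for $s \in \{(d+1)/2, (d+3)/2\}$, which completes the chain.

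The third step is the only one requiring genuine work; it is essentially the argument of Lemma \ref{lem: equi_Sobolev_S} adapted from the hemisphere $\mb{S}^d_+$ to the open subset $S$. Because $\mc{X}$ is bounded with $C^\infty$ boundary, $\overline{S}$ is a compact subset of $\mb{S}^d_+$ on which $\phi_1$ is a smooth diffeomorphism with uniformly bounded derivatives and inverse derivatives. For equality as sets: given $f \in H^s(\mb{S}^d)|_S$ with extension $f' \in H^s(\mb{S}^d)$, the trivialization formula \eqref{eq: equivalence of Sobolev space} gives $(h_1 f') \circ \phi_1^{-1} \in H^s(\mb{R}^d)$, and since $h_1 \equiv 1$ on $\mb{S}^d_+ \supset \overline{S}$, its restriction to $\mc{X}_1$ coincides with $f \circ \phi_1^{-1}$ and lies in $H^s(\mc{X}_1)$. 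Conversely, given $g \in H^s(\mc{X}_1)$, extend to $\tilde g \in H^s(\mb{R}^d)$, multiply by a smooth cutoff $\psi$ supported in $\phi_1(U_1)$ with $\psi \equiv 1$ on $\overline{\mc{X}_1}$, and pull back through $\phi_1$ (composed with $h_1$) to produce an element of $H^s(\mb{S}^d)$ whose restriction to $S$ equals $g \circ \phi_1$. Once both inclusions hold set-theoretically, the closed graph theorem applied to the identity map between the two RKHSs (both complete with continuous point evaluations) yields equivalence of norms, exactly as at the end of the proof of Lemma \ref{lem: equi_Sobolev_S}.

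The main obstacle is verifying that the cutoff multiplication and the pullback $f \mapsto f \circ \phi_1$ act boundedly on $H^s$ for the fractional values $s = (d+1)/2$ and $(d+3)/2$. This, however, reduces to standard Sobolev multiplier estimates together with the smoothness and uniform non-degeneracy of the Jacobian of $\phi_1$ on the compact set $\overline{S}$; no new analytic input beyond what is already used in Lemma \ref{lem: equi_Sobolev_S} is needed. All remaining steps are bookkeeping in light of Proposition \ref{prop: equivalence of HNT SS}, Proposition \ref{prop: RKHS_restriction}, and Lemma \ref{lem: equi_Sobolev_S}.
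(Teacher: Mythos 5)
Your proposal is correct and follows essentially the same route as the paper, which proves the lemma by combining Lemma~\ref{lem: equi_RKHS_Sobolev} (i.e.\ Proposition~\ref{prop: equivalence of HNT SS}), the restriction property of Proposition~\ref{prop: RKHS_restriction}, and the stereographic-chart equivalence of Lemma~\ref{lem: equi_Sobolev_S}. The only cosmetic difference is that you re-run the cutoff/closed-graph argument of Lemma~\ref{lem: equi_Sobolev_S} directly on $S$ rather than restricting the hemisphere statement to $S$, which amounts to the same bookkeeping.
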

Now we can obtain the interpolation relationship between $\HRF_0(S)$ and $\HNT_0(S)$.
\begin{lemma}\label{lem: interpolation of HRF0 and HNT0}
    Suppose $s  \geq  0$. We have $$ [\HNT_0(S)]^s \cong [\HRF_0(S)]^\frac{s(d+1)}{d+3} $$
\end{lemma}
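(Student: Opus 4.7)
The plan is to reduce the claim to the interpolation identity for Sobolev spaces on Euclidean domains \eqref{eq: intepolation_of_Sobolev}, by transporting everything along the diffeomorphism $\phi_1$ using the equivalences already established in Lemma~\ref{lem: RFK and NTK to Sobolev}.

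First, I would recall that the paper identifies $[\mc{H}]^s$ with the real interpolation space $(L^2(S,\sigma),\mc{H})_{s,2}$ for $0<s<1$, and that $[\mc{H}]^s$ for $s\geq 1$ can be treated analogously (or by iterating the standard embedding scale). So the question becomes one about real interpolation applied to two pairs of Hilbert spaces. A standard functional-analytic fact is that real interpolation respects equivalence: if $X_i \cong X_i'$ with equivalent norms ($i=1,2$), then $(X_1,X_2)_{s,q}\cong (X_1',X_2')_{s,q}$. So I need only track the pullback by $\phi_1$ at the level of the $L^2$ space and at the level of the Sobolev/RKHS space.

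Second, I would verify the pullback equivalence of the $L^2$ spaces. Because $\phi_1$ is a smooth diffeomorphism between the compact, $C^\infty$-smoothly bounded set $S\subset \mb{S}^d_+$ and $\mc{X}_1=\phi_1(S)$, its Jacobian is bounded above and below, so $L^2(S,\sigma)\cong L^2(\mc{X}_1,\mr{Leb})\circ\phi_1$ as Hilbert spaces. Combined with Lemma~\ref{lem: RFK and NTK to Sobolev}, this yields
\begin{align*}
[\HNT_0(S)]^s
&\cong (L^2(S,\sigma),\HNT_0(S))_{s,2}
\\ &\cong \bigl( L^2(\mc{X}_1,\mr{Leb}),\, H^{(d+1)/2}(\mc{X}_1) \bigr)_{s,2}\circ \phi_1,
\end{align*}
and similarly
\begin{align*}
[\HRF_0(S)]^{\frac{s(d+1)}{d+3}}
&\cong \bigl( L^2(\mc{X}_1,\mr{Leb}),\, H^{(d+3)/2}(\mc{X}_1) \bigr)_{\frac{s(d+1)}{d+3},2}\circ \phi_1.
\end{align*}

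Third, I would apply the Sobolev interpolation identity \eqref{eq: intepolation_of_Sobolev}, which was stated for $r\geq d/2$ and $s>0$, together with the characterization of $[H^r(\mc{X}_1)]^t$ via real interpolation with $L^2$. This gives on one hand $H^{s(d+1)/2}(\mc{X}_1)$ and on the other hand $H^{\frac{d+3}{2}\cdot\frac{s(d+1)}{d+3}}(\mc{X}_1)=H^{s(d+1)/2}(\mc{X}_1)$: the two exponents match by construction. Equivalent Sobolev spaces on $\mc{X}_1$ pull back to equivalent spaces on $S$ along $\phi_1$, so $[\HNT_0(S)]^s \cong [\HRF_0(S)]^{s(d+1)/(d+3)}$ follows. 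For $s\geq 1$ (where the real interpolation characterization no longer applies directly), I would either reindex and use the fact that $[\mc{H}]^s = T_k^{s/2-1/2}\mc{H}$, or observe that both sides reduce to $H^{s(d+1)/2}(\mc{X}_1)\circ\phi_1$ after one applies \eqref{eq: intepolation_of_Sobolev} iteratively; the conclusion is unchanged.

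The main obstacle I anticipate is making the chain of equivalences genuinely rigorous across the three different settings (RKHS with measure $\sigma$ on $S$, Sobolev space with Lebesgue measure on $\mc{X}_1$, pullback by $\phi_1$), especially for the range $s\geq 1$ where the real interpolation characterization from \cite{steinwart2012mercer} must be supplemented, and in checking that the norm equivalence constants coming from the Jacobian of $\phi_1$ and from Lemma~\ref{lem: RFK and NTK to Sobolev} do not accumulate unfavorably through the interpolation functor. Once those bookkeeping issues are in place, matching the exponents $\frac{s(d+1)}{2}=\frac{d+3}{2}\cdot\frac{s(d+1)}{d+3}$ is immediate.
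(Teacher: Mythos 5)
Your proposal is correct and follows essentially the same route as the paper: transport both interpolation scales along $\phi_1$ to Sobolev spaces on $\mc{X}_1$ via Lemma~\ref{lem: RFK and NTK to Sobolev}, invoke the stability of real interpolation under equivalent endpoint spaces (the paper's Lemmas~\ref{lem: equvi_inter_space} and~\ref{lem: equvi_inter_space_L^2}), and match the Sobolev exponents through \eqref{eq: intepolation_of_Sobolev}. Your explicit flagging of the $s\geq 1$ range is a point the paper glosses over, but it does not change the argument.
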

\begin{proof}
    Define $\mc{X}_1 = \phi_1(S)$. Let $\sigma$ be the uniform measure on $\mb{S}^d$. Recalling \eqref{eq: intepolation_of_Sobolev}, we have the interpolation on
$\mc{X}_1$ with lebesgue measure denoted by $\mu_1$:
\begin{equation}
    [H^{\frac{d+3}{2}}(\mc{X}_1)]^\frac{s(d+1)}{d+3} \cong  H^{\frac{s(d+1)}{2}}(\mc{X}_1) \cong [H^\frac{d+1}{2}(\mc{X}_1)]^s
\end{equation}
that is 
\begin{equation}
    \left( L^2(\mc{X}_1,\mu_1),H^{\frac{d+3}{2}}(\mc{X}_1)    \right)_{\frac{s(d+1)}{d+3},2} \cong \left(L^2(\mc{X}_1,\mu_1) ,H^{\frac{d+1}{2}}(\mc{X}_1) \right)_{{s,2}} 
\end{equation}
Since $ f \mapsto f \circ \phi_1 $ is an isometric isomorphism,  we have
\begin{equation}
        \left( L^2(\mc{X}_1 ,\mu_1) \circ \phi_{1} ,H^{\frac{d+3}{2}}(\mc{X}_1) \circ \phi_{1}  \right )_{\frac{s(d+1)}{d+3},2} \cong \left( L^2(\mc{X}_1,\mu_1 ) \circ \phi_{1},  H^{\frac{d+1}{2}}(\mc{X}_1) \circ \phi_{1}\right)_{{s,2}}
\end{equation}
Recall that $\mc{X}$ is bounded and thus $\mc{X}_1 =  \phi_1(\phi(\mc{X}))$ is bounded. Therefore, the Jacobian $J\phi_1^{-1}$ satisfies $c \leq \lvert J \phi_1^{-1} \rvert \leq C$ for some constant $c$ and $C$. It is easy to verify that $ L^2(\mc{X}_1,\mu_1 ) \circ \phi_{1} = L^2(S,\mu_1\circ \phi_1 ) \cong L^2(S,\sigma)$ . Finally, with Lemma \ref{lem: RFK and NTK to Sobolev}, Lemma \ref{lem: equvi_inter_space} and Lemma \ref{lem: equvi_inter_space_L^2}, we have
\begin{equation}
    [\HRF_0(S)]^\frac{s(d+1)}{d+3} \cong [\HNT_0(S)]^s
\end{equation}
with respect to the uniform measure $\sigma$ on $S$.
\end{proof}

\subsection{Smoothness of Gaussian process}

Lemma \ref{lem: interpolation of HRF0 and HNT0} provides the interpolation relationship between $\HNT_0(S)$ and $\HRF_0(S)$. 
By the kernel transformation relationship of NTK and RFK from $\mb{R}^d$ and to $\mb{S}^d$ as described in \eqref{eq: NTK0_and_RFK0}, we can also derive the interpolation relationship of $\HNT$ and $\HRF$. It will help for us to derive the smoothness of $f^{\mr{GP}}$. 

\begin{lemma}[Interpolation of RKHSs]\label{lem: transformation of RKHS}
    Suppose $s  >  0$. We have $$ [\HNT(\mc{X})]^s \cong [\HRF(\mc{X})]^\frac{s(d+1)}{d+3} $$ with respect to measure $\mu$ on $\mc{X}$ which has Lebesgue density $c \leq p(x) \leq C$.
\end{lemma}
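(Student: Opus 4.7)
The plan is to transport the spherical interpolation identity established in Lemma \ref{lem: interpolation of HRF0 and HNT0} over to the Euclidean domain $\mc{X}$ via the kernel change-of-variables formula \eqref{eq: NTK0_and_RFK0}. Set $g(x) \coloneqq \norm{\widetilde{x}}$ and define the multiplication-and-pullback map $U h \coloneqq g \cdot (h \circ \phi)$. First I would verify that $U$ is an isometric isomorphism from $\HNT_0(S)$ onto $\HNT(\mc{X})$, and, in parallel, from $\HRF_0(S)$ onto $\HRF(\mc{X})$. This is the standard fact that multiplying a reproducing kernel by a positive rank-one factor $g(x)g(y)$ and reparametrising the input by a bijection $\phi$ yields a unitarily equivalent RKHS; the relations $\NTK(x,y) = g(x)g(y)\NTK_0(\phi(x),\phi(y))$ and its RFK counterpart reduce this to a one-line computation via the feature-map description of the RKHS.

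Next I would check that the same $U$ extends to a Banach space isomorphism between $L^2(S,\sigma)$ and $L^2(\mc{X},\mu)$. By the change of variables $u = \phi(x)$,
\begin{equation*}
\norm{U h}_{L^2(\mc{X},\mu)}^2 = \int_S g(\phi^{-1}(u))^2 \, |h(u)|^2 \, p(\phi^{-1}(u)) \, |\det J\phi^{-1}(u)| \, \dd u.
\end{equation*}
Because $\mc{X}$ is bounded with smooth boundary, both $g$ and $|\det J\phi^{-1}|$ are bounded above and below on their respective domains, and by hypothesis $p$ is bounded above and below. Hence $\norm{U h}_{L^2(\mc{X},\mu)} \asymp \norm{h}_{L^2(S,\sigma)}$, so together with the previous step $U$ realises an isomorphism of the compatible Banach couples $(L^2(S,\sigma),\HNT_0(S)) \to (L^2(\mc{X},\mu),\HNT(\mc{X}))$, and likewise for RFK. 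Since the real interpolation functor $(\cdot,\cdot)_{\theta,2}$ is exact, it carries this couple-isomorphism onto an isomorphism between the associated interpolation spaces. Combining with Lemma \ref{lem: interpolation of HRF0 and HNT0} and the identification $[\mathcal{H}_k]^s \cong (L^2,\mathcal{H}_k)_{s,2}$ for $0<s<1$ (due to \citet{steinwart2012mercer}) yields the statement for $0<s<1$; the case $s\geq 1$ follows either by reiteration, or directly from the Mercer characterisation \eqref{eq: interpolation of RKHS}, since $U$ intertwines the integral operators $T_{\NTK_0}$ and $T_{\NTK}$ up to equivalent $L^2$ norms and therefore preserves their eigenstructures up to bounded factors.

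The main subtle point is ensuring that one and the same operator $U$ is simultaneously an isometry at the RKHS level and a topological isomorphism at the $L^2$ level, with $L^2$ equivalence constants depending only on $\mc{X}$, $\mu$ and $\phi$ and not on the particular kernel. Once this compatibility is cleanly recorded, the identity $[\HNT(\mc{X})]^s \cong [\HRF(\mc{X})]^{\frac{s(d+1)}{d+3}}$ is a formal consequence of the abstract interpolation machinery applied to the already-established spherical equivalence.
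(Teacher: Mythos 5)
Your proposal follows essentially the same route as the paper: both use the multiplication-and-pullback map $f \mapsto \rho\cdot(f\circ\phi)$ with $\rho(x)=\norm{\widetilde{x}}$, both rest on the kernel identity \eqref{eq: NTK0_and_RFK0}, and both reduce the claim to the spherical statement of Lemma \ref{lem: interpolation of HRF0 and HNT0}. The difference is in how the transport map is made compatible with the power-space scale. The paper introduces an auxiliary measure $\nu$ on $\mc{X}$ with $\dd\nu/\dd\mu=\rho^2$ and works with $\nu\circ\phi$ on $S$, so that $I$ becomes an exact $L^2$ isometry which carries eigenpairs of $(\NTK_0,S,\nu\circ\phi)$ to eigenpairs of $(\NTK,\mc{X},\mu)$ with the same eigenvalues; the Mercer characterisation \eqref{eq: interpolation of RKHS} then transfers verbatim for every $s>0$, and the change of measure from $\nu\circ\phi$ back to $\sigma$ is absorbed by the equivalence lemmas. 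You instead keep the uniform measure on $S$ and accept that $U$ is only a bounded isomorphism (not an isometry) at the $L^2$ level, then invoke exactness of the real interpolation functor. For $0<s<1$ (together with Steinwart--Scovel) this is perfectly sound and arguably cleaner than the paper's version.

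The weak point is your treatment of $s\geq 1$. The claim that $U$ ``preserves the eigenstructures up to bounded factors'' because it intertwines $T_{\NTK_0}$ and $T_{\NTK}$ up to equivalent $L^2$ norms is not correct as stated: changing the $L^2$ inner product to an equivalent but different one changes the eigenfunctions of the integral operator, not just the normalisation, so the Mercer characterisation \eqref{eq: interpolation of RKHS} does not transfer term by term. More abstractly, if $A$ and $B$ are positive operators with $\mathrm{ran}(A^{1/2})=\mathrm{ran}(B^{1/2})$ and equivalent graph norms, the Heinz--Kato inequality gives $\mathrm{ran}(A^{\theta/2})=\mathrm{ran}(B^{\theta/2})$ only for $\theta\in[0,1]$; for $\theta>1$ range equality of fractional powers can genuinely fail, so neither ``reiteration'' nor the bounded-factor argument closes the case $s\geq 1$ (or $\frac{s(d+1)}{d+3}\geq 1$) without further input. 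The clean repair is exactly the paper's device: renormalise the measure on $S$ so that $U$ is a genuine $L^2$ isometry; then $U T_{\NTK_0}U^{*}=T_{\NTK}$ as operators, eigenpairs correspond exactly, and \eqref{eq: interpolation of RKHS} gives the isometric identification of $[\HNT_0(S)]^s$ with $[\HNT(\mc{X})]^s$ for all $s>0$ in one stroke, after which only a change-of-measure equivalence on $S$ remains.
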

\begin{proof}
     Define a function $\rho(x) = \norm{\widetilde{x}}$ on $\mc{X}$. Define  measure $\nu$ on $\mc{X}$ such that the Radon-Nikodym derivative satisfies $\frac{\dd \nu}{\dd \mu} = \rho^2$. 
     We consider measure $\nu \circ \phi$ on $S$ as well as measure $\mu$ on $\mc{X}$, and then define a map $I: [\HNT_0(S)]^s \to [\HNT(\mc{X})]^s$:
     \begin{equation}
         I: f \mapsto \rho\cdot(f \circ \phi).
     \end{equation}
     Now we prove $I$ is an isometric isomorphism. We first show that for any eigen pair $(f,\lambda)$ of $(\NTK_0,S,\nu \circ \phi )$, $(If, \lambda) $ is also an eigen pair of $(\NTK, \mc{X},\mu)$. Actually, for eigen pair $(f,\lambda)$ we have
     \begin{equation}
         \int_S  \NTK_0(x,y) f(y) \dd (\nu \circ \phi)(y) = \lambda f(z).
     \end{equation}
     We perform a transformation of the integral domain,
     \begin{equation}
     \begin{aligned}
         &\int_\mc{X} \NTK_0(\phi(x),\phi(y)) f(\phi(y)) \dd \nu(y) = \lambda f(\phi(x))\\&
         = \int_\mc{X} \NTK_0(\phi(x),\phi(y)) f(\phi(y))\rho^2(y) \dd \mu(y)
     \end{aligned}
     \end{equation}
    Recalling the transformation between $\NTK_0$ and $\NTK$ in \eqref{eq: NTK0_and_RFK0}, we have 
     \begin{equation}
     \begin{aligned}
         &\int_\mc{X} \rho(x) \NTK_0(\phi(x),\phi(y)) f(\phi(y))\rho^2(y) \dd \mu(y) = \lambda \rho(x) f(\phi(x))\\&
         = \int_\mc{X} K(x,y) f(\phi(y))\rho(y) \dd \mu(y)
     \end{aligned}
     \end{equation}
    These transformations are both reversible. Therefore, through the structure of real interpolation space as described in \eqref{eq: interpolation of RKHS}, we can see $I$ is an isometric isomorphism . In the same way, there exist isometric isomorphism $I': [\HRF_0(S)]^{\frac{s(d+1)}{d+3}} \to [\HRF(\mc{X})]^{\frac{s(d+1)}{d+3}}$:
     \begin{equation}
         I': f \mapsto \rho\cdot (f \circ \phi).
     \end{equation}
    Combined the result in Lemma \ref{lem: interpolation of HRF0 and HNT0}, the Lemma is proved.
\end{proof}

Now we are ready to give the smoothness of Gaussian process $f^{\mr{GP}}$. We remind the reader that $\HNT $and $\HRF$ are abbreviations used for denoting $\HNT(\mathcal{X})$ and $\HRF(\mathcal{X})$, respectively.

\begin{proof}[Proof of Theorem \ref{thm: Smoothness of Gaussian Process}]
    Let $t = \frac{s(d+1)}{d+3}$ to simplify the notation. By Lemma \ref{lem: transformation of RKHS}, we have
    \begin{equation}
        [\HNT]^s \cong [\HRF]^t.
    \end{equation}
    Recalling the structure of interpolation space, we suppose $[\HRF]^t$ can be written as 
    \begin{equation}
        [\HRF]^t = \left\{ \sum_{i \in \mb{N}} c_i \lambda_i^\frac{t}{2} e_i\Bigg| \sum_{i \in \mb{N}} c_i^2 < \infty \right\}. 
    \end{equation}
     Recall that $f^{\mr{GP}}$ represents a random function defined on $(\Omega,\mathcal{F},\mathbf{P})$, where each $\omega \in \Omega$ corresponds to a path function $f^{\mr{GP}}_{\omega} : \mathcal{X} \rightarrow \mathbb{R}$.
 We can express this in the orthonormal basis as $ f^{\mr{GP}}_\omega = \sum_{i \in \mb{N}} a_i(\omega) \lambda_t^\frac{t}{2} e_i $, where 
    $$ a_i(\omega) = \langle f^{\mr{GP}}_\omega , \lambda_i^{\frac{t}{2}} e_i \rangle_{[\HRF]^t} =  \lambda_i^{-\frac{t}{2}} \int  f^{\mr{GP}}_\omega e_i(x)   \dd
    \mu(x).$$
    Recall that as defined in Lemma \ref{lem: weak_conv}, $f^{\mr{GP}}$ has the distribution $\mc{GP}(0,\RFK)$. From this, we can acquire the joined distribution for $a_i$. Firstly, let us compute the covariance:
    \begin{equation}
    \begin{aligned}
        \mathrm{Cov}(a_i,a_j) &= \mathbf{E}[a_i,a_j]
        \\& = \mathbf{E} \left[\lambda_i^{-t/2} \lambda_j^{-t/2} \int_\mathcal{X} \int_\mc{X} f^{\mathrm{GP}}(x) f^{\mathrm{GP}}(y) e_i(x) e_i(y) \,\dd \mu(x) \dd \mu(y)\right]
        \\& = \lambda_i^{-t/2} \lambda_j^{-t/2} \int_\mathcal{X} \int_\mc{X} \mathbf{E} \left[f^{\mathrm{GP}}(x) f^{\mathrm{GP}}(y) \right] e_i(x) e_i(y) \,\dd \mu(x) \dd \mu(y)
        \\& = \lambda_i^{-t/2} \lambda_j^{-t/2} \int_\mathcal{X} \int_\mc{X} \RFK(x,y) e_i(x) e_i(y) \,\dd \mu(x) \dd \mu(y)
        \\ & = \lambda_i^{-(1-t)/2} \lambda_j^{-(1-t)/2} \mathbf{1}_{\{i=j\}}.
    \end{aligned}
    \end{equation}
The exchange of integration is accomplished by Fubini's theorem since $\RFK$ is a bounded kernel function, and both $e_i$ and $e_j$ are $L_2$ integrable.   
Moreover, as $f^{\mr{GP}}$ is a Gaussian process, we finally get $ a_i \sim N(0, \lambda_i^{1-t} ) $ for $i \in \mb{N}$, and $a_i$, $a_j$ are independent for any $i \neq j$. 
    Consequently, we can directly derive that
\begin{equation}
\left\| f^{\mr{GP}}\right\|^2_{[\mathcal{H}_{\mr{NT}}]^s} = \sum_{i\in\mb{N}} \lambda_i^{1-t} Z_i^2,
\end{equation}
where $\{Z_i\}$ indicates a collection of independent and identically distributed standard Gaussian random variables.  Finally, as Lemma \ref{lem: EDR on X} establishes the eigenvalue decay rate as
\begin{equation}
      \quad \lambda_i \asymp i^\frac{d+3}{d},
\end{equation}
it is direct to prove the theorem. 

\paragraph{Part 1.} When $s < \frac{3}{d+1}$, we have $\frac{d+3}{d}\cdot(1-t) > 1$ and thus
\begin{equation}
    \mathbf{E} \left\| f^{\mr{GP}}\right\|^2_{[\mathcal{H}_{\mr{NT}}]^s} \asymp \sum_{i \in \mb{N}} i^{-\frac{d+3}{d}\cdot(1-t)} < +\infty .
\end{equation} 
Consequently we have $\mathbf{P}\left(\left\| f^{\mr{GP}}\right\|^2_{[\mathcal{H}_{\mr{NT}}]^s} < \infty\right) = 1$.

\paragraph{Part 2.} When $s \geq \frac{3}{d+1}$, we ascertain that $\frac{d+3}{d}\cdot(1-t) \leq 1$ and consequently
\begin{equation}
    \mathbf{E} \left\| f^{\mathrm{GP}}\right\|^2_{[\mathcal{H}_{\mathrm{NT}}]^s} \asymp \sum_{i \in \mb{N}} i^{-\frac{d+3}{d}\cdot(1-t)} = +\infty .
\end{equation}

Denote by $ X_n = \sum_{i=1}^n \lambda_i^{1-t} Z_i^2 $. We then obtain
\begin{equation}
    \mathbf{E} X_n =  \sum_{i=1}^n \lambda_i^{1-t}, \quad \mathrm{Var} X_n = \sum_{i=1}^n  2\lambda_i^{1-t}.
\end{equation}
We can thus derive that 
\begin{equation}
    \mathbf{P}(X_n \leq \frac{\mathbf{E}X_n }{2}) \leq \mathbf{P}( \lvert X_n - \mathbf{E} X_n  \rvert \geq \frac{\mathbf{E}X_n }{2}) \leq \frac{4 \mathrm{Var}X_n }{ \left[\mathbf{E} X_n \right]^2 } = \frac{8}{\sum_{i=1}^n \lambda_i^{1-t}}.
\end{equation}
Given that $\left\| f^{\mathrm{GP}}\right\|^2_{[\mathcal{H}_{\mathrm{NT}}]^s} \geq X_n $ for any $n \in \mathbb{N}_+$, we have 
\begin{equation}
    \mathbf{P}(\left\| f^{\mathrm{GP}}\right\|^2_{[\mathcal{H}_{\mathrm{NT}}]^s} = \infty) = \lim_{M \to \infty} \mathbf{P}(\left\| f^{\mathrm{GP}}\right\|^2_{[\mathcal{H}_{\mathrm{NT}}]^s} \geq M ) \geq 1 - \lim_{n\to \infty}\mathbf{P}(X_n \leq \frac{\mathbf{E}X_n }{2})  = 1.
\end{equation}
This completes the proof.

\end{proof}

\section{Proof of Theorem \ref{thm: impact_initializaiton}}\label{appe: proof_of_impact_initializaiton}

With the findings from Theorem \ref{thm: Smoothness of Gaussian Process}, Proposition \ref{thm: impact of initialization}, and Proposition \ref{prop: uniform_generalization}, the influence of non-zero initialization could be interpreted in terms of a misspecified spectral algorithms problem. To apply Proposition \ref{Prop: Upperbound}, it only remains to determine the embedding index of $\HNT$.  Now, let's proceed to do so.

\subsection{Embedding index of $\HNT$}

Recall that the  Proposition  \ref{Prop: Upperbound} requires the embedding index of $\HNT$ on $\mc{X}$ under the probability measure $\mu$. Fortunately, the embedding index of the Sobolev space has been previously established by \citet{zhang2023optimality}, which is helpful to simplify our proof.

\begin{lemma}[\citet{zhang2023optimality} Section 4.2, Embedding index of Sobolev space]\label{lem: embed_index_Sobolev}
Suppose $r >  \frac{d}{2}$. For a bounded open set $\mc{X} \subset \mb{R}^d$ and Lebesgue measure $\mu$, the embedding index of $H^r(\mc{X})$ equals $\frac{d}{2r}$.
\end{lemma}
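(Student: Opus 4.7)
The plan is to reduce the statement about the embedding index to the classical Sobolev embedding theorem, exploiting the interpolation identity \eqref{eq: intepolation_of_Sobolev} already recorded in the excerpt. Recall that $\alpha_0$ is the infimum of those $s$ for which $[H^r(\mc{X})]^s \hookrightarrow C^0(\mc{X})$. By \eqref{eq: intepolation_of_Sobolev} we have $[H^r(\mc{X})]^s \cong H^{rs}(\mc{X})$ whenever $s>0$, so showing $\alpha_0 = d/(2r)$ is equivalent to showing that $H^\tau(\mc{X})$ embeds continuously into $C^0(\mc{X})$ for every $\tau > d/2$, and fails to do so for every $\tau < d/2$.

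First I would establish the upper bound $\alpha_0 \leq d/(2r)$. For any $s > d/(2r)$ we have $rs > d/2$, and since $\mc{X}$ is a bounded open set with smooth boundary, the classical Sobolev embedding theorem (for instance via an extension operator $E: H^{rs}(\mc{X}) \to H^{rs}(\mb{R}^d)$ followed by the Fourier-analytic estimate $\| \widehat{f}\|_{L^1} \leq C \|(1+|\xi|^2)^{rs/2} \widehat{f}\|_{L^2}$) yields $H^{rs}(\mc{X}) \hookrightarrow C^0(\mc{X})$. Therefore $[H^r(\mc{X})]^s \subseteq C^0(\mc{X})$ for every $s>d/(2r)$, and taking the infimum gives $\alpha_0 \leq d/(2r)$.

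For the matching lower bound $\alpha_0 \geq d/(2r)$ I would exhibit, for each $s < d/(2r)$, an explicit function in $[H^r(\mc{X})]^s \cong H^{rs}(\mc{X})$ that is not in $C^0(\mc{X})$. Fix an interior point $x_0 \in \mc{X}$, a cutoff $\chi \in C_c^\infty(\mc{X})$ with $\chi \equiv 1$ near $x_0$, and consider
\begin{equation*}
    g_\beta(x) = \chi(x)\, |x - x_0|^{-\beta},
\end{equation*}
which is unbounded near $x_0$ whenever $\beta > 0$. A standard computation with the Fourier transform (or direct manipulation with fractional difference norms) shows $g_\beta \in H^\tau(\mb{R}^d)$ precisely when $\beta < d/2 - \tau$. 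Hence for every $\tau < d/2$ one can pick $\beta \in (0, d/2 - \tau)$ and obtain $g_\beta \in H^\tau(\mc{X}) \setminus C^0(\mc{X})$. Applied with $\tau = rs < d/2$, this shows $[H^r(\mc{X})]^s \not\subseteq C^0(\mc{X})$ for every $s < d/(2r)$, so $\alpha_0 \geq d/(2r)$ and we conclude equality.

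The main obstacle I would anticipate is purely bookkeeping rather than deep technique: one must be careful that the interpolation identity \eqref{eq: intepolation_of_Sobolev} is used only in the regime $rs > d/2$ required for $H^{rs}(\mc{X})$ to be a genuine RKHS (so that the notion $[H^r]^s$ as a real interpolation space agrees with the Sobolev scale), and the critical threshold $rs = d/2$ itself should be excluded from both directions. Since the lemma only asks for the infimum, and both the Sobolev embedding and the unboundedness construction above leave the boundary case untouched, the argument closes without needing any delicate analysis at the critical exponent.
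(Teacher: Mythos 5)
Your argument is correct, but it is worth noting that the paper does not prove this lemma at all: it is imported verbatim as a citation to \citet{zhang2023optimality}, Section 4.2, and used as a black box. Your proposal therefore supplies something the paper omits, and the route you take is the natural one: reduce via the interpolation identity \eqref{eq: intepolation_of_Sobolev} to the classical dichotomy for $H^\tau(\mc{X})$ at the threshold $\tau=d/2$, prove the upper bound by the Sobolev embedding (extension operator plus the $L^1$ bound on the Fourier transform), and prove the lower bound with the cutoff singularity $\chi(x)\abs{x-x_0}^{-\beta}$, whose membership criterion $\beta<d/2-\tau$ you state correctly. Two small remarks. First, your closing caution is slightly misdirected: the identification $[H^r(\mc{X})]^s\cong H^{rs}(\mc{X})$ rests on Steinwart--Scovel's identification of the power space with the real interpolation space $(L^2,H^r)_{s,2}$ for $0<s<1$, which does not require the intermediate space $H^{rs}$ to be an RKHS; indeed your own lower-bound step uses the identity precisely in the regime $rs<d/2$ where $H^{rs}$ is not an RKHS, and this is legitimate. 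Since $r>d/2$ forces the critical value $d/(2r)$ to lie in $(0,1)$, restricting to $s\in(0,1)$ costs nothing. Second, the lemma as stated asks only for a ``bounded open set,'' but both your extension-operator step and the paper's standing hypotheses require some boundary regularity ($C^\infty$ or at least an extension domain); you invoke this explicitly, which is the honest reading of the statement. With those caveats your proof is complete and could stand in place of the citation.
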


Since we have established the relationship between $\HNT$ and the Sobolev space, we can easily get the embedding index through a similar way used in the proof of Lemma \ref{lem: transformation of RKHS}.
\begin{lemma}[Embedding index of NTK]\label{lem: embed_NTK}
    Suppose that the density function $p(x)$ of probability measure $\mu$ satisfies the condition  $c \leq p(x) \leq C$, where $c$ and $C$ are positive constants. The embedding index of $\HNT(\mc{X})$ with respect to $\mu$ is concluded to be $\frac{d}{d+1}$.  
\end{lemma}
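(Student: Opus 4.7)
The plan is to reduce the computation of the embedding index of $\HNT(\mc{X})$ to the already-known embedding index of a fractional Sobolev space, using the chain of equivalences built up in the earlier appendix. The pivotal intermediate claim is the equivalence $\HNT(\mc{X}) \cong H^{(d+1)/2}(\mc{X})$ (as sets, with equivalent norms, relative to a measure which is $L^\infty$-equivalent to Lebesgue measure); once this is available, interpolation and the Sobolev embedding finish everything.

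First, I would establish the equivalence between $\HNT(\mc{X})$ and $H^{(d+1)/2}(\mc{X})$. The argument mirrors Lemma~\ref{lem: transformation of RKHS}: using the kernel transformation \eqref{eq: NTK0_and_RFK0} together with the multiplication-by-$\rho$ and composition-with-$\phi$ map $I: f \mapsto \rho \cdot (f \circ \phi)$, one has $\HNT(\mc{X}) \cong \HNT_0(S)$ with respect to a measure $\nu = \rho^2 \mu \circ \phi^{-1}$ on $S$. By Lemma~\ref{lem: RFK and NTK to Sobolev}, $\HNT_0(S) \cong H^{(d+1)/2}(\mc{X}_1) \circ \phi_1$, and since $\phi_1$ is a smooth diffeomorphism onto a bounded open subset of $\mb{R}^d$, composition with $\phi_1$ preserves the Sobolev structure. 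Stringing these equivalences together produces $\HNT(\mc{X}) \cong H^{(d+1)/2}(\mc{X})$ as Hilbert spaces.

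Next, I would pass to interpolation spaces. Since $c \leq p(x) \leq C$, the Lebesgue $L^2$ and $L^2(\mc{X},\mu)$ norms are equivalent, so the $K$-functional-based real interpolation spaces agree (via Lemma~\ref{lem: equvi_inter_space} and Lemma~\ref{lem: equvi_inter_space_L^2}, which were used in the same way in Lemma~\ref{lem: interpolation of HRF0 and HNT0}). Thus, by \eqref{eq: intepolation_of_Sobolev},
\begin{equation*}
    [\HNT(\mc{X})]^s \;\cong\; [H^{(d+1)/2}(\mc{X})]^s \;\cong\; H^{s(d+1)/2}(\mc{X})
\end{equation*}
for all $s > 0$ (at least for $s$ in the range where the interpolation remains inside the scale; taking $s < 1$ first and extending monotonically by the embedding $[\mc{H}]^{s_1} \subseteq [\mc{H}]^{s_2}$ handles the rest).

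Finally I would invoke Lemma~\ref{lem: embed_index_Sobolev}: the embedding index of $H^r(\mc{X})$ equals $\tfrac{d}{2r}$, meaning $[H^r(\mc{X})]^t \subseteq C^0(\mc{X})$ precisely when $t > \tfrac{d}{2r}$. Applying this with $r = (d+1)/2$ and recalling the identification $[\HNT(\mc{X})]^s \cong H^{s(d+1)/2}(\mc{X})$ gives that $[\HNT(\mc{X})]^s \subseteq C^0(\mc{X})$ iff $s(d+1)/2 > d/2$, i.e., iff $s > \tfrac{d}{d+1}$. Hence the embedding index of $\HNT(\mc{X})$ with respect to $\mu$ equals $\tfrac{d}{d+1}$, as claimed.

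The main obstacle, in my view, is the first step: making the chain of space equivalences genuinely a bicontinuous identification between $\HNT(\mc{X})$ and $H^{(d+1)/2}(\mc{X})$ while juggling three different domains ($\mc{X} \subset \mb{R}^d$, its lifted image $S \subset \mb{S}^d_+$ under $\phi$, and the stereographic image $\mc{X}_1 \subset \mb{R}^d$ under $\phi_1$), three different measures (Lebesgue on $\mc{X}$, $\mu$ on $\mc{X}$, and the uniform measure on $S$), and the weighting factor $\rho(x) = \|\widetilde{x}\|$. Once the identification is justified cleanly—using the same closed-graph / isometric-isomorphism idea as in Lemmas \ref{lem: equi_Sobolev_S} and \ref{lem: transformation of RKHS}—the rest reduces to a one-line application of the Sobolev embedding index.
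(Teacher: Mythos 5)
Your proposal is correct and follows essentially the same route as the paper: both reduce the claim to the Sobolev embedding index $\frac{d}{2r}$ of $H^{r}(\mc{X})$ via the chain $\HNT(\mc{X}) \cong \HNT_0(S) \cong H^{\frac{d+1}{2}}$ built from the map $I: f \mapsto \rho\cdot(f\circ\phi)$ and the stereographic identification, with the paper (which only sketches the argument) additionally noting explicitly that $I$ is an isomorphism at the $L^\infty$ level so that the embedding index transfers — a point your reduction to $H^{s(d+1)/2}(\mc{X})$ handles implicitly since $\rho$ is smooth and bounded above and below and $\phi_1\circ\phi$ is a diffeomorphism.
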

We omit this proof as it can be carried out in the same manner as Lemma \ref{lem: transformation of RKHS}. Here, we provide only the structure. First, the embedding index of $H^\frac{d+1}{2}(S)$ is $\frac{d}{d+1}$( Lemma \ref{lem: embed_index_Sobolev} and Lemma \ref{lem: equi_Sobolev_S}). Second, the embedding index of $\HNT_0(S)$ is $\frac{d}{d+1}$ (Lemma \ref{lem: equi_RKHS_Sobolev}). Third, the embedding index of $\HNT(\mc{X})$ is $\frac{d}{d+1}$ since $I: f \mapsto \rho\cdot(f\circ \phi) $ is isometric isomorphism  both from $\HNT_0(S)$ to $\HNT(\mc{X})$ and from $L^\infty(S,\nu' \circ \phi)$ to $ L^\infty(\mc{X},\mu)$, where measure $\nu'$ is defined on $ \frac{\dd \nu'}{\dd \mu} = \rho $ (an argument similar to that in the proof of  Lemma \ref{lem: transformation of RKHS}).

\subsection{Proof of Theorem \ref{thm: impact_initializaiton}}
\begin{proof}[Proof of Theorem \ref{thm: impact_initializaiton}]

Recall that Proposition \ref{thm: impact of initialization} elucidates the impact of non-zero initialization. Namely, the generalization error of the kernel gradient flow with an initialization of $f_0$ and a regression function $f^*$, is consequently equivalent to that of kernel gradient flow with initialization at $0$ and a regression function of $f^* - f_0$. On the other hand, Proposition \ref{prop: uniform_generalization} demonstrated the uniform convergence from the network function to the kernel gradient flow predictor as the network width $m$ tends to infinity. Lemma \ref{lem: embed_NTK} verify the embbeding index condition in Proposition \ref{Prop: Upperbound}.   Thus, we only need to verify the source condition that $f^{\mr{GP}} - f^*$ fulfills and to incorporate it with Proposition \ref{Prop: Upperbound} in order to derive the generalization error of the kernel gradient flow.

Now we start the proof. Since the proofs for the cases 
$s \geq \frac{3}{d+1}$
  and 
$0 < s <\frac{3}{d+1}$ 
  are exactly the same, we will only provide the proof for the former case here. 
Through Theorem \ref{thm: Smoothness of Gaussian Process}, we know for any $0<r<\frac{3}{d+1}$, it follows that $\mathbf{E} \norm{f^{\mr{GP}}}^2_{[\HNT]^r} = \sum\lambda_i^{1-r} < \infty$. Let $C_t  = \mathbf{E} \norm{ f^{\mr{GP}} }_{[\HNT]^r}$. By the Markov inequality, for any $\delta' \in (0,1)$, we have with probability exceeding $1-\delta'$, that 
\begin{equation}
\norm{f^{\mr{GP}} - f^*}_{[\HNT]^r} \leq  \frac{ R+ C_r}{\delta'} .
\end{equation}
Recall that the eigenvalue decay rate for $\NTK$ is $\frac{d+1}{d}$ as mentioned in Lemma \ref{lem: edr_of_NTKandRFK}.  Therefore, we have for any $\delta \in (0,1)$ and any $\varepsilon\in(0,\frac{3}{d+3})$, there exists $r<\frac{3}{d+1}$ such that $ \frac{r \beta}{r \beta +1 } = \frac{3}{d+3} - \varepsilon$ (i.e., $r = \frac{d^2 - 6d - 3d(d+3)\varepsilon}{3(d+1) + \varepsilon(d+1)(d+3)}$). Denote by $\widetilde{f}^* = f^* - f^{\mr{GP}}$ and $\widetilde{f}_t^\mr{NTK}$ be the kernel gradient flow predictor starts from initial value $0$. 
Through Proposition \ref{Prop: Upperbound}, We thus have 
\begin{equation}
    \norm{\widetilde{f}_t^{\mr{NTK}} - \widetilde{f}^* }_{L^2}^2 \leq \left( \frac{1}{\delta'} \ln \frac{6}{\delta} \right)^2 (R + C_r)^2 C' n^{-\frac{3}{d+3} + \varepsilon},
\end{equation}
holds with probability at least $1- 2 \delta'$ when $t \asymp n^\frac{\beta}{ r\beta +1}$. Through Proposition \ref{thm: impact of initialization}, also we have 
\begin{equation}
        \norm{{f}_t^{\mr{NTK}} - {f}^* }_{L^2}^2 \leq \left( \frac{1}{\delta'} \ln \frac{6}{\delta} \right)^2 (R + C_r)^2 C' n^{-\frac{3}{d+3} + \varepsilon},
\end{equation}
holds with probability at least $1- 2 \delta'$. Through uniform convergence in Proposition \ref{prop: uniform_generalization}, we have 
\begin{equation}
    \sup_{t \geq 0} \left| \norm{f_t^{\mr{NN}} - f^*}_{L^2} - \norm{f_t^{\mr{NTK}} - f^*}_{L^2}^2  \right| \leq \left( \frac{1}{\delta'} \ln \frac{6}{\delta} \right)^2 (R + C_r)^2 C' n^{-\frac{3}{d+3} + \varepsilon},
\end{equation}
with probability at least $1- \delta'$ when $m$ is large enough. Therefore, with appropriate choice of $\delta'$ and $C'$, we can finish the proof. 

\end{proof}

\section{Proof of Theorem \ref{thm: lower_bound}}\label{appe: lower_bound}
 
In this section, we establish the generalization error rate lower bound in our problem. We incorporate a result delineated in \cite{li2024generalization}, which systematically studies the learning rate of kernel regression. Prior to this, we take some preparatory work.

We assume $k$ is a dot-product kernel on $\mb{S}^d$ with eigenvalue decay rate  $\beta$, with respect to the uniform measure. We notate the corresponding RKHS as $\mc{H}_k$. Then, we can verify that  $\mathcal{H}_k$ satisties to the definition of \textit{regular RKHS}, as detailed in \cite{li2024generalization}. 
Subsequently, the main theorem  in \cite{li2024generalization} can be applied under our proposed settings, since $\NTK_0$ is a dot-product kernel defined on $\mb{S}^d$. It engenders the following lemma.

\begin{lemma}[Generalization error lower bound]\label{lem: lower bound} Assume $k$ is a dot-product kernel defined on $\mb{S}^d$, we have the interpolation space of its RKHS as $ [\mathcal{H}_k]^s = \left\{\sum_{i \in \mb{N}} a_i \lambda_i^\frac{s}{2} e_i | \sum_{i \in \mb{N}} a_i^2 <  \infty \right\}$ where $\{ e_i \}_{i \in \mb{N}}$  is 
the orthonormal basis of $L_2(\mb{S}^d,\sigma)$ and $\sigma$ denotes the uniform measure. 
 Decompose the regression function $f^*$ over the series of basis: 
 \begin{equation}
     f^* = \sum_{i \in \mb{N}} f_i e_i.
 \end{equation}
  We assume that $f^* \in [\mc{H}]^t$ holds for any $t<s$ for a given $s>0$. Also, we we assume that 
\begin{equation}\label{eq: assum_f*}
    \sum_{i:\lambda>\lambda_i} |f_i|^2 = \Omega\left( \lambda^s \right). 
\end{equation}
 We also assume that the noise term satisties $\mathbf{E}[|\epsilon|^2 |x] = \sigma^2 $ holds for $x \in \mb{S}^d$,a.e. 
 Then, we define the main bias term in generalization error by 
 \begin{equation}
     \mathcal{R}^2(t;f^*) =  \sum_{i \in \mb{N}} e^{-2t\lambda_i} \lambda_i f_i^2,
 \end{equation}
and define the variance term by 
\begin{equation}
    \mathcal{N}(t) = \sum_{i \in \mb{N}} [\lambda_i e^{-t\lambda_i} ]^2. 
\end{equation}  
  Fix the given input vectors of samples $X$. Consider the kernel gradient flow process  detailed in \eqref{eq: KGD_equation} and let it start from $0$. For any choice of $t = t(n) \to \infty$, we have
    \begin{equation}
        \mathbf{E}\left[ \norm{f_t^{\mr{GF}} - f^*}_{L^2}^2 | X \right]  = \Omega_\mathbf{P} \left(\mathcal{R}^2(t;f^*) + \frac{1}{n}\mathcal{N}(t)\right),  
    \end{equation}
\end{lemma}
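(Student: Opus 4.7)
The plan is to deduce this lemma as a direct specialization of the generic lower-bound theorem of \citet{li2024generalization}, rather than re-deriving the bias--variance analysis from scratch, since the preamble to the lemma explicitly sets this up as the intended route. The work is entirely in verifying that the present setting falls within the hypotheses of that framework, and then in translating its conclusion back into the notation of the present lemma.

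First, I would verify the structural hypothesis that $\mathcal{H}_k$ is a \emph{regular} RKHS in the sense of \citet{li2024generalization}: polynomial eigenvalue decay of $T_k$, uniformly bounded Mercer eigenfunctions (up to the correct normalization), and matching embedding index. Since $k$ is a dot-product kernel on $\mathbb{S}^d$ with uniform measure, its Mercer eigenfunctions are exactly the normalized spherical harmonics $Y_{n,l}$ as in \eqref{eq: decom_harmonic}; these are uniformly bounded block-by-block, the polynomial decay rate $\beta$ is part of the paper's assumptions on $k$, and the embedding index equals $1/\beta$ by the same facts that underlie Lemma \ref{lem: embed_NTK}. Together these give regularity of $\mathcal{H}_k$.

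Second, I would match the two remaining assumptions to those of the framework. The hypothesis $\sum_{i:\lambda>\lambda_i}|f_i|^2 = \Omega(\lambda^s)$ is exactly the sharp source condition at level $s$: it guarantees $f^* \in [\mathcal{H}_k]^t$ for every $t<s$ while excluding $f^* \in [\mathcal{H}_k]^s$, and it is quantitatively what is needed so that the bias cannot be shrunk below $\mathcal{R}^2(t;f^*)$ by any choice of filter parameter. The assumption $\mathbf{E}[|\epsilon|^2\mid x]=\sigma^2$ is the constant-variance noise condition required for the variance term $\tfrac{1}{n}\mathcal{N}(t)$ to appear in the lower bound (as opposed to merely an upper bound on the noise moment, which would only yield an upper bound).

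Third, once the hypotheses are matched, the main lower-bound theorem of \citet{li2024generalization}, applied to the gradient-flow filter with zero initialization, yields directly
\[
\mathbf{E}\!\left[\|f_t^{\mathrm{GF}}-f^*\|_{L^2}^2\,\big|\,X\right]
\;=\;\Omega_{\mathbf{P}}\!\left(\mathcal{R}^2(t;f^*)+\tfrac{1}{n}\mathcal{N}(t)\right)
\]
for any deterministic choice $t=t(n)\to\infty$, which is exactly the conclusion of the lemma. The main obstacle I expect is notational: the bias functional $\mathcal{R}^2(t;f^*)=\sum e^{-2t\lambda_i}\lambda_i f_i^2$ and variance functional $\mathcal{N}(t)=\sum[\lambda_i e^{-t\lambda_i}]^2$ as stated here carry an extra $\lambda_i$ weighting relative to the textbook population bias/variance of KGF in $L^2$; this reflects the time normalization by $1/n$ in the KGF equation \eqref{eq: KGD_equation} (effectively replacing $t$ by $t\lambda_i$ in the filter) together with the specific form of the filter function used by \citet{li2024generalization}. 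Carefully tracking this rescaling through the decomposition is where the bookkeeping lives. Any mismatch that only changes constants can be absorbed into the $\Omega_{\mathbf{P}}$ notation, so the conclusion follows once the variables are aligned.
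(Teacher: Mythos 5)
Your proposal takes essentially the same route as the paper: the paper likewise gives no independent derivation of this lemma, but simply observes that a dot-product kernel on $\mathbb{S}^d$ with polynomial eigenvalue decay yields a \emph{regular} RKHS in the sense of \citet{li2024generalization} and then invokes the main lower-bound theorem of that work, exactly as you propose. Your additional bookkeeping (verifying regularity via the spherical-harmonic structure, matching the sharp source condition and the constant-variance noise assumption, and tracking the filter normalization) is consistent with, and in fact more explicit than, the paper's one-line justification.
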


With the lemma above, now we are ready to prove Theorem \ref{thm: lower_bound}. 
 \begin{proof}[Proof of Theorem \ref{thm: lower_bound}]
     By Proposition \ref{thm: impact of initialization}, we know the initial output function introduce an implicit bias term to the regression function. And thus the original problem is same as to consider a standard kernel gradient flow problem start from initial output zero with regression function $\tilde{f}^* =  f^* - f^{\mr{GP}}$. Recall that $\mu$ is equivalent to the uniform measure $\sigma$. 
     On sphere, the RKHSs of dot-product kernels $\NTK_0$ and $\RFK_0$ are equivalent to corresponding Sobolev spaces through Lemma \ref{lem: equi_RKHS_Sobolev}.
     More precisely, suppose that  we have chosen an orthonormal basis $\{e_i\}_{i\in\mb{N}}$ consisting of  spherical harmonic polynomials. Then we have
     \begin{equation}
     \begin{aligned}
         [\HNT_0]^t &= \left\{ \sum_{i \in \mb{N}}a_i \omega_i^{t/2} e_i \Big| \sum_{i \in \mb{N}} a_i^2 < \infty  \right\}, \\
         [\HRF_0]^t &= \left\{ \sum_{i \in \mb{N}}a_i \lambda_i^{t/2} e_i \Big| \sum_{i \in \mb{N}} a_i^2 < \infty \right\}
     \end{aligned}    
     \end{equation}
    for any $t \geq 0$. Through Lemma \ref{lem: edr_of_NTKandRFK}, we have the eigenvalue decay rate:
    \begin{equation}
        \omega_i \asymp i^{-\frac{d+1}{d}} \qand \lambda_i \asymp i^{-\frac{d+3}{d}}.
    \end{equation}
      We denote by $\beta_1 = \frac{d+1}{d}$ and $\beta_2 = \frac{d+3}{d}$.
     Similar to the proof of Theorem \ref{thm: Smoothness of Gaussian Process}, we write the Kosambi–Karhunen–Loève expansion of $ \tilde{f}^*$:  
     \begin{equation}
         \tilde{f}^* = \sum_{i\in\mb{N}} \widetilde{f}_i e_i =   \sum_{i \in \mb{N}} ( b_i - a_i) \lambda_i^{1/2} e_i ,
     \end{equation}
     where 
     \begin{equation}
         \displaystyle a_i \stackrel{\mr{i.i.d.}}{\sim} N(0,1), \qand f^* = \sum_{i \in \mb{N}} b_i \lambda_i^{1/2} e_i \in [\HNT_0]^{s}.
     \end{equation}
      Here ${a_i}$ is a sequence of independent standard Gaussian variables, and ${b_i}$ represents a sequence derived from the decomposition of $f^*$.   With such decomposition, we can verify that \eqref{eq: assum_f*} holds with probability $1-\delta'$ for any $\delta' \in (0,1)$.  Denote by $g(\lambda) = \sum_{i: \lambda>\omega_i}  |\widetilde{f}_i|^2$. 
      Firstly, we have 
      \begin{equation}
          \mathbf{E} [g(\lambda)] =  \mathbf{E} \left[ \sum_{i: \lambda>\omega_i}  |\widetilde{f}_i|^2 \right] \asymp \mathbf{E}\left[ \sum_{i >\lfloor \lambda^{-\frac{d}{d+1}}\rfloor} |\widetilde{f}_i|^2  \right]\gtrsim  \lambda^{\frac{3}{d+1}}.
      \end{equation}
      We also have the variance 
      \begin{equation}
          \mr{Var} \left[ g(\lambda) \right] = \mr{Var}\left[  \sum_{i: \lambda> \omega_i}  |\widetilde{f}_i|^2  \right] \asymp \sum_{i : \lambda> \omega_i} \lambda_i  +  \sum_{i : \lambda> \omega_i} \lambda_i b_i^2 
          \lesssim  \lambda^{\frac{3}{d+1}}.
      \end{equation}
      Where the second term is controlled by the source condition assumption on $f^*$.  
        Therefore, we have
        \begin{equation}
            \mathbf{P} \left( |g(\lambda) - \mathbf{E} [g(\lambda)] | \geq \mathbf{E} \left[\frac{ g(\lambda)}{2} \right] \right) \leq \frac{4\mr{Var}[g(\lambda)]}{\left( \mathbf{E}g[(\lambda)]\right)^2} = O(\lambda^\frac{3}{d+1}). 
        \end{equation}
        Define event $A(\lambda) = \{ |g(\lambda) - \mathbf{E}[ g(\lambda])| \leq \mathbf{E} [ g(\lambda)]/2  \}$. 
      For any $\delta' \in (0,1)$, we choose a sequence $\widetilde{\lambda}_j$, such that $\widetilde{\lambda}_{j} = C' j^{-\frac{2(d+1)}{3}}$. 
      Then we have 
      \begin{equation}
          \mathbf{P}\left( \cup_{j\in \mb{N}} A\left(\widetilde{\lambda}_j \right)  \right) \geq 1-\sum_{j\in \mb{N}} [C']^{\frac{3}{d+1}} j^{-2}. 
      \end{equation}
    We can choose appropriate $C' >0 $ such that $\cup_{j\in \mb{N}} A(\widetilde{\lambda}_j)$ holds with probability at least $1-\delta'$, we denote by event $A$. Conditioned on event $A$, for any $ \widetilde{\lambda}_{j+1} \leq  \lambda \leq \widetilde{\lambda}_{j}$, we have 
    \begin{equation}
g(\lambda) \geq g(\widetilde{\lambda}_{j+1}) \gtrsim \frac{1}{2} (\widetilde{\lambda}_{j+1})^{\frac{3}{d+1}} \qand        \frac{\widetilde{\lambda}_{j+1}}{\widetilde{\lambda}_j}  = \left(\frac{j}{j+1}\right) ^\frac{2(d+1)}{3}  
    \end{equation}
    which shows that 
    \begin{equation}
        g(\lambda) \gtrsim \frac{1}{2} (\widetilde{\lambda}_{j+1})^{\frac{3}{d+1}} \gtrsim \frac{1}{2} \left[\frac{j}{j+1} \right]^2 \lambda^\frac{3}{d+1}.
    \end{equation}
      Therefore, we finish the proof of \eqref{eq: assum_f*}.

      Then, we turns to the calculation of generalization error lower bound.       First, 
      we plug in the decomposition and calculate the bias term $\mathcal{R}(t;\tilde{f}^*)$:
     \begin{equation}
         \mathcal{R}^2(t;\tilde{f}^*)  = \sum_{i \in \mb{N}} e^{-2t\lambda_i}  \lambda_i (a_i^2 - 2 b_i a_i + b_i^2).
     \end{equation}
     Recalling that the eigenvalue decay rate is denoted by $\beta$, it follows that
     \begin{equation}
         \mathbf{E} \left[ \mathcal{R}^2(t;\tilde{f}^*) \right]\geq \sum_{i \in \mb{N}} e^{-2t\lambda_i }\lambda_i \asymp \sum_{i \in \mb{N}} e^{-2ti^{-\beta_1}} i^{-\beta_2} \asymp t^{\frac{1}{\beta_1} -\frac{\beta_2}{\beta_1}}.
     \end{equation}
     Also, the variance of $  \mathcal{R}^2(t;\tilde{f}^*)$ follows that 
    \begin{equation}
        \mr{Var}(\mathcal{R}^2(t;\tilde{f}^*)) \lesssim \sum_{i \in \mb{N}} e^{-4t \lambda_i}  \lambda_i^2  +  \sum_{i \in \mb{N}}  e^{-4t \lambda_i} b_i^2  \lambda_i^2,
    \end{equation}
    Here  we introduce the denotations:
    \begin{equation}
         V_0 \coloneqq \sum_{i \in \mb{N}} e^{-4t \lambda_i} 2 \lambda_i^2, \qand V_2 \coloneqq   \sum_{i \in \mb{N}}  e^{-4t \lambda_i} b_i^2  \lambda_i^2. 
         \end{equation}
    We then have 
         \begin{equation}
         V_0  =   \sum_{i \in \mb{N}} e^{-4t \lambda_i}  \lambda_i^2  \asymp \sum_{i \in \mb{N}} e^{-4t i^{-\beta_1}} i^{-2\beta_2}    \asymp t^{\frac{1}{\beta_1} -  2 \frac{\beta_2}{\beta_1}}.
         \end{equation}
    As to  $V_2$, we first recall that the smoothness of $f^*$ lead to the following inequality: 
    \begin{equation}
        \sum_{i \in \mb{N}} b_i^2 i^{-1} <\infty,
    \end{equation}
    which implies that
    \begin{equation}
        \sum_{i \in \mb{N}} b_i^4 i^{-2} <\infty.
    \end{equation}
    Now we turn to the evaluation of $V_2$: 
    \begin{equation}
    \begin{aligned}
        V_2 &=   \sum_{i \in \mb{N}}  e^{-4t \lambda_i} b_i^2  \lambda_i^2  \asymp  \sum_{i \in \mb{N}} e^{-4t i^{-\beta_1}} b_i^2 i^{-2\beta_2}   = \sum_{i \in \mb{N}} e^{-4t i^{-\beta_1}} b_i^2 i^{-1} i^{-2\beta_2+1} 
        \\&\leq \sqrt{  \sum_{i \in \mb{N}} e^{-8t i^{-2\beta_1}} i^{-4\beta_2+2} \sum_{i \in \mb{N}} b_i^4 i^{-2}} \lesssim t^{\frac{1}{2\beta_1} - 2\frac{\beta_2}{\beta_1} + \frac{1}{\beta_1}} .
        \end{aligned}
    \end{equation}
    It is worth noting that we use Cauchy’s inequality to derive the upper bound above. With the control of $V_0$ and $V_2$,  we have
     \begin{equation}
         \mr{Var}(\mathcal{R}^2(t;\tilde{f}^*)) \asymp V_0 +V_2 \lesssim t^{\frac{3}{2\beta_1} - 2\frac{\beta_2}{\beta_1} } 
         \end{equation}
    Consequently, by Chebyshev's inequality, we directly have
    \begin{equation}
        \mathbf{P}\left(|\mathcal{R}^2(t;\tilde{f}^*) - \mathbf{E} \left[\mc{R}^2(t;\tilde{f}^*)\right]| \geq \mathbf{E} \left[ \mc{R}^2 (\tilde{f}^*) \right]/2\right)  \leq \frac{4\mr{Var}(\mc{R}^2(t; \tilde{f}^*))}{\left(\mathbf{E}\left[\mc{R}^2(t;\tilde{f}^*)\right]\right)^2} = O\left( t^{-\frac{1}{2\beta_1}}\right). 
    \end{equation}
    Since $t = t(n) \to + \infty$, we have 
    \begin{equation}
        \mc{R}^2(t;\tilde{f}^*)  = \Omega_\mathbf{P}\left(t^{\frac{1}{\beta_1}-\frac{\beta_2}{\beta_1}}\right).
    \end{equation}
    In the same way, we also have the bound of variance term $\mathcal{N}(t)$. 
    \begin{equation}
        \mathcal{N}(t) \asymp \frac{1}{n} t^{\frac{1}{\beta_1}}. 
    \end{equation}
    Finally, apply Lemma \ref{lem: lower bound} and Proposition \ref{thm: unif_cov_f}. We derive that for any $\delta >0$, as long as $n$ is large enough and $m$ is large enough, for any choice of $t = t(n) \to \infty$, with probability at least $1-\delta$ we have
    \begin{equation}
         \mathbf{E}\left[\| f_t^{\mr{NN}} - f^* \|^2_{L^2}|X\right] = \Omega \left(\mathcal{R}^2 +  \frac{1}{n} \mathcal{N} \right) = \Omega \left(t^{\frac{1}{\beta_1}- \frac{\beta_2}{\beta_1}} + \frac{1}{n} t^{\frac{1}{\beta_1}}  \right) = \Omega\left(n^{-\frac{3}{d+3}}\right)  .
    \end{equation}
    Thus the theorem is proved. 
 \end{proof}

\begin{remark}
    In Proposition \ref{thm: unif_cov_f}, we consider the situation that both input $X$ and output $Y$ of samples are fixed, while in the proof above we require that only $X$ is fixed. However, the conclusion of Proposition \ref{thm: unif_cov_f} still holds when $Y$ of samples is random. This is because the noise term has a finite second moment.  Also, the change of domain from $\mc{X}$ to $\mb{S}^d$ will not affect the uniform convergence result. 
\end{remark}

\section{Details in artificial data experiments}\label{appe: artificial_data}

Fixing the dimension of data as $d = 5,10$.  We draw samples for variable $x$ from the standard Gaussian distribution $\mathcal{N}(0,I_d)$, which are consequently standardized to lie on the surface of the unit hypersphere $\mathbb{S}^d$.  The dependent variable $y$ is formulated as:
\begin{equation}
    y = f(x) + \varepsilon, 
\end{equation}
where $f(x) = \left(\sum_{j=1}^d x_j\right)^2$, $\varepsilon \sim \mathcal{N}(0,\sigma^2)$ and $\sigma = 0.2$. The function $f$ exhibits notable smoothness, since it can be linearly represented in terms of the first few spherical harmonic polynomials on $\mathbb{S}^d$ \cite{byerly1893elemenatary} and the fact that $\NTK_0$ is a dot-product kernel. We consider fully-connected network with one singular hidden layer, choosing $m = 20*n$ to ensure large enough width. Consistent to  previous sections, we choose ReLU function as the non-linear activation and train the network using Gradient Descent for a sufficiently long time. We record the generalization error at each moment and define the moment of minimum generalization error as the final generalization error. This is done to align with the early stopping strategy mentioned in the Theorem \ref{thm: impact_initializaiton}.

\section{Details in real data experiments}\label{appe: real_data}

In this subsection, we will provide the theoretical basis of the method which approximates the smoothness of the goal function of real dataset.  Let $\mc{X} \subset \mb{R}^d$ be a bounded domain. 
 Given a reproduce kernel $k(\cdot,\cdot)$ on $\mc{X}$ and a probability measure $\mu$. Denote the RKHS by $\mc{H}_k  =  \left\{ \sum_{i \in \mb{N}} a_i \lambda_i^\frac{1}{2} e_i \Big| \sum_{i \in \mbN} a_i^2 < \infty  \right\}$.  
We assume that there is a function $f: \mc{X} \to \mb{R}$ and a probability density $\mu$ on $\mc{X}$. Suppose that the samples satisfies $y = f(x)$, then $f$ has the decomposition:
\begin{equation}
    f = \sum_{i \in \mbN} \theta_i e_i.
\end{equation}
The smoothness $\alpha_f = \alpha(f,k)$ depends on the coefficients $c_i$: if we have $\theta_i \asymp i^{-d_c}$ and $\lambda_i \asymp i^{-d_\lambda}$, then we derive the smoothness:  $\alpha_f = \frac{2d_c - 1}{d_\lambda}$.

We consider $n$ samples $\{(x_i,y_i)\}_{i=1}^n$. The Gram matrix $k(X,X)$ can be decomposed as 
\begin{equation}
    k(X,X) = \phi \Sigma \phi^T, \qand \frac{1}{n}\phi \phi^T  = I_n. 
\end{equation}
In this regard, we can utilize the eigenvalue of the empirical kernel matrix to estimate the eigenvalue of the kernel function, since previous work has shown the convergence of eigenvalue when $n$ is large enough \cite{koltchinskii2000random}.
Through the decomposition $Y = \phi c$ (i.e., $c= \frac{1}{n}  \phi^T Y $) and the approximation $c_i \approx \theta_i$, we can  roughly estimate the eigenvalue decay rate when $n$ is large enough with respect to $i$:
\begin{equation}\label{eq: sum_square_of_coefficients}
\sum_{k=i}^n c_k^2 \asymp i^{- \alpha_f d_\lambda} . 
\end{equation}

In our experiments, we let $n = 3000$. Namely, an arbitrary selection of $3000$ samples was made from the each dataset.  We did not use all samples in the datasets, because $n=3000$ is already sufficient to calculate the decay rate of eigenvalues. We consider  the NTK of a one-hidden-layer fully connected network as the kernel $k$. The results is shown in Figure \ref{fig: mnist}, Figure \ref{fig: fashion_mnist} and Figure \ref{fig: cifar10} for the three datasets, respectively. In each figure, the scatter plot shows the log value of the summed squares of each $c_k$  (for $i \leq k \leq n$ as per equation \eqref{eq: sum_square_of_coefficients}) against $\log_{10} i$ on the x-axis. Also, the dashed line represents the corresponding least-square regression fitting using index $i$ smaller than  $2700$. Theoretically, the slope of the dashed line will be $-\alpha_f d_\lambda$.

\begin{figure}[h]
\centering
\includegraphics[width=\linewidth/2]{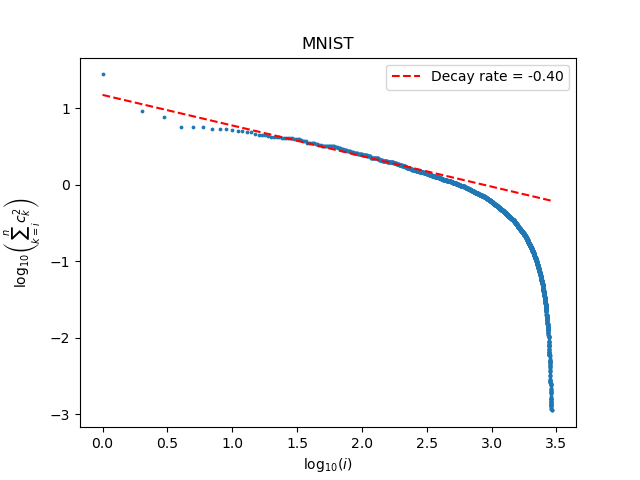}
\caption{ Decay curve of the logarithm of sum of squared coefficients for NMIST. }
\label{fig: mnist}
\end{figure}

\begin{figure}[h]
\centering
\includegraphics[width=\linewidth/2]{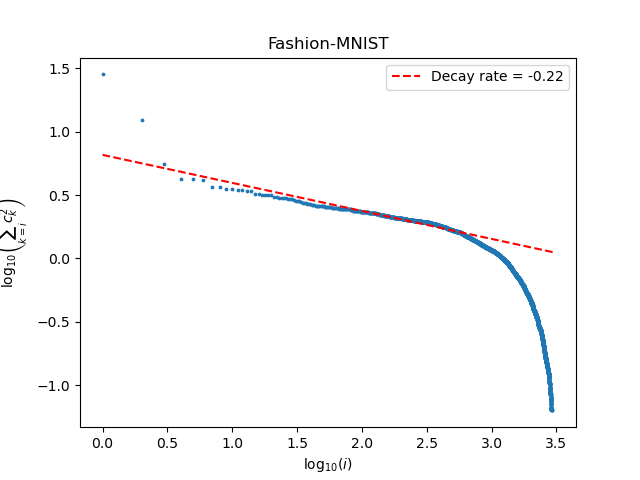}
\caption{ Decay curve of the logarithm of sum of squared coefficients for Fashion-NMIST. }
\label{fig: fashion_mnist}
\end{figure}

\begin{figure}[h]
\centering
\includegraphics[width=\linewidth/2]{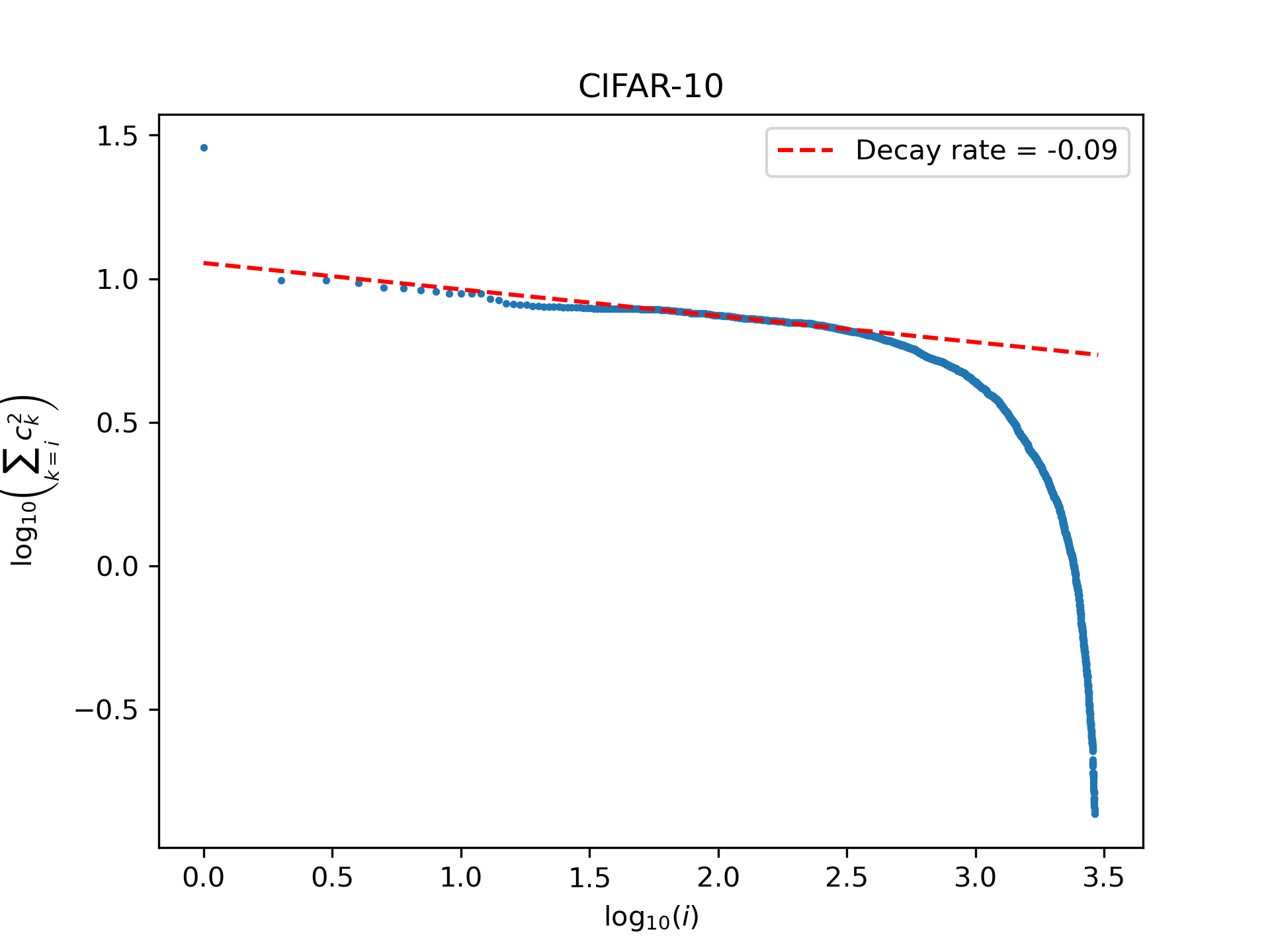}
\caption{ Decay curve of the logarithm of sum of squared coefficients for CIFAR-10. }
\label{fig: cifar10}
\end{figure}

\section{Technical Lemmas}

In this section, we introduce a series of technical lemmas. These will be helpful in our proof, and many of the lemmas have been established by prior researchers.

\begin{lemma}[Change of measure \cite{li2023statistical} ]\label{lem: change_of_measure}  For a positive definite kernel $k$ defined on a compact set $\mathcal{X}$, it has the same eigenvalue decay rate under two measure $\nu$ and $\sigma$:
$$ \lambda_i(\NTK_0,\mc{X},\nu) \asymp \lambda_i(\NTK_0, \mc{X},\sigma )  $$ if the Radon derivative $p = \frac{\dd \nu}{\dd \sigma}$ exists and  $ c \leq p \leq C $ holds for some positive constant $c$ and $C$.
\end{lemma}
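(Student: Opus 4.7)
My plan is to deduce the lemma from the Courant--Fischer min--max characterization of the Mercer eigenvalues, combined with the elementary observation that the norms $\lVert\cdot\rVert_{L^2(\mc{X},\nu)}$ and $\lVert\cdot\rVert_{L^2(\mc{X},\sigma)}$ are equivalent on every function, with uniform constants $c$ and $C$ coming from the two-sided bound on $p=\dd\nu/\dd\sigma$.

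First I would recall the variational description of the eigenvalues of the integral operator $T_{k,\mu}$ on $L^2(\mc{X},\mu)$. Writing $J_\mu : \mc{H}_k \hookrightarrow L^2(\mc{X},\mu)$ for the canonical inclusion, one has $T_{k,\mu} = J_\mu J_\mu^{\ast}$, so the non-zero eigenvalues of $T_{k,\mu}$ agree with those of $J_\mu^{\ast} J_\mu$ acting on $\mc{H}_k$. Since $\langle J_\mu^{\ast} J_\mu f, f\rangle_{\mc{H}_k} = \lVert f\rVert_{L^2(\mc{X},\mu)}^{2}$, the Courant--Fischer principle yields
\begin{equation*}
\lambda_i(k,\mc{X},\mu) \;=\; \max_{\substack{V\subset \mc{H}_k \\ \dim V = i}} \; \min_{\substack{f \in V\\ f \neq 0}} \; \frac{\lVert f\rVert_{L^2(\mc{X},\mu)}^{2}}{\lVert f\rVert_{\mc{H}_k}^{2}}.
\end{equation*}
The crucial feature for us is that the admissible subspaces $V$ live in $\mc{H}_k$ and do not depend on $\mu$; only the numerator in the Rayleigh quotient does.

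Next, since $c \le p \le C$ pointwise $\sigma$-a.e., for every $f \in \mc{H}_k \subset L^2(\mc{X},\nu) \cap L^2(\mc{X},\sigma)$ we have $c\lVert f\rVert_{L^2(\sigma)}^{2} \le \lVert f\rVert_{L^2(\nu)}^{2} \le C\lVert f\rVert_{L^2(\sigma)}^{2}$. Dividing by $\lVert f\rVert_{\mc{H}_k}^{2}$, the two Rayleigh quotients are pinched between the same ratio times $c$ and times $C$, uniformly in $f$ and in $V$. Taking first the inner minimum over $f\in V$ and then the outer maximum over $i$-dimensional $V$ preserves this two-sided sandwich (the inner and outer extremizers may move, but the value of the min--max is still trapped between $c$ and $C$ times the $\sigma$-version). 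This directly gives $c\,\lambda_i(k,\mc{X},\sigma) \le \lambda_i(k,\mc{X},\nu) \le C\,\lambda_i(k,\mc{X},\sigma)$, which is exactly $\lambda_i(k,\mc{X},\nu) \asymp \lambda_i(k,\mc{X},\sigma)$.

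I do not expect a real obstacle here; the whole argument is purely variational. The one subtle choice is to run Courant--Fischer on $\mc{H}_k$ rather than on $L^2$, so that the admissible subspaces are measure-independent and the norm equivalence on individual functions propagates automatically through both the $\min$ and the $\max$; running it directly on $L^2$ would require also comparing subspaces across the two Hilbert spaces, which is less clean. The compactness and positivity hypotheses needed for Courant--Fischer are met because $k$ is a bounded positive definite kernel on a compact domain, so the proof applies in particular to $\NTK_0$ as in the displayed formula of the statement.
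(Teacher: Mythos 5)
Your proof is correct: the min--max characterization of the Mercer eigenvalues through the embedding $J_\mu:\mathcal{H}_k\hookrightarrow L^2(\mathcal{X},\mu)$ (so that $\lambda_i$ is a max--min of the Rayleigh quotient $\lVert f\rVert_{L^2(\mu)}^2/\lVert f\rVert_{\mathcal{H}_k}^2$ over measure-independent subspaces of $\mathcal{H}_k$), combined with the pointwise bounds $c\lVert f\rVert_{L^2(\sigma)}^2\le\lVert f\rVert_{L^2(\nu)}^2\le C\lVert f\rVert_{L^2(\sigma)}^2$, gives $c\,\lambda_i(k,\mathcal{X},\sigma)\le\lambda_i(k,\mathcal{X},\nu)\le C\,\lambda_i(k,\mathcal{X},\sigma)$, which is even slightly stronger than the stated $\asymp$. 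The paper itself states this lemma without proof, citing \cite{li2023statistical}, and the argument there is essentially the same variational one you give, so no comparison issues arise.
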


\begin{lemma}[Skorohod's Representation Theorem]\label{thm: Skoro_thm}
    Suppose that a sequence of probability distribution $\{F_n\}$  converges weakly to $F$ and $F$ has a separable support. Then there exist random variables $X_n$ and $X$, defined on a new probability space $(\Omega', \mathcal{F}, \mathbf{P}) $, such that
    the distribution of $ X_n$ is $F_n$, the distribution of $X$ is $P$, and $X_n \to X $ holds almost surely.
\end{lemma}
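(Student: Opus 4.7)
The plan is to adopt the classical quantile-coupling construction, generalized to a separable metric space. The common probability space $(\Omega', \mathcal{F}, \mathbf{P})$ will be taken to be the unit interval $[0,1]$ equipped with the Borel $\sigma$-algebra and Lebesgue measure, and I will construct $X$ and each $X_n$ simultaneously on this space in a way that forces pathwise convergence. The strategy is to approximate both the $F_n$ and the $F$ distributions by discrete measures supported on a common countable collection of finer and finer partition cells, and then align the discretizations via a shared ordering on $[0,1]$.

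Concretely, using that $F$ has separable support $S$, for each $k \geq 1$ I would construct a countable Borel partition $\mc{P}_k = \{A_{k,j}\}_{j \in \mb{N}}$ of $S$ such that each cell has diameter at most $2^{-k}$, each cell's boundary is an $F$-continuity set, and $\mc{P}_{k+1}$ refines $\mc{P}_k$. The continuity-set requirement is achievable because, for any fixed center, only countably many radii can produce balls whose boundaries carry positive $F$-mass, so small perturbations of the candidate diameters yield partitions with $F$-null boundaries. By the Portmanteau theorem, weak convergence $F_n \Rightarrow F$ then yields $F_n(A_{k,j}) \to F(A_{k,j})$ for every $j$ and $k$.

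Next, I would fix a representative point $x_{k,j} \in A_{k,j}$ and define $X$ on $[0,1]$ as the pointwise limit of step functions $X^{(k)}(u) = x_{k,j}$ on sub-intervals of length $F(A_{k,j})$ arranged in a consistent order on $[0,1]$; nesting of the partitions $\mc{P}_k$ and the bound $\mathrm{diam}(A_{k,j}) \leq 2^{-k}$ make $\{X^{(k)}\}_k$ uniformly Cauchy on a common full-measure subset, so $X$ is well defined with distribution $F$. The construction of $X_n$ is analogous, using the $F_n$-measures of the same cells and the same representatives; the resulting step variable $X_n^{(k)}$ has distribution equal to the discretization of $F_n$ along $\mc{P}_k$. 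A diagonal argument then chooses $k = k(n) \to \infty$ slowly enough that, outside a subset of $[0,1]$ with Lebesgue measure summable in $n$, one has $X_n = X_n^{(k(n))}$ agreeing with $X^{(k(n))}$ on the same sub-interval structure; Borel--Cantelli together with $\mathrm{diam}(A_{k,j}) \leq 2^{-k(n)}$ then gives $X_n \to X$ almost surely.

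The main obstacle is the measure-theoretic bookkeeping: ensuring the partitions can simultaneously be nested, have $F$-null boundaries, and have cells of arbitrarily small diameter, and that the orderings of the sub-intervals of $[0,1]$ for $X_n$ and $X$ can be matched cell-by-cell so that the discrepancy is controlled by $\sum_j |F_n(A_{k,j}) - F(A_{k,j})|$ at each level $k$. Once this combinatorial scaffold is in place, the almost sure convergence itself drops out of the diameter bound and the diagonalization; the genuinely subtle point is constructing the cell system, which in turn relies on the separability of $\mathrm{supp}(F)$ to guarantee that such a partition system exists at all.
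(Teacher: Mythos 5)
You should first note that the paper itself gives no proof of this lemma: it is quoted as the classical Skorokhod representation theorem, so the only meaningful comparison is with the standard textbook argument (Billingsley/Dudley), whose partition-based scaffold your proposal indeed follows. Your construction of the cell system (countable nested partitions of the separable support into sets of diameter $\le 2^{-k}$ with $F$-null boundaries, plus Portmanteau to get $F_n(A_{k,j})\to F(A_{k,j})$) is fine. The genuine gap is in the definition of $X_n$: as you yourself observe, the step variable $X_n^{(k)}$ has the law of the \emph{discretization} of $F_n$ along $\mathcal{P}_k$, so setting $X_n = X_n^{(k(n))}$ for a single finite level produces a purely atomic random variable supported on the representatives $x_{k(n),j}$, whose distribution is not $F_n$; the theorem requires the law of $X_n$ to be exactly $F_n$. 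For $X$ you avoid this by passing to the limit $k\to\infty$ through the nested partitions; you must do the same for every fixed $n$ (define $X_n=\lim_k X_n^{(k)}$ using the $F_n$-lengths of the same nested cells), or, as in the classical proof, spread the mass within each aligned subinterval according to the conditional law $F_n(\cdot\mid A_{k,j})$ instead of a point mass. With either repair the conclusion you actually need survives: whenever $\omega$ is assigned the same cell index by the $F$- and $F_n$-interval systems at level $k$, both $X(\omega)$ and $X_n(\omega)$ lie in $\overline{A_{k,j}}$ and hence $d(X_n(\omega),X(\omega))\le 2^{-k}$; the literal identity $X_n=X^{(k(n))}$ you assert is neither true nor needed.

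A second, smaller flaw is the ``measure summable in $n$ plus Borel--Cantelli'' step. The level-$k$ discrepancy $\epsilon_{k,n}=\tfrac12\sum_j\lvert F_n(A_{k,j})-F(A_{k,j})\rvert$ tends to $0$ in $n$ for each fixed $k$ (dominated convergence over $j$), but it is nondecreasing in $k$; if, say, $\epsilon_{1,n}\asymp 1/\log n$, then $\epsilon_{k(n),n}\gtrsim 1/\log n$ for \emph{every} choice of $k(n)\to\infty$, so no choice of $k(n)$ makes the mismatch probabilities summable in $n$, and Borel--Cantelli over $n$ cannot close the argument. The standard remedy, compatible with your scaffold, is to group indices into blocks $n_k\le n<n_{k+1}$: choose finitely many cells at level $k$ carrying $F$-mass $\ge 1-2^{-k}$, pick $\delta_k$ so that the $\delta_k$-neighbourhoods of the corresponding interval endpoints have total measure $\le 2^{-k}$, and take $n_k$ so large that for $n\ge n_k$ all cumulative discrepancies at level $k$ are below $\delta_k$; the exceptional set $B_k$ (endpoint neighbourhoods plus tail cells) then has measure $\le 2^{-k+1}$ and does not depend on $n$ within the block, so Borel--Cantelli applied over $k$ gives that a.e.\ $\omega$ lies outside all but finitely many $B_k$, whence $d(X_n(\omega),X(\omega))\le 2^{-k}$ for all $n$ in block $k$ with $k$ large, and $X_n\to X$ almost surely.
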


\begin{lemma}
[Restriction of RKHS  {\cite{aronszajn50reproducing}}]\label{thm: P.351.}
    Suppose $\mc{H}_k$ is a RKHS defined on $E$ with the norm $\norm{\cdot}_{\mc{H}_k}$, then $k|_\Omega$ restricted to a subset $\Omega \subset E$ is the reproducing kernel of space $\{f' = f|_\Omega| , f \in \mc{H}_k \}$ with norm defined by \begin{equation}
        \norm{f'} = \min_{f|_\Omega = f'} \norm{f}_{\mc{H}_k}. 
    \end{equation} 
\end{lemma}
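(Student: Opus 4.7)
The plan is to realize $\mc{H}'\coloneqq\{f|_\Omega:f\in\mc{H}_k\}$ as a Hilbert-space quotient of $\mc{H}_k$ and then verify that $k|_\Omega$ reproduces on it. First I would introduce the restriction operator $R:\mc{H}_k\to\mc{H}',\ Rf=f|_\Omega$, and its kernel $\mc{H}_0\coloneqq\{f\in\mc{H}_k:f|_\Omega\equiv 0\}$. Because pointwise evaluations $f\mapsto f(x)$ are continuous on $\mc{H}_k$ (they equal $\langle f,k(\cdot,x)\rangle_{\mc{H}_k}$), $\mc{H}_0$ is the intersection of closed hyperplanes $\{f:f(x)=0\}_{x\in\Omega}$ and is therefore a closed subspace of $\mc{H}_k$. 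Hence $\mc{H}_k=\mc{H}_0\oplus\mc{H}_0^{\perp}$, and $R$ restricted to $\mc{H}_0^{\perp}$ is a bijection onto $\mc{H}'$.

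Next I would transport the Hilbert structure through this bijection: declare $\langle f',g'\rangle_{\mc{H}'}\coloneqq\langle \tilde f,\tilde g\rangle_{\mc{H}_k}$, where $\tilde f,\tilde g$ are the unique preimages in $\mc{H}_0^{\perp}$. Equivalently, for any $f\in R^{-1}(f')$ one has $f=\tilde f+h$ with $h\in\mc{H}_0$, so $\|f\|_{\mc{H}_k}^2=\|\tilde f\|_{\mc{H}_k}^2+\|h\|_{\mc{H}_k}^2\geq \|\tilde f\|_{\mc{H}_k}^2=\|f'\|_{\mc{H}'}^2$, with equality iff $h=0$. This shows the minimum in the statement of the lemma is attained and equals $\|f'\|_{\mc{H}'}$, confirming the norm is well defined and making $\mc{H}'$ a Hilbert space by construction (it is isometric to the closed subspace $\mc{H}_0^{\perp}$).

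Finally I would check that $k|_\Omega$ is the reproducing kernel. Fix $x\in\Omega$. Since $k(\cdot,x)\in\mc{H}_k$ and $g(x)=\langle g,k(\cdot,x)\rangle_{\mc{H}_k}=0$ for every $g\in\mc{H}_0$, the element $k(\cdot,x)$ already lies in $\mc{H}_0^{\perp}$; hence its restriction $k|_\Omega(\cdot,x)$ is the distinguished preimage of itself under $R$. For any $f'=R\tilde f$ with $\tilde f\in\mc{H}_0^{\perp}$,
\begin{equation}
\langle f',k|_\Omega(\cdot,x)\rangle_{\mc{H}'}=\langle \tilde f,k(\cdot,x)\rangle_{\mc{H}_k}=\tilde f(x)=f'(x),
\end{equation}
and membership $k|_\Omega(\cdot,x)\in\mc{H}'$ is automatic. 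Together with the Hilbert-space structure above, this establishes that $k|_\Omega$ is the reproducing kernel and $\mc{H}_{k|_\Omega}=\mc{H}'$ with the claimed norm.

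The only genuinely delicate step is the closedness of $\mc{H}_0$ (so that the orthogonal projection onto $\mc{H}_0^{\perp}$ exists and the minimum is attained); once this is in hand via continuity of point evaluations in an RKHS, the rest is formal manipulation of the orthogonal decomposition. I would emphasize that the argument uses \emph{only} the defining properties of an RKHS, so no assumption on $\Omega$ (measurability, topology, etc.) is needed beyond $\Omega\subset E$.
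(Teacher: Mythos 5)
Your proof is correct: the paper does not prove this lemma itself but simply cites Aronszajn (1950), and your argument — splitting $\mc{H}_k=\mc{H}_0\oplus\mc{H}_0^{\perp}$ with $\mc{H}_0$ the (closed, by continuity of point evaluations) subspace of functions vanishing on $\Omega$, transporting the norm, and checking that $k(\cdot,x)\in\mc{H}_0^{\perp}$ gives the reproducing property of $k|_\Omega$ — is exactly the classical argument from that reference. Nothing is missing; the minimal-norm characterization and attainment of the minimum are handled correctly.
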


  The following proposition is a direct proposition of Lemma \ref{thm: P.351.}:
\begin{proposition}[Equivalence of RKHS under resriction]\label{prop: RKHS_restriction}
     Assume RKHSs $\mc{H}_1 \cong \mc{H}_2$ are defined on $E$. Write $\Omega$ as a subset of $E$. Then we have $ \mc{H}_1|_\Omega \cong \mc{H}_2|_\Omega$.
\end{proposition}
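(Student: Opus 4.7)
The plan is to unwind the two definitions in play, that of equivalence of RKHSs and that of the restricted RKHS from Lemma \ref{thm: P.351.}, and observe that the norm equivalence on $E$ transfers through the infimum-over-extensions formula essentially for free.

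First I would fix constants $c, C > 0$ witnessing the equivalence $\mc{H}_1 \cong \mc{H}_2$, i.e.\ $\mc{H}_1 = \mc{H}_2$ as sets and $c \|f\|_{\mc{H}_1} \leq \|f\|_{\mc{H}_2} \leq C \|f\|_{\mc{H}_1}$ for every $f$. Then I would check the set-theoretic equality $\mc{H}_1|_\Omega = \mc{H}_2|_\Omega$: by Lemma \ref{thm: P.351.}, an element of $\mc{H}_i|_\Omega$ is exactly a restriction $f|_\Omega$ for some $f \in \mc{H}_i$, so the shared set-level identity $\mc{H}_1 = \mc{H}_2$ immediately yields $\mc{H}_1|_\Omega = \mc{H}_2|_\Omega$ as sets of functions on $\Omega$.

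Next I would prove the norms on the restriction are equivalent. Fix $f' \in \mc{H}_1|_\Omega = \mc{H}_2|_\Omega$. By Lemma \ref{thm: P.351.}, for any $\varepsilon > 0$ there exists an extension $g \in \mc{H}_2$ with $g|_\Omega = f'$ and $\|g\|_{\mc{H}_2} \leq \|f'\|_{\mc{H}_2|_\Omega} + \varepsilon$. Since $g \in \mc{H}_1$ as well, $g$ is also a legal candidate in the infimum defining $\|f'\|_{\mc{H}_1|_\Omega}$, so
\begin{equation}
\|f'\|_{\mc{H}_1|_\Omega} \;\leq\; \|g\|_{\mc{H}_1} \;\leq\; \tfrac{1}{c}\|g\|_{\mc{H}_2} \;\leq\; \tfrac{1}{c}\bigl(\|f'\|_{\mc{H}_2|_\Omega} + \varepsilon\bigr).
\end{equation}
Letting $\varepsilon \to 0$ gives $\|f'\|_{\mc{H}_1|_\Omega} \leq \tfrac{1}{c}\|f'\|_{\mc{H}_2|_\Omega}$. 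The symmetric argument, using the other inequality $\|g\|_{\mc{H}_2} \leq C\|g\|_{\mc{H}_1}$, yields $\|f'\|_{\mc{H}_2|_\Omega} \leq C \|f'\|_{\mc{H}_1|_\Omega}$. Together these give $\mc{H}_1|_\Omega \cong \mc{H}_2|_\Omega$.

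There is no substantive obstacle here: the statement is really a bookkeeping consequence of Aronszajn's characterization of the restricted RKHS as the quotient-style infimum. The only mild subtlety worth flagging in the writeup is that one should take a near-minimizing extension rather than an exact minimizer (the infimum in Lemma \ref{thm: P.351.} is stated as a minimum, so either route works, but the $\varepsilon$-argument is cleaner and avoids appealing to attainment). The proof is short enough that I would simply present it inline rather than as a separate argument.
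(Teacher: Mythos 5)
Your proof is correct and follows exactly the route the paper intends: the paper states this proposition without proof as a ``direct'' consequence of Lemma~\ref{thm: P.351.}, and your argument simply spells out that consequence (set equality passes to restrictions, and the norm equivalence transfers through the infimum-over-extensions formula, with the $\varepsilon$-near-minimizer handling attainment). No gaps; this is the same approach, just written out in full.
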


The following two lemmas are common in real interpolation: 
\begin{lemma}[Equivalence of interpolation spaces]\label{lem: equvi_inter_space}
Suppose $0 \textless s \textless 1 $. Denote $L^2 = L^2(\mc{X},\mu)$ for abbreviation. If we have RKHSs $\mc{H}_1 \cong \mc{H}_2$, then $(L^2,\mc{H}_1)_{s,2} \cong (L^2,\mc{H}_2)_{s,2}$.
\end{lemma}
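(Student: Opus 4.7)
The plan is to work directly from the K-method definition of real interpolation. For $i = 1, 2$, write
$K_i(t;x) := \inf\{\norm{x_1}_{L^2} + t\norm{x_2}_{\mc{H}_i} : x = x_1 + x_2,\ x_1 \in L^2,\ x_2 \in \mc{H}_i\}$.
Since $\mc{H}_1 \cong \mc{H}_2$ means the two spaces coincide as sets and carry equivalent norms, there exist constants $c, C > 0$ with $c\norm{f}_{\mc{H}_2} \leq \norm{f}_{\mc{H}_1} \leq C\norm{f}_{\mc{H}_2}$ for every admissible $f$. In particular, the family of admissible decompositions of $x$ is the same in both infima; only the weight applied to $\norm{x_2}$ changes.

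The central step is to transport this norm equivalence through the infimum. For any admissible decomposition $x = x_1 + x_2$ one has
$\norm{x_1}_{L^2} + t\norm{x_2}_{\mc{H}_1} \leq \max(1, C)\,\bigl(\norm{x_1}_{L^2} + t\norm{x_2}_{\mc{H}_2}\bigr)$,
and taking the infimum over all decompositions yields $K_1(t;x) \leq \max(1, C)\, K_2(t;x)$; the symmetric estimate based on the lower bound gives $K_2(t;x) \leq \max(1, c^{-1})\, K_1(t;x)$. Hence $K_1$ and $K_2$ are pointwise comparable uniformly in $t > 0$. Substituting these bounds into
$\norm{x}_{(L^2,\mc{H}_i)_{s,2}} = \Bigl(\int_0^\infty (t^{-s} K_i(t;x))^2 \tfrac{\dd t}{t}\Bigr)^{1/2}$
and using monotonicity of the weighted integral shows the two norms are equivalent; in particular, one is finite iff the other is, so the two interpolation spaces coincide as sets, which is precisely $(L^2, \mc{H}_1)_{s,2} \cong (L^2, \mc{H}_2)_{s,2}$.

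I do not expect any substantive obstacle: the K-method is designed exactly so that a norm equivalence on one endpoint of an interpolation couple passes to the interpolated space, and here the other endpoint $L^2$ is identical on both sides. The only point requiring mild care is bookkeeping of the multiplicative constants $c, C$ when pulling them outside the infimum and through the weighted $L^2((0,\infty), \dd t / t)$ integral, which is why $\max(1,C)$ rather than simply $C$ appears in the intermediate bound; this is a routine check rather than a real difficulty.
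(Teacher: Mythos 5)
Your proposal is correct and follows essentially the same route as the paper: both arguments estimate the K-functional of one couple by that of the other using the norm equivalence of $\mc{H}_1$ and $\mc{H}_2$ and then pass the bound through the weighted integral defining the interpolation norm. The only cosmetic difference is that you pull the constant out of the infimum uniformly as $\max(1,C)$, whereas the paper absorbs it into the time parameter via $K_2(t;x)\le K_1(Ct;x)$ and changes variables, yielding the constant $C^{2s}$; both are valid.
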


\begin{proof}
    To prove the lemma, we only need to prove that the embedding $(\mc{H}_1,L^2)_{s,2} \hookrightarrow (L^2,\mc{H}_2)_{s,2}$ and  $(L^2,\mc{H}_2)_{s,2} \hookrightarrow (L^2,\mc{H}_2)_{s,2}$ are both bounded.  

    First we prove that $\norm{(L^2,\mc{H}_1)_{s,2} \hookrightarrow (L^2,\mc{H}_2)_{s,2}} \leq C_1$ where $C_1$ is an absolute positive constant.

    For any $x\in L^2+\mc{H}_1 $, define the K-functional 
    $$ K_1(t;x) = \inf_{x_0+x_1 = x; x_0 \in L^2 , x_1\in \mc{H}_1} (\norm{x_0}_{L^2} + t\norm{ x_1}_{\mc{H}_1}); $$
    $$ K_2(t;x) = \inf_{x_0+x_1 = x; x_0 \in L^2, x_1 \in \mc{H}_2} (\norm{x_0}_{L^2} +t\norm{x_1}_{\mc{H}_2}). $$

    Since $ \mc{H}_1 \cong \mc{H}_2$, for any $x \in \mc{H}_1$, we have $\norm{x}_{\mc{H}_2} \leq C \norm{x}_{\mc{H}_1} $. Thus we have
    $$ K_2(t;x) \leq \inf_{x_0+x_1 = x;x_0 \in L^2, x_1 \in \mc{H}_1} (\norm{x_0}_{L^2} + Ct\norm{x_1}_{\mc{H}_1})= K_1(Ct;x)    ;  $$

    Then we have 
    \begin{equation}
        \begin{aligned}
            \norm{x}_{(L^2, \mc{H}_2)_{s,2}} =&  \int_0^\infty [t^{-s}K_{2}(t;x) ]^2\, \frac{dt}{t} \\&\leq \int_0^\infty [t^{-s} K_1(Ct;x)]^2 \, \frac{dt}{t}  
            \\& \leq C^{2s} \int_0^{\infty} [(Ct)^s K_{1}(Ct;x)]^2\, \frac{d(Ct)}{Ct} 
            \\& = C^{2s} \norm{x}_{(L^2,\mc{H}_1)_{s,2}}
        \end{aligned}
    \end{equation}

    Let $C_1 = C^{2s}$, we have the canonical injection satisfies $\norm{(L^2,\mc{H}_1)_{s,2} \hookrightarrow (L^2,\mc{H}_2)_{s,2} }_{\mr{op}} \leq C_1 $. Also, since $\mc{H}_1 \cong \mc{H}_2$, for any $x \in \mc{H}_2$, we have $\norm{x}_{\mc{H}_1} \leq c\norm{x}_{\mc{H}_2} $. We can prove $\norm{(L^2,\mc{H}_2)_{s,2} \hookrightarrow (L^2,\mc{H}_1)_{s,2} }_{\mr{op}} \leq C_2 $
    in the same way. Then, we finish the proof.     
\end{proof}

\begin{lemma}[Equivalence of interpolation spaces]\label{lem: equvi_inter_space_L^2}
Suppose $0 \textless s \textless 1 $. Denote $\mc{H}$ be a RKHS and $\mu,\nu$ be measures on set $\mc{X}$. If we have $L^2(\mc{X},\mu) \cong L^2(\mc{X},\nu)$, then $(L^2(\mc{X},\mu),\mc{H})_{s,2} \cong (L^2(\mc{X},\nu),\mc{H})_{s,2}$.
\end{lemma}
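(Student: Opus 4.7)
The plan is to mirror the proof of the preceding Lemma \ref{lem: equvi_inter_space} almost verbatim, but this time varying the $L^2$ factor instead of the RKHS factor. First I would extract from the hypothesis $L^2(\mc{X},\mu) \cong L^2(\mc{X},\nu)$ the existence of absolute positive constants $c,C$ such that $c\|f\|_{L^2(\nu)} \le \|f\|_{L^2(\mu)} \le C\|f\|_{L^2(\nu)}$ for every $f$ in the common underlying set. In particular $L^2(\mu) + \mc{H} = L^2(\nu) + \mc{H}$ as sets, so the two interpolation spaces live inside the same ambient sum, and the family of admissible decompositions $x = x_0 + x_1$ with $x_0 \in L^2$, $x_1 \in \mc{H}$ is identical in the two K-functionals; only the norm used on the $x_0$ slot differs.

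Next I would write out, for $x \in L^2 + \mc{H}$,
\begin{equation*}
K_\mu(t;x) = \inf_{x = x_0 + x_1}\bigl(\|x_0\|_{L^2(\mu)} + t\|x_1\|_{\mc{H}}\bigr), \qquad K_\nu(t;x) = \inf_{x = x_0 + x_1}\bigl(\|x_0\|_{L^2(\nu)} + t\|x_1\|_{\mc{H}}\bigr).
\end{equation*}
Using $\|x_0\|_{L^2(\nu)} \le c^{-1}\|x_0\|_{L^2(\mu)}$ in each admissible decomposition, pulling out the constant $\max(c^{-1},1)$, and then taking the infimum on the right gives $K_\nu(t;x) \le \max(c^{-1},1)\,K_\mu(t;x)$. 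By symmetry (using the opposite inequality $\|x_0\|_{L^2(\mu)} \le C\|x_0\|_{L^2(\nu)}$), one obtains $K_\mu(t;x) \le \max(C,1)\,K_\nu(t;x)$.

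Finally, integrating against the weight $t^{-s}\,dt/t$ as in the K-method definition yields
\begin{equation*}
\|x\|_{(L^2(\nu),\mc{H})_{s,2}}^2 \;=\; \int_0^\infty \bigl[t^{-s}K_\nu(t;x)\bigr]^2\,\frac{dt}{t} \;\le\; \max(c^{-1},1)^2\,\|x\|_{(L^2(\mu),\mc{H})_{s,2}}^2,
\end{equation*}
together with the analogous reverse inequality. The two interpolation norms are therefore equivalent, and because they are defined on the same ambient sum $L^2 + \mc{H}$, the spaces coincide as sets with equivalent norms, which is exactly the definition of $\cong$ used in the paper. There is no real obstacle here; the argument is a bookkeeping exercise paralleling the previous lemma, and the only subtle point to check is that changing the measure on the $L^2$ factor does not change the family of admissible pairs $(x_0,x_1)$ entering the infimum, so that the K-functionals can be compared decomposition-by-decomposition.
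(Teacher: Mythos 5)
Your proposal is correct and matches the paper's proof, which simply states that the argument is carried out in the same way as Lemma~\ref{lem: equvi_inter_space}: compare the two K-functionals decomposition-by-decomposition using the norm equivalence on the $L^2$ slot, then integrate. No further comment is needed.
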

\begin{proof}
    The proof in accomplished in the same way as Lemma \ref{lem: equvi_inter_space}.
\end{proof}

\newpage
\end{document}